\newtheorem{theorem}{Theorem}
\newtheorem{proposition}{Proposition}
\newtheorem{lemma}{Lemma}
\newtheorem{problem}{Problem}
\newtheorem{remark}{Remark}
\theoremstyle{definition}
\newcommand{\algname}{HardFlow}
\newif\ifmarked
\newcommand{\revised}[1]{\ifmarked{\color{blue}#1}\else#1\fi}
\newenvironment{revisedblock}{\ifmarked\color{blue}\fi}{}
\def\BibTeX{{\rm B\kern-.05em{\sc i\kern-.025em b}\kern-.08em
    T\kern-.1667em\lower.7ex\hbox{E}\kern-.125emX}}
\begin{document}

\title{\algname: Hard-Constrained Sampling for Flow-Matching Models via Trajectory Optimization}

\author{Zeyang Li, Kaveh Alim, Navid Azizan\\[2.0ex]%
Massachusetts Institute of Technology%
    \thanks{Corresponding author: Zeyang Li (zeyang@mit.edu).}
}

\maketitle

\begin{abstract}
    Diffusion and flow-matching have emerged as powerful methodologies for generative modeling, with remarkable success in capturing complex data distributions and enabling flexible guidance at inference time. Many downstream applications, however, demand enforcing hard constraints on generated samples—for example, robot trajectories must avoid obstacles—a requirement that goes beyond simple guidance. Prevailing projection-based approaches constrain the entire sampling path to the constraint manifold, which is overly restrictive and degrades sample quality. In this paper, we introduce a novel framework that reformulates hard-constrained sampling as a trajectory optimization problem. Our key insight is to leverage numerical optimal control to steer the sampling trajectory so that constraints are satisfied precisely at the terminal time. By exploiting the underlying structure of flow-matching models and adopting techniques from model predictive control, we transform this otherwise complex constrained optimization problem into a tractable surrogate that can be solved efficiently and effectively. Furthermore, this trajectory optimization perspective offers significant flexibility beyond mere constraint satisfaction, allowing for the inclusion of integral costs to minimize distribution shift and terminal objectives to further enhance sample quality, all within a unified framework. We provide a control-theoretic analysis of our method, establishing bounds on the approximation error between our tractable surrogate and the ideal formulation. Extensive experiments across diverse domains, including robotics (planning), partial differential equations (boundary control), and vision (text-guided image editing), demonstrate that our algorithm, which we name \textit{\algname}, substantially outperforms existing methods in both constraint satisfaction and sample quality.
\end{abstract}

\begin{IEEEkeywords}
    Flow matching, diffusion, training-free guidance, constrained sampling, constrained optimization, optimal control, trajectory optimization.
\end{IEEEkeywords}

\section{Introduction}
Diffusion \cite{sohl2015deep, ho2020denoising} and Flow-matching \cite{lipman2023flow} have revolutionized generative modeling, achieving tremendous performance across diverse domains, including image synthesis \cite{rombach2022high, esser2024scaling}, video generation \cite{ho2022video, jin2025pyramidal}, molecule design \cite{watson2023novo}, and robotics \cite{chi2023diffusion, ding2025fast}. At their core, these methods learn a time-varying ``dynamics'' (e.g., score function, velocity field) that transports samples from a simple prior distribution (e.g., Gaussian noise) to a complex target data distribution. The training process is notably efficient, leveraging conditional probability paths to formulate a conditional loss that enables simulation-free learning. During inference, samples are drawn from the simple prior and then evolved along trajectories defined by the learned dynamics, a process governed by an ordinary differential equation (ODE) or a stochastic differential equation (SDE). Terminal states of the trajectories constitute samples from the target distribution. A key feature of these models is their flexibility, which allows for guidance during inference to steer samples toward regions of interest, typically by injecting the gradient of a differentiable objective into the learned dynamics \cite{dhariwal2021diffusion, ho2022classifier}.

While gradient-based guidance is effective for encouraging desirable \textit{soft} attributes, many practical applications present a more stringent challenge: enforcing \textit{hard} constraints, i.e., inviolable conditions that a sample must satisfy to be considered valid. For example, in robotic planning, a proposed trajectory must be collision-free or it may lead to catastrophic failures; in facial-expression image editing, the subject's identity should remain invariant as the expression changes. Moreover, recent work has also highlighted the utility of explicit hard constraints as an effective mechanism to prevent reward overoptimization in text-to-image diffusion models \cite{zhang2025aligning}.

To date, the predominant approach to hard-constrained sampling for diffusion and flow-matching models has been projection-based methods and their variants \cite{power2023sampling, christopher2024constrained, romer2025diffusion, yuan2023physdiff, xiao2025safediffuser, zhang2025constrained, luan2025projected, zampini2025training, bouvier2025ddat}. The most straightforward application of projection is a single, post-hoc correction where the final generated sample is projected onto the feasible set \cite{power2023sampling}. This simple approach, however, can often induce a significant shift away from the target data distribution \cite{christopher2024constrained}. To mitigate this, common refinements involve applying a projection within the sampling process. One prominent strategy is to project iteratively at each step, enforcing feasibility throughout the entire generation path \cite{christopher2024constrained, luan2025projected, zampini2025training, romer2025diffusion}. An alternative method is to ignore or relax the projections in early sampling steps, motivated by the insight that early-stage samples resemble noise and that tightly constraining these noisy precursors can be detrimental to final sample quality \cite{bouvier2025ddat, yuan2023physdiff, xiao2025safediffuser, zhang2025constrained, yang2025safeflowmatcher}.

Despite some empirical success, projections during sampling suffer from several fundamental limitations. First, these methods are intrinsically conservative as they enforce pathwise feasibility, a condition that is unnecessarily restrictive since only the terminal state (i.e., the sample of interest) must satisfy the constraint. The intermediate iterates are more of algorithmic artifacts. By imposing constraints on these transient states, projection during sampling prematurely prunes the search space, which can prevent the generative process from discovering higher-quality solutions that would otherwise be accessible. Although per-step projection can be interpreted as projected gradient ascent on the data log-likelihood \cite{christopher2024constrained}, this view relies on Langevin-style sampling with specific noise and step-size schedules and does not generally extend to arbitrary SDE discretizations or the ODE-based samplers used in flow matching. Second, these projection-based methods solely address constraint satisfaction and are not designed to optimize a reward function. In practical applications, however, it is often desirable not only to enforce hard constraints but also to optimize rewards to improve sample quality. An ideal method should jointly address both aspects and produce feasible samples of high quality.

\begin{figure*}[htbp]
    \centering
    \includegraphics[width=\textwidth]{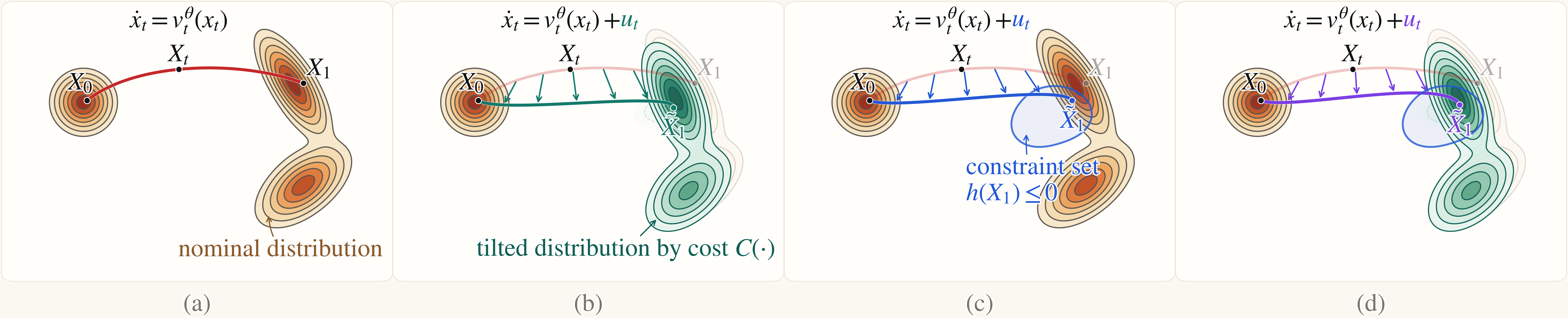}
    \caption{\revised{Conceptual illustration of our method. We adopt a trajectory optimization perspective to iteratively steer the sampling process toward the desired outcome. Panel (a) shows the nominal case: a pretrained flow-matching model transports an initial sample \(X_0\) from a simple base distribution through the ODE to the nominal distribution. Panel (b) shows the cost-only case, where the goal is to sample from a cost-tilted distribution defined by a cost function \(C(\cdot)\). Panel (c) shows the constraint-only case, where the goal is to generate samples in the feasible set defined by \(h(\cdot)\le 0\). Panel (d) shows the joint case, where the goal is to satisfy the constraint while also achieving low cost.}}
    \label{teaser}
\end{figure*}

To address the aforementioned challenges, we propose a novel framework that reformulates hard-constrained sampling as a trajectory optimization problem. While in this work we focus our formulation and analysis on ODE sampling for flow-matching models, the methodology is applicable to diffusion models under ODE sampling as well, such as denoising diffusion implicit models (DDIM) \cite{song2021denoising}. We consider perturbations to the learned velocity field as control inputs to steer the sampling trajectory. Constraints are then imposed solely on the terminal state, providing a minimally invasive correction that ensures the final sample lies within the feasible set without unnecessarily constraining the path. Furthermore, this framework naturally accommodates the inclusion of auxiliary objectives. We consider two primary types: (i) integral costs that penalize control effort to minimize the distributional shift from the uncontrolled sampler, and (ii) terminal costs designed to promote desirable attributes in the final sample. The trajectory optimization perspective thus offers a principled and unified framework that jointly addresses constraint satisfaction, distribution consistency, and sample quality. \revised{A conceptual illustration of our method is provided in Fig. \ref{teaser}.}

We note that prior work has connected diffusion and flow-matching models with optimal control for training-free guidance \cite{rout2025rb, wang2025training}. However, these approaches focus exclusively on guiding the samples toward high-reward regions and do not address hard constraints. Methodologically, they are grounded in Pontryagin's Maximum Principle (PMP) \cite{kirk2004optimal}, the cornerstone of \textit{indirect} methods in optimal control \cite{von1992direct}. This class of techniques is notoriously difficult to apply to problems with state constraints, thereby hindering the extension of \cite{rout2025rb, wang2025training} to hard-constrained sampling. In contrast, our work adopts \textit{direct} methods \cite{von1992direct} such as collocation \cite{conway2012survey} to discretize the dynamics and formulate a constrained optimization problem, enabling the explicit enforcement of hard constraints.

Nonetheless, the optimization problem is particularly challenging to solve. First, the number of decision variables is the product of the data dimension and integration steps, which is computationally demanding. Second, terminal constraints have to be propagated backward through equations involving the neural network dynamics (i.e., the learned velocity field). This creates an exceptionally complex feasible set, causing off-the-shelf solvers to struggle to find a feasible solution, let alone an optimal one. To address these issues, we leverage the specific structure of flow-matching models and draw inspiration from model predictive control (MPC) \cite{morari1999model} to derive a tractable surrogate, which leads to an efficient and scalable algorithm. The contributions of this paper are summarized as follows.
\begin{itemize}
    \item We introduce a novel framework that reformulates hard-constrained sampling for flow-matching models as a trajectory optimization problem. This perspective departs from predominant projection-based methods by enforcing constraints solely at the terminal time, thereby avoiding unnecessary restrictions on the sampling path. The framework also accommodates auxiliary objectives to minimize distribution shift and enhance sample quality.
    \item We develop a scalable algorithm, named \algname, to solve the formulated problem through principled transformations. First, with MPC principles, we decompose the long-horizon problem into a sequence of single-step subproblems. This is enabled by the flow-matching structure, which allows us to formulate effective proxies for terminal costs and constraints with the posterior mean. Second, we introduce a reverse reparameterization that changes the decision variable from the current state to the predicted terminal state, thereby avoiding the implicit, highly complex feasible set induced by propagating terminal constraints backward through the neural dynamics. This reparameterization is mathematically equivalent to a single-step fixed-point iteration that approximates an inverse mapping.
    \item We provide a control-theoretic analysis of our method, quantifying the approximation error introduced by our tractable surrogate relative to the original problem. These results clarify the algorithm's objective and how it balances computational efficiency with formulation optimality.
    \item We conduct extensive experiments across diverse domains, including robotics (planning), partial differential equations (boundary control), and vision (text-guided image editing). Our method consistently outperforms existing approaches in both constraint satisfaction and sample quality.
\end{itemize}

\section{Related Work}

This section situates our work within the relevant literature. We start by covering prior work in detail on constrained sampling for diffusion and flow-matching models—the core challenge addressed in this paper. As the advantages and limitations were comprehensively discussed in the Introduction, we do not repeat them here. Next, we provide a brief overview of optimal control taxonomy, as our approach is grounded in this field. We then discuss existing approaches that exploit optimal control for generative modeling. Finally, we broaden the scope to training-free guidance, since constrained sampling can be viewed as a specific instance with stricter requirements.

\textbf{Constrained sampling} methods can be broadly categorized by how strictly they enforce constraints in the formulation. A common approach for \textit{soft-constrained sampling} is to formulate the constraint as a penalty term within a reward function, using its gradient to guide the generation process. Mizuta et al. \cite{mizuta2024cobl} design rewards based on control barrier functions (CBFs) to promote collision avoidance in diffusion-based robot control. Carvalho et al. \cite{carvalho2025motion} construct cost functions that encode safe behaviors for motion planning and bias the sampler accordingly. For applications requiring strict feasibility, \textit{hard-constrained sampling} is necessary. The predominant solution is projection-based methods, where samples are projected onto the feasible set, differing primarily in when the projection is applied. A straightforward approach is \textit{post-hoc projection}. Power et al. \cite{power2023sampling} propose a post-processing step after diffusion sampling to ensure feasibility. Christopher et al. \cite{christopher2024constrained} demonstrate that such post-processing can cause the generated samples to diverge significantly from the original data distribution. To mitigate this, they propose \textit{per-step projection} during Langevin-style diffusion sampling, which is theoretically interpreted as projected gradient ascent on the data log-likelihood. Römer et al. \cite{romer2025diffusion} use per-step projection for diffusion-based planning, along with model predictive control (MPC) to form closed-loop control. Zampini et al. \cite{zampini2025training} integrate per-step projection sampling with Stable Diffusion for text-to-image generation under stringent constraints. Luan et al. \cite{luan2025projected} use per-step projection for coupled generation tasks like creating image pairs. A less restrictive approach is \textit{late-stage projection}, where corrections are applied only in the later steps of the sampling process. Yuan et al. \cite{yuan2023physdiff} apply physics-based projections in late diffusion steps to improve motion plausibility. Bouvier et al. \cite{bouvier2025ddat} use late-stage projections to ensure synthetic robot trajectories are dynamically feasible. \revised{Yang et al. \cite{yang2025safeflowmatcher} propose a two-phase prediction-correction scheme that first generates a coarse sample from the pretrained model and then applies a CBF-based correction on a vanishing time-scaled flow dynamics to steer it toward the safe set.}
Another less restrictive strategy is \textit{per-step projection with early-stage relaxation}, where constraints are enforced in a progressively tightening manner, being relaxed during the early sampling steps and imposed strictly in the later stages. Zhang et al. \cite{zhang2025constrained} enforce constraints in the sampling process via a Lagrangian formulation. As multipliers start small, the constraints are effectively relaxed early on and become progressively tighter as the multipliers increase. Xiao et al. \cite{xiao2025safediffuser} incorporate a CBF into diffusion sampling for safe planning. The CBF condition permits violations before the sample enters the feasible set, thereby relaxing constraints in the early stages.
\revised{To provide a clear taxonomy, we summarize the aforementioned methods in Table~\ref{tab:constrained_sampling_taxonomy}, categorizing them by constraint type, cost handling, core technique, and application domain.}

\textbf{Optimal control taxonomy.} Optimal control provides a framework for computing control policies that optimize a performance criterion while respecting system dynamics and constraints \cite{kirk2004optimal}. We distinguish \textit{global} and \textit{local} formulations. A global formulation seeks an optimal closed-loop policy for every state in the admissible state space and is classically obtained by solving the Hamilton-Jacobi-Bellman (HJB) equation. A local formulation fixes an initial state and computes an optimal state-control trajectory, yielding an open-loop solution. For local problems, there are two primary families of methods: \textit{indirect} and \textit{direct} \cite{von1992direct}. Indirect methods follow an ``optimize-then-discretize'' approach. They first derive the first-order necessary conditions for the continuous-time system, typically using Pontryagin's Maximum Principle (PMP). These conditions are then discretized and solved, often as a two-point boundary-value problem or with a gradient-based algorithm. However, state constraints make PMP notoriously hard to apply, as they create jumps in the costate at entry and exit times, introduce constrained arcs, and can cause singularity issues \cite{bell1975singular}. Direct methods, in contrast, use a ``discretize-then-optimize'' strategy. They transcribe the continuous-time problem into a finite-dimensional nonlinear optimization problem using techniques like shooting or collocation. The resulting optimization problem is then solved numerically. This formulation naturally accommodates constraints, making direct methods well-suited for such problems. An orthogonal classification is \textit{deterministic} versus \textit{stochastic} optimal control, which depends on whether the system dynamics contain stochastic elements \cite{fleming2012deterministic}. In robotics, \textit{trajectory optimization} is the preferred term for local optimal control and, because constraints are pervasive, it often implicitly signals a direct-method formulation, hence our title.

\textbf{Optimal control in generative modeling.} A growing line of work develops an optimal control viewpoint on sampling from unnormalized distributions \cite{tzen2019theoretical, zhang2022path, vargas2023denoising, berner2024an, havens2025adjoint}. For fine-tuning diffusion models, Domingo-Enrich et al. \cite{domingo2025adjoint} propose an adjoint matching framework from indirect stochastic optimal control. They start from PMP and derive a refined loss that exhibits better convergence and stability; Zhao et al. \cite{zhao2025score} take a global optimal control perspective by approximately solving the HJB equation. Regarding training-free guidance, Rout et al. \cite{rout2025rb} propose a stylization method for text-to-image diffusion models grounded in PMP, and Wang et al. \cite{wang2025training} derive PMP-based gradient for guiding flow-matching models. In this work, we address the problem of hard-constrained sampling through the lens of direct optimal control, a perspective that has been largely unexplored in generative modeling.

\textbf{Training-free guidance.} There has been extensive research on this topic since seminal works such as \cite{dhariwal2021diffusion, ho2022classifier}. A recurring theme is to exploit posterior information during sampling \cite{chung2023diffusion, bansal2023universal}. We highlight some more recent gradient-based approaches from distinct viewpoints: optimization \cite{guo2024gradient}, variational inference \cite{pandey2025variational}, conditional and marginal paths \cite{feng2025on}, and optimal control \cite{rout2025rb, wang2025training}. Additionally, there is a growing interest in handling non-differentiable objectives \cite{huang2024symbolic, jain2025diffusion, guo2025training}. Our work contributes to this literature by enabling not only reward optimization but also enforcement of hard constraints at inference time.

\begin{table*}[htbp]
\ifmarked\color{blue}\fi
\centering
\caption{\revised{Comparison of constrained sampling methods. ``Constraint'' indicates whether a method handles constraints as soft penalties or as explicit hard requirements. ``Cost'' indicates whether the method explicitly and jointly handles an auxiliary cost together with constraint enforcement. ``Technique'' indicates the primary method used for constraint enforcement, according to the categorization outlined in the Related Work section. ``Major Application'' refers to the primary evaluation domain, and methods evaluated across multiple domains are labeled as ``General''.}}
\begin{tabular}{l c c c c}
\toprule
\textbf{Method} & \textbf{Constraint} & \textbf{Cost} & \textbf{Technique} & \textbf{Major Application} \\
\midrule

Mizuta et al.~\cite{mizuta2024cobl}
& Soft
& Yes
& Soft guidance via CBF-based rewards
& Robotics \\

Carvalho et al.~\cite{carvalho2025motion}
& Soft
& Yes
& Soft guidance via violation penalties
& Robotics \\

Power et al.~\cite{power2023sampling}
& Hard
& No
& Post-hoc projection
& Robotics \\

Christopher et al.~\cite{christopher2024constrained}
& Hard
& No
& Per-step projection
& General \\

R{\"o}mer et al.~\cite{romer2025diffusion}
& Hard
& No
& Per-step projection
& Robotics \\

Zampini et al.~\cite{zampini2025training}
& Hard
& No
& Per-step projection
& Vision \\

Luan et al.~\cite{luan2025projected}
& Hard
& No
& Per-step projection
& General \\

Yuan et al.~\cite{yuan2023physdiff}
& Hard
& No
& Late-stage projection
& Graphics \\

Bouvier et al.~\cite{bouvier2025ddat}
& Hard
& No
& Late-stage projection
& Robotics \\

Yang et al.~\cite{yang2025safeflowmatcher}
& Hard
& No
& Late-stage projection
& Robotics \\

Zhang et al.~\cite{zhang2025constrained}
& Hard
& No
& Per-step projection with early-stage relaxation
& Robotics \\

Xiao et al.~\cite{xiao2025safediffuser}
& Hard
& No
& Per-step projection with early-stage relaxation
& Robotics \\

\algname\ (Ours)
& Hard
& Yes
& Trajectory optimization
& General \\

\bottomrule
\end{tabular}
\label{tab:constrained_sampling_taxonomy}
\end{table*}

\section{Background}

Consider a time-indexed family of random variables $X_t \in \mathbb{R}^d$ with density $p_t$ for $t \in [0,1]$. Let $v:[0,1]\times\mathbb{R}^d \to \mathbb{R}^d$ denote the marginal velocity field. The associated flow map $\Phi:[0,1]\times[0,1]\times\mathbb{R}^d \to \mathbb{R}^d$ is defined as the solution of an ordinary differential equation (ODE),
\begin{equation}
\nonumber
\left\{\begin{array}{l}
\frac{d}{d \tau} \Phi_{s \rightarrow \tau}(x)=v_\tau\left(\Phi_{s \rightarrow \tau}(x)\right) \\
\Phi_{s \rightarrow s}(x)=x,
\end{array}\right.
\end{equation}
where $\tau \in [s,t]$.
The pushforward of initial density $p_0$ by the flow yields the marginal probability path $p_t=(\Phi_{0\rightarrow t})_{\sharp}p_0$, which satisfies the continuity equation
\begin{equation}
\nonumber
\partial_t p_t+\nabla \cdot \left(p_t v_t\right)=0, \quad p_0.
\end{equation}
We can sample from the terminal density $p_1$ by drawing $x_0\sim p_0$ and integrating the ODE on $[0,1]$:
\begin{equation}
\nonumber
x_1=\Phi_{0\rightarrow 1}(x_0)\sim p_1.
\end{equation}

Flow matching \cite{lipman2023flow} is an efficient, simulation-free method for training a neural velocity field $v_t^\theta(x)$ to approximate $v_t(x)$, given samples from the target distribution $p_1$. The key idea is to design a conditional probability path $\{p_{t\mid Z}\}_{t\in[0,1]}$ that bridges $p_0$ and $p_1$, with $Z$ as the conditioning variable, typically $Z=(X_0,X_1)$. The law of $Z$, denoted $\pi_{0,1}$, is called the coupling. For each realization of $Z$, denote the conditional velocity field as $v_{t\mid Z}(\cdot\mid Z)$. The marginal and conditional velocity fields are related by:
\begin{equation}
\nonumber
v_t(x)
= \mathbb{E}_{Z\sim \pi_{0,1}, X_t\mid Z\sim p_{t\mid Z}}
\left[v_{t\mid Z}(X_t\mid Z)\middle| X_t=x\right].
\end{equation}
The conceptual flow-matching loss compares $v_t^\theta(x)$ to marginal velocity field $v_t(x)$:
\begin{equation}
\nonumber
\mathcal{L}_{\mathrm{FM}}(\theta)
= \mathbb{E}_{t\sim \mathcal{U}[0,1], X_t\sim p_t}
\left[\left\|v_t^\theta(X_t)-v_t(X_t)\right\|_2^2\right],
\end{equation}
which is intractable since $v_t(x)$ is unknown. The conditional flow-matching loss instead compares $v_t^\theta(x)$ to the conditional velocity field $v_{t\mid Z}(x\mid Z)$:
\begin{equation}
\nonumber
\begin{aligned}
\mathcal{L}_{\mathrm{CFM}}(\theta)
&= \mathbb{E}_{\substack{t\sim \mathcal{U}[0,1],\; Z\sim \pi_{0,1},\\ X_t\mid Z\sim p_{t\mid Z}}}
\left[\left\|v_t^\theta(X_t)-v_{t\mid Z}(X_t\mid Z)\right\|_2^2\right].
\end{aligned}
\end{equation}
It is shown that $\nabla_\theta \mathcal{L}_{\mathrm{CFM}}(\theta)=\nabla_\theta \mathcal{L}_{\mathrm{FM}}(\theta)$, thus we can train $v_t^{\theta}(x)$ with $\mathcal{L}_{\mathrm{CFM}}(\theta)$.

To construct the conditional velocity field, a widely adopted choice is to use an affine conditional path
\begin{equation}
\nonumber
X_t\mid Z=\alpha_t X_1+\beta_t X_0,
\end{equation}
with scheduler $(\alpha_t,\beta_t)$ satisfying $\alpha_0=0$, $\alpha_1=1$, $\beta_0=1$, $\beta_1=0$. In this case, $p_{t\mid Z}=\delta_{\alpha_t X_1+\beta_t X_0}$, and the corresponding conditional velocity is
\begin{equation}
\nonumber
v_{t\mid Z}(X_t\mid Z)=\dot{\alpha}_t X_1+\dot{\beta}_t X_0,
\end{equation}
for $X_t=\alpha_t X_1+\beta_t X_0$.

\section{Problem Formulation}

In this paper, we introduce a training-free method for hard-constrained sampling from a pretrained flow-matching model, $v_t^{\theta}(x)$. Our approach is designed to operate with fixed model parameters $\theta$, which enforce constraints by steering the sampling process. We do not consider fine-tuning or inference-time parameter update, which are a separate line of work and typically require substantially more computation. We explain the problem formulation in this section.

Suppose the initial distribution is given by $p_0$. The terminal distribution under the flow map is $\bar{\mu}=(\Phi_{0\to 1}^{\theta})_{\sharp}p_0$, obtained by sampling $x_0\sim p_0$ and integrating $\dot{x}_t=v_t^{\theta}(x_t)$ to $t=1$. We call $\bar{\mu}$ the \textit{nominal distribution}, produced by the uncontrolled sampler (no interference during sampling).

In practice, a nominal distribution may be insufficient, as its samples often fail to meet specific task requirements. Many settings impose \textit{hard constraints}, denoted as \(h(x)\le 0\), which are non-negotiable conditions that all samples must satisfy. For example, in real-world robotics, sampled trajectories must be collision-free, and sampled actions must respect the robot's physical limits. In addition to these strict requirements, we often aim to minimize \textit{costs}, \(C(x)\), such as the energy consumption of a robot trajectory. These costs are secondary. Although lower values are desirable, higher costs are acceptable when necessary to satisfy the primary hard constraints. Moreover, we prefer that the adjusted distribution stay close to the nominal one \(\bar{\mu}\).

\begin{remark}
    \label{constraints_form}
    We use the shorthand \(h(x)\le 0\) to represent multiple constraints. In particular, if \(h(x)=(h_1(x),\ldots,h_m(x))\), the inequality is understood componentwise: \(h_i(x)\le 0\) for all \(i=1,\ldots,m\). Equality constraints \(g(x)=0\) are handled by two inequalities \(g(x)\le 0\) and \(-g(x)\le 0\).
\end{remark}

Concretely, given \(\bar{x}_0\sim p_0\), we seek a per-sample refinement procedure that steers each trajectory during ODE integration to a terminal state \(\tilde{x}_1\) such that (i) the hard constraints \(h(\tilde{x}_1)\le 0\) are satisfied, (ii) the cost \(C(\tilde{x}_1)\) is typically low, and (iii) collectively, over draws of \(\bar{x}_0\), the resulting terminal distribution \(\tilde{x}_1\sim\tilde{\mu}\) remains close to the nominal distribution \(\bar{\mu}\).

This ODE-steering perspective naturally fits within local optimal control. We introduce a control input \(u_t\) to perturb the learned velocity field \(v_t^{\theta}(x)\) during sampling. The continuous-time optimal control problem is presented as follows.

\begin{problem}[Continuous-time formulation]
\label{continuous_oc_problem}\mbox{}\\
Given initial state \(\bar{x}_0\sim p_0\), solve the following problem to obtain $x_1$ as the sample.
\begin{equation}
\label{continuous_oc_formulation}
\begin{array}{@{}l @{\quad} l@{}}
\underset{\{x_t,u_t\}_{t\in[0,1]}}{\min} &
\begin{array}[t]{@{}l@{}}
C(x_1) + \lambda_{\textup{oc}} \int_{0}^{1} \tfrac{1}{2}\lVert u_t\rVert_2^2  dt \\
\begin{array}[t]{@{}ll@{}}
\textup{s.t.} & x_0 = \bar{x}_0, \\
            & \dot{x}_t = v_t^{\theta}(x_t) + u_t, \\
            & h(x_{1}) \le 0.
\end{array}
\end{array}
\end{array}
\end{equation}
\end{problem}

The constraints in \eqref{continuous_oc_formulation} have three components. The initial condition \(x_0=\bar{x}_0\) ensures that the controlled trajectory starts from the same point as the uncontrolled one, which is drawn from the initial distribution \(p_0\). The dynamics \(\dot{x}_t=v_t^{\theta}(x_t)+u_t\) describe how the state evolves under the influence of both the neural velocity field and the control input. The terminal constraint \(h(x_1)\le 0\) enforces that the final state, which is the true sample of interest, satisfies the hard constraint. The objective function \(J\) consists of two terms: the terminal cost \(C(x_1)\), which encourages desirable attributes in the sample, and an integral cost \(\frac{\lambda_{\mathrm{oc}}}{2}\int_0^1 \lVert u_t\rVert_2^2 dt\), which penalizes large control efforts to keep the controlled trajectory close to the nominal one.

The following sections will delve into both algorithmic and theoretical aspects. Algorithmically, solving \eqref{continuous_oc_formulation} online for each sample is extremely challenging: the neural dynamics \(v_t^{\theta}\) is highly nonlinear, and the terminal constraint \(h(x_1)\le 0\), coupled with the dynamics, makes the feasible sets for $u_t$ and $x_t$ (for $t<1$) even more complex. We therefore seek principled transformations of Problem~\ref{continuous_oc_problem} into a tractable surrogate that can be solved efficiently at sampling time while retaining the essential structure of the original formulation. Theoretically, we quantify the approximation errors incurred by our surrogate, so that the gap between the surrogate and the original formulation \eqref{continuous_oc_formulation} is explicit and interpretable. These analyses clarify what the proposed algorithm optimizes and how it balances computational efficiency and formulation optimality.

\section{Algorithm Design}

In this section, we develop a scalable algorithm to solve the optimal control problem \eqref{continuous_oc_formulation} efficiently and effectively. Fig.~\ref{fig:roadmap} illustrates the roadmap of problem transformations, which we explain in detail below.

\begin{figure*}[htbp]
    \centering
    \includegraphics[width=\textwidth]{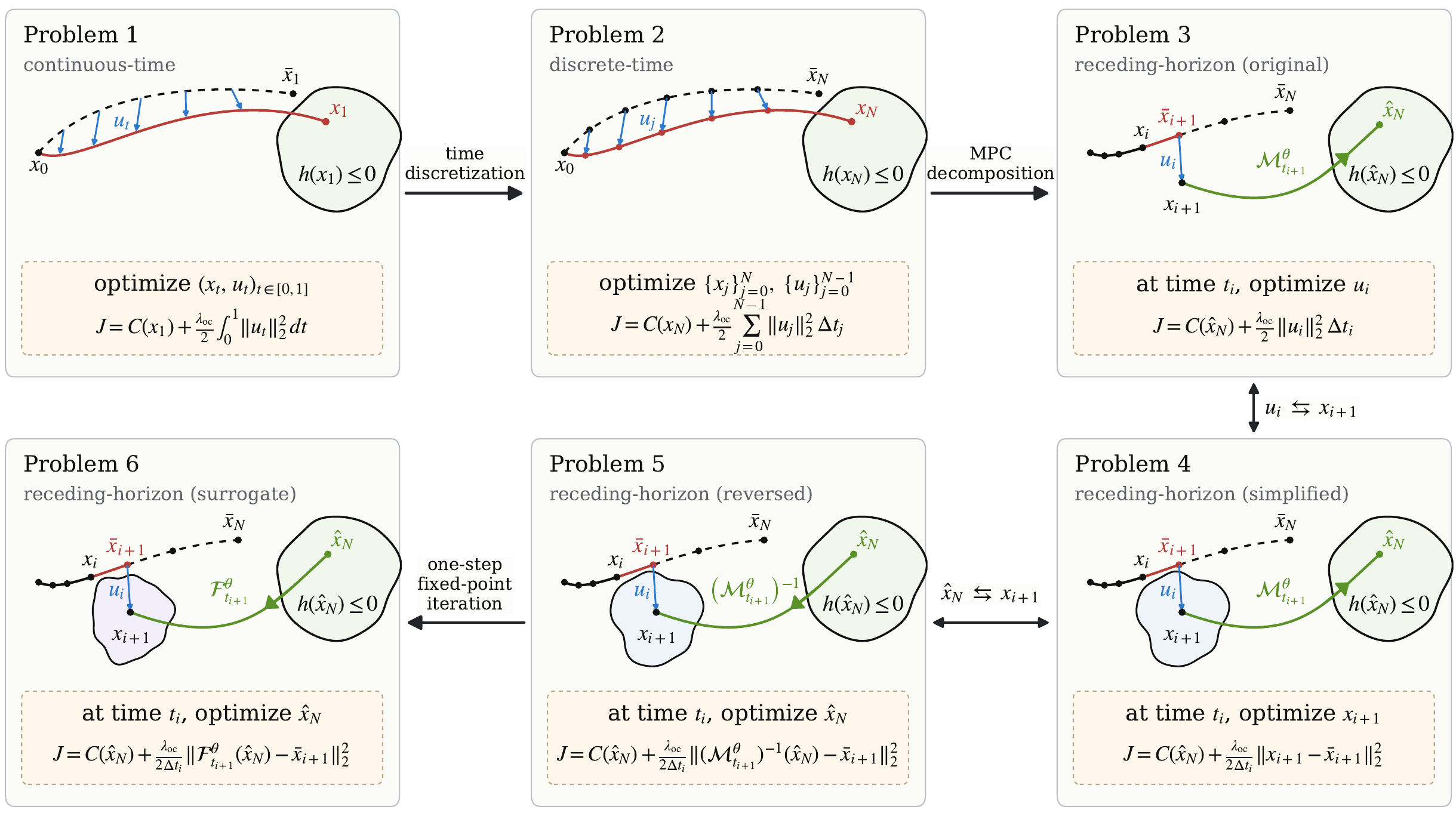}
    \caption{\revised{Visual roadmap for the derivation of HardFlow. Between successive problems, single arrows indicate approximations, while double arrows indicate equivalent reparameterizations.}}
    \label{fig:roadmap}
\end{figure*}

We begin by clarifying the scope. Problem~\ref{continuous_oc_problem} is a local formulation: for a fixed initial condition \(x_0=\bar{x}_0\), we seek an open-loop control \(u_t\) on \(t\in[0,1]\). This stands in contrast to a global formulation that would compute an optimal feedback policy \(u_t(x)\) for all states. The global problem is governed by the Hamilton-Jacobi-Bellman (HJB) equation and suffers from the curse of dimensionality. Data-driven approximations for HJB, such as reinforcement learning, require additional training and fall outside the scope of inference-time guidance.

Recent training-free guidance methods \cite{rout2025rb, wang2025training} for diffusion and flow-matching models have followed the indirect (``optimize-then-discretize'') route for local optimal control, making use of Pontryagin's Maximum Principle (PMP). However, as discussed in Related Work, indirect methods are notoriously difficult to apply to problems with state constraints, which is the core challenge we aim to address. This limitation motivates our adoption of a direct approach. We discretize the continuous-time dynamics and transcribe \eqref{continuous_oc_formulation} into a finite-dimensional constrained optimization problem, about which we can reason comprehensively.

\revised{We discretize the continuous-time dynamics using the forward Euler method. This choice is made for notational simplicity without loss of generality, as the framework presented here extends to higher-order integrators \cite{karras2022elucidating}. It is also consistent with standard sampling practices in flow-matching models \cite{esser2024scaling, polyak2024movie, domingo2025adjoint}.} Define $0=t_0<t_1<\cdots<t_N=1$, and $\Delta t_j=t_{j+1}-t_j$ for $j=0,1,\cdots,N-1$. We arrive at the following problem.

\begin{problem}[Discrete-time formulation]
\label{discrete_oc_problem}\mbox{}\\
Given initial state \(\bar{x}_0\sim p_0\), solve the following problem to obtain $x_N$ as the sample.
\begin{equation}
\label{discretized_oc_formulation}
\begin{array}{@{}l @{\quad} l@{}}
\underset{\left\{x_j\right\}_{j=0}^N, \left\{u_j\right\}_{j=0}^{N-1}}{\min} &
\begin{array}[t]{@{}l@{}}
C(x_N) + \lambda_{\textup{oc}} \sum_{j=0}^{N-1} \frac{1}{2}\left\|u_j\right\|_2^2 \Delta t_j \\
\begin{array}[t]{@{}ll@{}}
\textup{s.t.} & x_0 = \bar{x}_0, \\
            & x_{j+1}=x_j+v_{t_j}^\theta\left(x_j\right) \Delta t_j+u_j \Delta t_j,\\&j=0,1, \cdots, N-1, \\
            & h(x_{N}) \le 0.
\end{array}
\end{array}
\end{array}
\end{equation}
\end{problem}

The decision variables in Problem~\ref{discrete_oc_problem} are the discrete states and controls, for a total of \(2Nd+d\) scalars. Solving this problem directly is computationally demanding, especially for fine-grained discretizations (large \(N\)) or high-dimensional data such as images (large \(d\)). These challenges are compounded by the complex, multi-step coupling between states and controls via highly nonconvex neural network dynamics. To make the problem tractable, we employ a model predictive control (MPC) framework. This strategy decomposes the long-horizon optimization into a sequence of more manageable, short-horizon subproblems. The long-horizon coupling in Problem~\ref{discrete_oc_problem} arises from the terminal cost $C(x_N)$ and terminal constraint $h(x_N)\le 0$. Therefore, the key to applying MPC is to construct effective proxies for $C(x_N)$ and $h(x_N)\le 0$ at an intermediate state $x_i$. Fortunately, diffusion and flow-matching models possess a unique structure that facilitates this approximation. Specifically, they provide a posterior estimate of the terminal state $x_N$ given an intermediate state $x_i$. For affine conditional paths, the following results hold.

\begin{lemma}[Posterior mean \cite{feng2025on}]
\label{posterior_mean_lemma}
Let $Z=(X_0,X_1)\sim\pi_{0,1}$. Consider the affine conditional path
\(
X_t\mid Z=\alpha_t X_1+\beta_t X_0
\) for $t\in[0,1]$ with differentiable scheduler $(\alpha_t,\beta_t)$.
The conditional velocity field is therefore
\(v_{t\mid Z}(X_t\mid Z)=\dot{\alpha}_t X_1+\dot{\beta}_t X_0\).
Suppose $v_t(x)$ is the marginal velocity field, i.e., $v_t(x)=\mathbb{E}\left[v_{t\mid Z}(X_t\mid Z)\middle| X_t=x\right]$.
Define
\(
\Lambda_t \coloneqq \alpha_t \dot{\beta}_t-\dot{\alpha}_t \beta_t.
\)
Assume \(\Lambda_t \neq 0\) for all \(t \in [0,1]\).
Then, for $x\in\mathbb{R}^d$, we have
\begin{equation}
\nonumber
\mathcal{M}_t(x)\coloneqq \mathbb{E}\left[X_1 \mid X_t=x\right]=
\frac{\dot{\beta}_t x-\beta_t v_t(x)}{\Lambda_t},
\end{equation}
\begin{equation}
\nonumber
\mathcal{W}_t(x)\coloneqq \mathbb{E}\left[X_0 \mid X_t=x \right]=
\frac{-\dot{\alpha}_t x+\alpha_t v_t(x)}{\Lambda_t}.
\end{equation}
Additionally, the following identity holds:
\begin{equation}
\label{posterior_identity}
x=\alpha_t \mathcal{M}_t(x)+\beta_t \mathcal{W}_t(x).
\end{equation}
\end{lemma}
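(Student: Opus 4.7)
The plan is to reduce the claim to a $2\times 2$ linear system whose unknowns are the two posterior means $\mathcal{M}_t(x)$ and $\mathcal{N}_t(x)$. The starting observation is that the affine conditional path gives two almost-sure identities on $\{X_t=x\}$: the state identity $X_t=\alpha_t X_1+\beta_t X_0$ and the conditional-velocity identity $v_{t\mid Z}(X_t\mid Z)=\dot{\alpha}_t X_1+\dot{\beta}_t X_0$. Both are linear in $(X_0,X_1)$ with deterministic coefficients, which is what makes conditioning on $X_t$ interact cleanly with the expectations we want to compute.

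First I would condition the state identity on $X_t=x$ and use linearity of conditional expectation to obtain
\begin{equation}
\nonumber
x=\alpha_t\,\mathcal{M}_t(x)+\beta_t\,\mathcal{N}_t(x).
\end{equation}
This already gives the identity \eqref{posterior_identity} essentially for free. Then I would take conditional expectation of the conditional-velocity identity given $X_t=x$ and apply the tower property together with the definition $v_t(x)=\mathbb{E}[v_{t\mid Z}(X_t\mid Z)\mid X_t=x]$ to obtain
\begin{equation}
\nonumber
v_t(x)=\dot{\alpha}_t\,\mathcal{M}_t(x)+\dot{\beta}_t\,\mathcal{N}_t(x).
\end{equation}

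With these two scalar (componentwise) equations in hand, the proof reduces to inverting the coefficient matrix
\begin{equation}
\nonumber
\begin{pmatrix}\alpha_t & \beta_t \\ \dot{\alpha}_t & \dot{\beta}_t\end{pmatrix},
\end{equation}
whose determinant is exactly $\alpha_t\dot{\beta}_t-\dot{\alpha}_t\beta_t=\Lambda_t$. Under the assumption $\Lambda_t\neq 0$, Cramer's rule yields the two closed-form expressions for $\mathcal{M}_t(x)$ and $\mathcal{N}_t(x)$ stated in the lemma; the identity \eqref{posterior_identity} can either be read off the first row of the system or verified by direct substitution of the two formulas.

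I do not anticipate a serious obstacle, since the argument is essentially a two-equation linear inversion made possible by the affine structure of the conditional path. The one point that deserves care is the justification for passing the deterministic coefficients $\dot{\alpha}_t,\dot{\beta}_t$ through the conditional expectation and for using the tower property to identify $\mathbb{E}[\dot{\alpha}_t X_1+\dot{\beta}_t X_0\mid X_t=x]$ with $v_t(x)$; this requires only mild integrability of $X_0,X_1$, which is standard in the flow-matching setting and can be stated as an implicit regularity assumption. The nondegeneracy condition $\Lambda_t\neq 0$ is what ensures invertibility and thus well-definedness of the posterior formulas on $[0,1]$.
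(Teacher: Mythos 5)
Your proof is correct: conditioning the two almost-sure affine identities $X_t=\alpha_t X_1+\beta_t X_0$ and $v_{t\mid Z}(X_t\mid Z)=\dot{\alpha}_t X_1+\dot{\beta}_t X_0$ on $\{X_t=x\}$ and inverting the resulting $2\times 2$ (componentwise) system with determinant $\Lambda_t\neq 0$ yields exactly the stated formulas, and \eqref{posterior_identity} is the first row of that system. The paper itself gives no proof of this lemma—it is imported from \cite{feng2025on}—and your argument is the standard derivation one would expect there, so there is no substantive difference in approach to report.
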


With Lemma~\ref{posterior_mean_lemma}, we can approximate the terminal state \(x_N\) given an intermediate state \(x_i\) by
\begin{equation}
\nonumber
\mathcal{M}_{t_i}^{\theta}(x_i)\coloneqq \frac{\dot{\beta}_{t_i} x_i-\beta_{t_i} v_{t_i}^\theta\left(x_i\right)}{\Lambda_{t_i}}, 
\end{equation}
for $i=0,1,\cdots,N$. In particular, $\mathcal{M}_{t_N}^{\theta}(x_N)=x_N$ under standard boundary conditions $\alpha_{0}=\beta_{1}=0$ and $\alpha_{1}=\beta_{0}=1$.

We now apply MPC to decompose Problem~\ref{discrete_oc_problem} into a sequence of one-step subproblems, yielding the following receding-horizon scheme.

\begin{problem}[Receding-horizon formulation, original]
\label{receding_horizon_problem_original}\mbox{}\\
Given initial state \(\bar{x}_0\sim p_0\), initialize $x_0=\bar{x}_0$ and carry out the following recursion for $i=0,1,\cdots,N-1$ to obtain $x_N$ as the sample.
\begin{equation}
\label{receding_formulation_original}
\left\{\begin{array}{l}
\begin{array}{@{}l @{\quad} l@{}}
u_i^*=\underset{u_{i}}{\operatorname{argmin}} &
\begin{array}[t]{@{}l@{}}
C(\widehat{x}_N) + \frac{\lambda_{\textup{oc}}}{2}\left\|u_i\right\|_2^2 \Delta t_i \\
\begin{array}[t]{@{}ll@{}}
\textup{s.t.}
            & x_{i+1}=x_i+v_{t_i}^\theta\left(x_i\right) \Delta t_i+u_i \Delta t_i,\\
            & h(\widehat{x}_{N}) \le 0, \\
            & \widehat{x}_{N} = \mathcal{M}_{t_{i+1}}^{\theta}(x_{i+1}).
\end{array}
\end{array}
\end{array} \\
x_{i+1} =x_i+v_{t_{i}}^\theta\left(x_{i}\right)\Delta t_{i} +u_{i}^* \Delta t_{i}\\
\end{array}\right.
\end{equation}
\end{problem}

Problem~\ref{receding_horizon_problem_original} computes controls \(u_i\) step by step. Intuitively, \(u_i\) is chosen so that after it is applied to reach \(x_{i+1}\), the resulting terminal state will be both low-cost and feasible, assuming no future controls are used. A direct approach would be to simulate the remaining trajectory from \(x_{i+1}\) to \(x_N\) at every step and then evaluate \(C(x_N)\) and \(h(x_N)\), which is computationally expensive. Instead, we use the posterior estimate \(\widehat{x}_N=\mathcal{M}_{t_{i+1}}^{\theta}(x_{i+1})\) to approximate the terminal state. This captures the effect of \(u_i\) on the terminal outcome at an efficient \(\mathcal{O}(1)\) cost per step. The control regularization \(\tfrac{\lambda_{\mathrm{oc}}}{2}\|u_i\|_2^2\Delta t_i\) discourages large deviations from the nominal dynamics.

From this perspective, Problem~\ref{receding_horizon_problem_original} is a principled approximation of Problem~\ref{discrete_oc_problem}: it preserves the core structure while greatly improving tractability. The approximation errors arise from two sources: (i) the posterior estimate \(\widehat{x}_N\) is not exact, and (ii) future controls $\left\{u_j\right\}_{j=i+1}^{N-1}$ are ignored when optimizing \(u_i\). We will rigorously quantify these errors in the next section.

Observe that, since the control enters the dynamics additively, \(u_i\) can be viewed as a perturbation to the nominal next state \(\bar{x}_{i+1}=x_i+v_{t_i}^\theta(x_i)\Delta t_i\). This motivates a change of variables from \(u_i\) to \(x_{i+1}\), leading to an equivalent formulation as Problem~\ref{receding_horizon_problem_simplified}. This transformation is beneficial as it clarifies the structure and enables further manipulations.

\begin{problem}[Receding-horizon formulation, simplified]
\label{receding_horizon_problem_simplified}\mbox{}\\
Given initial state \(\bar{x}_0\sim p_0\), initialize $x_0=\bar{x}_0$ and carry out the following recursion for $i=0,1,\cdots,N-1$ to obtain $x_N$ as the sample.
\begin{equation}
\left\{\begin{array}{l}
\bar{x}_{i+1} = x_i + v_{t_i}^\theta(x_i) \Delta t_i \\
\begin{array}{@{}l @{\quad} l@{}}
x_{i+1}^*=\underset{x_{i+1}}{\operatorname{argmin}} &
\begin{array}[t]{@{}l@{}}
C(\widehat{x}_N) + \frac{\lambda_{\textup{oc}}}{2\Delta{t_{i}}}\left\|x_{i+1}-\bar{x}_{i+1}\right\|_2^2 \\
\begin{array}[t]{@{}ll@{}}
\textup{s.t.}
            & h(\widehat{x}_{N}) \le 0, \\
            & \widehat{x}_{N} = \mathcal{M}_{t_{i+1}}^{\theta}(x_{i+1}).
\end{array}
\end{array}
\end{array} \\
x_{i+1} =x_{i+1}^*\\
\end{array}\right.
\end{equation}
\end{problem}

Problem~\ref{receding_horizon_problem_simplified} involves two coupled states: the decision variable \(x_{i+1}\) and the predicted terminal state \(\widehat{x}_N\). While the optimization is over \(x_{i+1}\), the terminal cost and constraints are applied to \(\widehat{x}_N\). The two states are linked by the mapping $\widehat{x}_N=\mathcal{M}_{t_{i+1}}^{\theta}(x_{i+1})$, which is highly nonlinear due to the embedded neural network \(v_{t_{i+1}}^{\theta}(\cdot)\). Consequently, the feasible set for \(x_{i+1}\), implicitly defined as $\left\{x_{i+1} \mid h\left(\mathcal{M}_{t_{i+1}}^\theta\left(x_{i+1}\right)\right) \leq 0\right\}$, is significantly complicated. This structure poses a significant challenge for numerical solvers, as it can be difficult to find even a single feasible point, regardless of the simplicity of \(h(\cdot)\) (e.g., linear).

Since feasibility is of paramount importance, we aim to work directly with the original constraint \(h(x_N)\le 0\), avoiding the complex distortion introduced by the neural network \(v_{t_{i+1}}^{\theta}(\cdot)\). To this end, we propose a reversed formulation. We treat \(\widehat{x}_N\) as the new decision variable and then express \(x_{i+1}\) in terms of \(\widehat{x}_N\). Assuming the mapping \(\mathcal{M}_{t_{i+1}}^{\theta}(\cdot)\) is invertible, we have the following formulation.

\begin{problem}[Receding-horizon formulation, reversed]
\label{receding_horizon_problem_reversed}\mbox{}\\
Given initial state \(\bar{x}_0\sim p_0\), initialize $x_0=\bar{x}_0$ and carry out the following recursion for $i=0,1,\cdots,N-1$ to obtain $x_N$ as the sample.
\begin{equation}
\resizebox{1.025\displaywidth}{!}{$
\left\{
\begin{array}{l}
\bar{x}_{i+1}=x_i+v_{t_i}^\theta\left(x_i\right)\Delta t_i\\
\begin{array}{@{}l @{\quad} l@{}}
\widehat{x}_N^{*}=\underset{\widehat{x}_N}{\operatorname{argmin}} &
\begin{array}[t]{@{}l@{}}
C(\widehat{x}_N)+\frac{\lambda_{\textup{oc}}}{2\Delta t_i}\left\|(\mathcal{M}_{t_{i+1}}^{\theta})^{-1}(\widehat{x}_{N})-\bar{x}_{i+1}\right\|_2^2\\
\begin{array}[t]{@{}ll@{}}
\textup{s.t.} & h(\widehat{x}_{N}) \le 0.
\end{array}
\end{array}
\end{array}\\
x_{i+1}=(\mathcal{M}_{t_{i+1}}^{\theta})^{-1}(\widehat{x}_{N}^{*})\\
\end{array}
\right.
$}
\end{equation}
\end{problem}

To make Problem~\ref{receding_horizon_problem_reversed} practical, we need to characterize the inverse mapping \((\mathcal{M}_{t_{i+1}}^{\theta})^{-1}(\cdot)\). From \eqref{posterior_identity} in Lemma~\ref{posterior_mean_lemma}, for any \(x\) we have
\begin{equation}
\nonumber
x=\alpha_{t_{i+1}} \mathcal{M}_{t_{i+1}}^{\theta}(x)+\beta_{t_{i+1}} \mathcal{W}_{t_{i+1}}^{\theta}(x),
\end{equation}
where
\begin{equation}
\nonumber
\mathcal{W}_{t_{i+1}}^{\theta}(x)=\frac{-\dot{\alpha}_{t_{i+1}} x+\alpha_{t_{i+1}} v_{t_{i+1}}^\theta(x)}{\Lambda_{t_{i+1}}}.
\end{equation}
Then $y=\mathcal{M}_{t_{i+1}}^{\theta}(x)$ is equivalent to $x=\alpha_{t_{i+1}} y+\beta_{t_{i+1}} \mathcal{W}_{t_{i+1}}^{\theta}(x)$ for any $y$. Consequently, inverting $y=\mathcal{M}_{t_{i+1}}^{\theta}(x)$ amounts to finding a fixed point of the mapping
\begin{equation}
\label{fixed_point_operator}
T^y_{t_{i+1}}(x)=\alpha_{t_{i+1}} y+\beta_{t_{i+1}} \mathcal{W}_{t_{i+1}}^{\theta}(x),
\end{equation}
i.e., solving \(x^*=T^y_{t_{i+1}}(x^*)\). Assuming \(T^y_{t_{i+1}}(\cdot)\) is a contraction, the fixed point from any initial guess \(x^{(0)}\) is the limit of the iterations
\begin{equation}
\label{fixed_point_inverse}
(\mathcal{M}_{t_{i+1}}^{\theta})^{-1}(y)=\lim_{k\to\infty} \left(T^y_{t_{i+1}}\right)^k(x^{(0)}).
\end{equation}
In our setting, a natural initial guess is the nominal next state \(\bar{x}_{i+1}=x_i+v_{t_i}^\theta\left(x_i\right) \Delta t_i\). Truncating the iterations after one step yields
\begin{equation}
\nonumber
(\mathcal{M}_{t_{i+1}}^{\theta})^{-1}(y)\approx T^y_{t_{i+1}}(\bar{x}_{i+1})=\alpha_{t_{i+1}} y+\beta_{t_{i+1}} \mathcal{W}_{t_{i+1}}^{\theta}\left(\bar{x}_{i+1}\right).
\end{equation}
This one-step cut-off is a deliberate design choice, presenting a linear approximation of $(\mathcal{M}_{t_{i+1}}^{\theta})^{-1}(y)$ that requires only a single forward pass of \(v_{t_{i+1}}^{\theta}(\cdot)\) at \(\bar{x}_{i+1}\). This eliminates the need to evaluate input gradients of \(v_{t_{i+1}}^{\theta}(\cdot)\) within the associated optimization, greatly improving computational efficiency. The approximation error will be analyzed in the Theoretical Analysis section. We thus arrive at the final surrogate formulation as Problem~\ref{receding_horizon_problem_surrogate}.

\begin{problem}[Receding-horizon formulation, surrogate]
\label{receding_horizon_problem_surrogate}\mbox{}\\
Given initial state \(\bar{x}_0\sim p_0\), initialize $x_0=\bar{x}_0$ and carry out the following recursion for $i=0,1,\cdots,N-1$ to obtain $x_N$ as the sample.
\begin{equation}
\label{receding_oc_formulation_surrogate}
\left\{\begin{array}{l}
\bar{x}_{i+1}=x_i+v_{t_i}^\theta\left(x_i\right) \Delta t_i \\
\begin{array}{@{}l @{\quad} l@{}}
\widehat{x}_N^{*}=\underset{\widehat{x}_N}{\operatorname{argmin}} &
\begin{array}[t]{@{}l@{}}
C(\widehat{x}_N) + \frac{\lambda_{\textup{oc}}}{2\Delta{t_{i}}}\left\| \mathcal{F}_{t_{i+1}}^{\theta}(\widehat{x}_{N})-\bar{x}_{i+1}\right\|_2^2 \\
\begin{array}[t]{@{}ll@{}}
\textup{s.t.}
            & h(\widehat{x}_{N}) \le 0.\\
\end{array}
\end{array}
\end{array} \\
x_{i+1} =\mathcal{F}_{t_{i+1}}^{\theta}(\widehat{x}_{N}^{*})\\
\end{array}\right.
\end{equation}
Specifically, $\mathcal{F}_{t_{i+1}}^{\theta}: \mathbb{R}^{d} \to \mathbb{R}^{d}$ is defined as
\begin{equation}
\mathcal{F}_{t_{i+1}}^{\theta}(\widehat{x}_{N})=\alpha_{t_{i+1}} \widehat{x}_{N}+\beta_{t_{i+1}} \mathcal{W}_{t_{i+1}}^{\theta}\left(\bar{x}_{i+1}\right).
\end{equation}
\end{problem}

Since \(\mathcal{F}_{t_{i+1}}^{\theta}(\cdot)\) is affine, we can further simplify the expressions. Evaluating the identity \eqref{posterior_identity} at \(\bar{x}_{i+1}\) gives \(\bar{x}_{i+1}=\alpha_{t_{i+1}} \mathcal{M}_{t_{i+1}}^{\theta}(\bar{x}_{i+1})+\beta_{t_{i+1}} \mathcal{W}_{t_{i+1}}^{\theta}\left(\bar{x}_{i+1}\right)\). Define \(\bar{x}_N\coloneqq\mathcal{M}_{t_{i+1}}^{\theta}(\bar{x}_{i+1})\). Then, for any \(\widehat{x}_N\),
\begin{equation}
\nonumber
\begin{aligned}
\left\|x_{i+1}-\bar{x}_{i+1}\right\|_2^2  & = \left\|\mathcal{F}_{t_{i+1}}^{\theta}(\widehat{x}_N)-\bar{x}_{i+1}\right\|_2^2\\
& = \left\|\alpha_{t_{i+1}} \widehat{x}_{N}-\alpha_{t_{i+1}} \mathcal{M}_{t_{i+1}}^{\theta}(\bar{x}_{i+1})\right\|_2^2 \\
& = \alpha_{t_{i+1}}^2 \left\|\widehat{x}_N-\bar{x}_N\right\|_2^2.
\end{aligned}
\end{equation}
Therefore, the optimization in \eqref{receding_oc_formulation_surrogate} can be expressed entirely in terms of \(\widehat{x}_N\), without involving \(x_{i+1}\). We now summarize the complete workflow in Algorithm~\ref{main_algorithm}, which is named \algname.

\begin{algorithm}[htbp]
    \caption{\algname: Hard-constrained sampling for flow-matching models}
    \label{main_algorithm}
    \KwIn{Initial distribution \(p_0\), learned flow-matching model \(v_t^{\theta}(\cdot)\), cost \(C(\cdot)\), constraint \(h(\cdot)\le 0\), regularization parameter \(\lambda_{\mathrm{oc}}>0\), discretization steps \(N\), time grid \(0=t_0<t_1<\cdots<t_N=1\), and differentiable affine scheduler \((\alpha_t,\beta_t)\).}
    Draw initial state \(\bar{x}_0\sim p_0\) and set \(x_0=\bar{x}_0\)\;
        \For{\(i=0\) \KwTo \(N-1\)}{
            Compute \(\Delta t_i=t_{i+1}-t_i\).

            Compute \(\bar{x}_{i+1}=x_i+v_{t_i}^\theta\left(x_i\right) \Delta t_i\).

            Compute \(\bar{x}_N=\frac{\dot{\beta}_{t_{i+1}} \bar{x}_{i+1}-\beta_{t_{i+1}} v_{t_{i+1}}^\theta\left(\bar{x}_{i+1}\right)}{\alpha_{t_{i+1}} \dot{\beta}_{t_{i+1}}-\dot{\alpha}_{t_{i+1}} \beta_{t_{i+1}}}\).

            Solve the optimization problem
            \begin{equation}
                \label{final_optimization_formulation}
                \begin{array}{@{}l @{\quad} l@{}}
                \widehat{x}_N^{*}=\underset{\widehat{x}_N}{\operatorname{argmin}} &
                \begin{array}[t]{@{}l@{}}
                C(\widehat{x}_N) + \frac{\lambda_{\textup{oc}}}{2\Delta{t_{i}}}\alpha_{t_{i+1}}^2\left\|\widehat{x}_N-\bar{x}_N\right\|_2^2 \\
                \begin{array}[t]{@{}ll@{}}
                \textup{s.t.}
                            & h(\widehat{x}_{N}) \le 0.
                \end{array}
                \end{array}
                \end{array}
            \end{equation}

            Compute \(x_{i+1} = \alpha_{t_{i+1}} \widehat{x}_{N}^{*}+\beta_{t_{i+1}} \frac{-\dot{\alpha}_{t_{i+1}} \bar{x}_{i+1}+\alpha_{t_{i+1}} v_{t_{i+1}}^\theta(\bar{x}_{i+1})}{\alpha_{t_{i+1}} \dot{\beta}_{t_{i+1}}-\dot{\alpha}_{t_{i+1}} \beta_{t_{i+1}}}\). \label{compute_next_state}
        }
    \KwOut{Sample \(x_N\).}
\end{algorithm}

\algname\ is essentially equivalent to solving Problem~\ref{receding_horizon_problem_surrogate}, where the optimization \eqref{final_optimization_formulation} is written in an explicit $\widehat{x}_N$-only form. For completeness, we also expand intermediate expressions and present the final form.

It is worth emphasizing that, although we perform a series of transformations on Problem~\ref{discrete_oc_problem} to obtain Problem~\ref{receding_horizon_problem_surrogate}, there is no relaxation of terminal feasibility. More precisely, we have the following result.

\begin{proposition}
\label{safety_guarantee}
Suppose the feasible set \(\mathcal{S} = \{x \mid h(x) \le 0\}\) is nonempty. The output sample \(x_N\) of Algorithm~\ref{main_algorithm} satisfies the hard constraints \(h(x_N) \le 0\).
\end{proposition}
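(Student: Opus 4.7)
The plan is to isolate the final iteration $i = N-1$ of Algorithm~\ref{main_algorithm} and observe that, thanks to the boundary values of the affine scheduler at $t = 1$, the computed state $x_N$ coincides exactly with the solution $\widehat{x}_N^{*}$ of the last subproblem, whose constraint already enforces $h(\widehat{x}_N^{*}) \le 0$. Earlier iterations are irrelevant to this proposition: the formulation imposes the constraint only on the terminal state, so intermediate iterates need not be feasible.

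First I would record that nonemptiness of $\mathcal{S} = \{x \mid h(x) \le 0\}$ makes the feasible region of each subproblem \eqref{final_optimization_formulation} nonempty, so under mild standing regularity (closedness of $\mathcal{S}$, continuity of $C$, and coercivity of the quadratic regularizer in $\widehat{x}_N$) an argmin $\widehat{x}_N^{*}$ is attained at every step and automatically satisfies $h(\widehat{x}_N^{*}) \le 0$. This well-posedness remark is really the only ``obstacle'' in the proof, and it is a mild one: even if attainment fails, any feasible selection along a minimizing sequence still suffices for the feasibility claim.

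Next I would specialize line~\ref{compute_next_state} of Algorithm~\ref{main_algorithm} to $i = N-1$, where $t_{i+1} = t_N = 1$. The standard affine-scheduler boundary conditions used throughout the paper give $\alpha_{t_N} = 1$ and $\beta_{t_N} = 0$, so in the update
\begin{equation*}
x_N \;=\; \mathcal{F}_{t_N}^{\theta}(\widehat{x}_N^{*}) \;=\; \alpha_{t_N}\,\widehat{x}_N^{*} + \beta_{t_N}\,\mathcal{N}_{t_N}^{\theta}(\bar{x}_N)
\end{equation*}
the second summand vanishes and the first collapses to the identity, yielding $x_N = \widehat{x}_N^{*}$. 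Combining with the feasibility of $\widehat{x}_N^{*}$ then gives $h(x_N) = h(\widehat{x}_N^{*}) \le 0$, which is precisely the claim. The argument also explains why the reverse reparameterization is so convenient here: because the decision variable of the final subproblem \emph{is} the terminal state, terminal feasibility is enforced by construction and does not need to be propagated through any approximation of $(\mathcal{M}_{t_N}^{\theta})^{-1}$.
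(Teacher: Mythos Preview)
Your proposal is correct and mirrors the paper's own proof almost exactly: both isolate the final iteration $i=N-1$, invoke the scheduler boundary values $\alpha_{t_N}=1$, $\beta_{t_N}=0$ to collapse the update in line~\ref{compute_next_state} to $x_N=\widehat{x}_N^{*}$, and conclude from feasibility of $\widehat{x}_N^{*}$ in \eqref{final_optimization_formulation}. The only addition you make is the brief well-posedness remark on attainment of the argmin, which the paper leaves implicit.
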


\revised{Proof of Proposition~\ref{safety_guarantee} is deferred to Appendix~\ref{app:theory_proofs}.}

\begin{remark}
In the \textit{unconstrained} setting, where only a cost \(C(\cdot)\) (equivalently, a reward \(R(\cdot)=-C(\cdot)\)) is present, the task reduces to guided sampling with a reward model, for which multiple formulations and algorithms exist in isolation but have not been unified. Problem~\ref{continuous_oc_problem} without constraints matches the optimal control formulation in \cite{wang2025training, domingo2025adjoint}. Solving Problem~\ref{receding_horizon_problem_simplified} without constraints by gradient descent is akin to the gradient-based methods in \cite{guo2024gradient, pandey2025variational, feng2025on}, which differentiate the predicted state with respect to the current state. Problem~\ref{receding_horizon_problem_surrogate} without constraints is in the same spirit as \cite{wang2023zeroshot} and the proxy formulation in \cite{rout2025rb}, both of which employ DDIM-like updates that jump to the terminal state, optimize it, and map back to the intermediate state, whereas our flow-matching approach is more general since it does not require a Gaussian source.

To the best of our knowledge, the hierarchy from Problem~\ref{continuous_oc_problem} to Problem~\ref{receding_horizon_problem_surrogate} proposed in this paper is the first to systematically connect these formulations through the lens of optimal control. Starting from the continuous-time Problem~\ref{continuous_oc_problem}, discretization, MPC decomposition, a control-to-state change of variables, a reverse reparameterization, and a one-step fixed-point iteration link all the formulations in a principled manner. This unified view, together with the theoretical analysis in the next section, may be of independent interest for reward-only scenarios.
\end{remark}

\begin{remark}
    The constrained optimization problem \eqref{final_optimization_formulation} can be solved using any suitable method, depending on the specific downstream application. In general, it may be solved numerically with constrained optimization algorithms, such as sequential quadratic programming or interior-point methods. Moreover, if the cost \(C(\cdot)\) and constraints \(h(\cdot)\) possess particular structures (e.g., a quadratic cost and linear constraints), specialized solvers or even closed-form solutions may be available.
\end{remark}

In summary, this section utilizes trajectory optimization to enforce hard constraints and minimize costs on samples at inference time. Starting from the continuous-time formulation (Problem~\ref{continuous_oc_problem}), we discretize it (Problem~\ref{discrete_oc_problem}) and distill it (Problem~\ref{receding_horizon_problem_original} to~\ref{receding_horizon_problem_surrogate}) into a tractable algorithm via two major transformations: (i) an MPC decomposition that reduces the full-horizon problem to sequential one-step subproblems, and (ii) a reversed formulation with a single-step fixed-point iteration that sidesteps complex feasible sets and improves computational efficiency.
\revised{We also provide an expanded methodological discussion covering aspects that merit further emphasis and elaboration, which, due to space limitations, is deferred to Appendix~\ref{app:methodological_discussion}.}
Next, we will provide comprehensive theoretical justifications for \algname\ in the Theoretical Analysis section and demonstrate the practical effectiveness through extensive evaluations in the Experiments section.

\section{Theoretical Analysis}

In this section, we present a detailed analysis of the proposed algorithm. Leveraging control-theoretic tools, we quantify the approximation errors introduced by the two major transformations used to derive Problem~\ref{receding_horizon_problem_surrogate} from Problem~\ref{discrete_oc_problem}. \revised{All proofs in this section are deferred to Appendix~\ref{app:theory_proofs}.}

First, we analyze the suboptimality introduced by the MPC decomposition, i.e., the gap between Problem~\ref{discrete_oc_problem} and Problem~\ref{receding_horizon_problem_original}. For notation convenience, define the stage utility $l_i(u)\coloneqq \frac{\lambda_{\mathrm{oc}}}{2}\|u\|_2^2 \Delta t_i$, the (Problem~\ref{discrete_oc_problem}) objective $\mathcal{J}\coloneqq C(x_N)+\sum_{j=0}^{N-1} l_j(u_j)$, the one-step transition $\Psi_i^{\theta}(x,u)\coloneqq x+v_{t_i}^\theta(x)\Delta t_i+u\Delta t_i$ with $\Psi_i^{\theta}(x)\coloneqq \Psi_i^{\theta}(x,0)$, and the terminal feasible set $\mathcal{S}=\{x\mid h(x)\le 0\}$.

We assume that, given an initial state \(x_0\), both Problem~\ref{discrete_oc_problem} and Problem~\ref{receding_horizon_problem_original} admit feasible solutions. We further define the per-step feasible sets for Problem~\ref{receding_horizon_problem_original} as $\widehat{\mathcal{S}}_i=\{x\mid h(\mathcal{M}_{t_i}^\theta(x))\le 0\}$, for $i=1,2,\cdots,N$. Since $\mathcal{M}_{t_N}^\theta(x)=x$, we have $\widehat{\mathcal{S}}_N=\mathcal{S}$. Feasibility of Problem~\ref{receding_horizon_problem_original} then implies each $\widehat{\mathcal{S}}_i$ is nonempty. These assumptions on feasibility are reasonable, since each $u_i$ is unconstrained, and sufficiently large controls can be applied as needed to steer the state into the feasible set.

We also impose the following regularity assumptions: the terminal cost \(C(\cdot)\) is \(L_C\)-Lipschitz continuous on \(\mathcal{S}\), and, for each \(i\), the terminal state estimator \(\mathcal{M}_{t_i}^{\theta}(\cdot)\) is \(L_{\mathcal{M}_x,i}\)-Lipschitz continuous on the region of interest.

For the same initial state \(x_0=\bar{x}_0\), assume both Problem~\ref{discrete_oc_problem} and Problem~\ref{receding_horizon_problem_original} are solved to global optimality, and denote their optimal control sequences by \(u^{\textup{P2}}\) and \(u^{\textup{P3}}\), respectively. Since $\mathcal{J}$ is the objective of Problem~\ref{discrete_oc_problem}, by definition we have $0\le\mathcal{J}(x_0,u^{\textup{P3}})-\mathcal{J}(x_0,u^{\textup{P2}})$. It remains to upper-bound this difference.

\begin{remark}
The global optimality assumption does not conflict with the fact that Problem~\ref{discrete_oc_problem} is a local optimal control formulation. As discussed in Related Work, local versus global optimal control refers to the scope of control: open-loop for a fixed initial state, versus a feedback policy defined on all states. This is distinct from local versus global optimality of a single optimization problem. A local optimal control problem can still be solved to global optimality; in that case, the trajectory under the resulting open-loop control coincides with that produced by the optimal feedback policy starting from the same state.
\end{remark}

Define the terminal value function $V_N(x)\coloneqq C(x) + \iota_{\mathcal{S}}(x)$, where \(\iota_{\mathcal{S}}(x)\) denotes the indicator of \(\mathcal{S}\):
\begin{equation}
\nonumber
\iota_{\mathcal{S}}(x)= \begin{cases}0, & x \in \mathcal{S}, \\ +\infty, & x \notin \mathcal{S}.\end{cases}
\end{equation}
For \(i=0,1,\cdots,N-1\), define the optimal cost-to-go as
\begin{equation}
\nonumber
                \begin{array}{@{}l @{\quad} l@{}}
               V_i(x) \coloneqq \underset{\left\{x_j\right\}_{j=i}^{N}, \left\{u_j\right\}_{j=i}^{N-1}}{\operatorname{min}} &
                \begin{array}[t]{@{}l@{}}
                C(x_N) +\lambda_{\textup{oc}} \sum_{j=i}^{N-1} \frac{1}{2}\left\|u_j\right\|_2^2 \Delta t_j \\
                \begin{array}[t]{@{}ll@{}}
                \textup{s.t.}
                & x_i=x, \\
                & x_{j+1}=\Psi_j^{\theta}(x_j,u_j), \\
                & j=i,i+1,\cdots,N-1, \\
                & h(x_{N}) \le 0.
                \end{array}
                \end{array}
                \end{array}
\end{equation}
For any function \(F:\mathbb{R}^d\to \mathbb{R}\cup\{+\infty\}\) and \(i=0,1,\cdots,N-1\), define the Bellman operators
\begin{equation}
\nonumber
\left(T_i F\right)(x):=\min _u\left\{l_i(u)+F\left(\Psi_i^{\theta}(x, u)\right)\right\},
\end{equation}
which can be equivalently written as
\begin{equation}
\label{bellman_operator_equivalent}
\left(T_i F\right)(x):=\min _{y}
\left\{\frac{\lambda_{\textup{oc}}}{2 \Delta t_i}\left\|y-\Psi_i^{\theta}(x)\right\|_2^2+F(y)\right\}.
\end{equation}
This expression uses the same control-to-state change of variables as in Problem~\ref{receding_horizon_problem_simplified}. We have the following proposition.

\begin{proposition}
\label{bellman_recursion}
For \(i=0,1,\cdots,N-1\), the Bellman recursion \(V_i=T_i V_{i+1}\) holds, and we have \(V_0(x_0)=\mathcal{J}(x_0,u^{\textup{P2}})\).
\end{proposition}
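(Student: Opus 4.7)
The plan is the standard dynamic programming argument: establish the Bellman recursion $V_i = T_i V_{i+1}$ by the principle of optimality, and then observe that $V_0(x_0) = \mathcal{J}(x_0, u^{\textup{P2}})$ is an immediate restatement of the definition of $V_0$.

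For the recursion at stage $i<N$, I would unfold the definition of $V_i(x)$ and split the joint minimization over $(u_i, u_{i+1}, \ldots, u_{N-1})$ into an outer min over $u_i$ and an inner min over the tail. Setting $x_{i+1} = \Psi_i^\theta(x, u_i)$, the tail subproblem---constrained by the remaining dynamics and by $h(x_N) \le 0$---coincides with the definition of $V_{i+1}(x_{i+1})$. Thus $V_i(x) = \min_{u_i}\{ l_i(u_i) + V_{i+1}(\Psi_i^\theta(x, u_i)) \}$, which is $(T_i V_{i+1})(x)$. A small but essential point is that the terminal constraint $h(x_N) \le 0$ is equivalent to adding $\iota_{\mathcal{S}}(x_N)$ to the cost: the indicator vanishes on $\mathcal{S}$ and is $+\infty$ off $\mathcal{S}$, so infeasible terminals are automatically pruned from the min. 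This is precisely the rationale for defining $V_N = C + \iota_{\mathcal{S}}$, and it lets the unconstrained Bellman operator $T_i$ propagate the constraint backward through the recursion. Finally, the equivalent form \eqref{bellman_operator_equivalent} follows from the bijective affine change of variables $y = \Psi_i^\theta(x,u) = \Psi_i^\theta(x) + u\,\Delta t_i$, under which $l_i(u) = \tfrac{\lambda_{\textup{oc}}}{2\Delta t_i}\lVert y - \Psi_i^\theta(x)\rVert_2^2$.

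For the second claim, note that $V_0(x_0)$ is by definition the minimum of $\mathcal{J}(x_0, u)$ over all feasible control sequences $u$ that start at $x_0$ and satisfy the dynamics and terminal constraint. Under the stated feasibility and global optimality assumptions, this minimum is attained at $u = u^{\textup{P2}}$, giving $V_0(x_0) = \mathcal{J}(x_0, u^{\textup{P2}})$ directly.

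The main---though mild---subtlety will be the rigorous handling of the extended-real-valued $V_i$: I need to argue that the nested minimizations interchange correctly in the presence of $\iota_{\mathcal{S}}$, and that each outer minimum is in fact attained (or else can safely be replaced by an infimum). The feasibility hypothesis on Problem~\ref{discrete_oc_problem} ensures $V_0(x_0)$ is finite; a backward induction starting from $V_N$ then shows $V_i$ is finite at every point reachable along an optimal trajectory, so all minima along the recursion are well-defined and the DPP manipulations above are legitimate.
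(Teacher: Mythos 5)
Your argument is correct and is exactly the standard dynamic-programming reasoning the paper invokes: the paper's proof simply cites Bellman's principle of optimality and a standard reference rather than spelling out the decomposition, indicator-function encoding, and change of variables that you write out. No gap; your proposal just makes explicit what the paper leaves to the cited literature.
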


Instead of the true terminal value \(V_N(x)\), Problem~\ref{receding_horizon_problem_original} employs proxy terminal values \(\widehat{V}_i(x)\coloneqq C(\mathcal{M}_{t_i}^\theta(x))+\iota_{\mathcal{S}}(\mathcal{M}_{t_i}^\theta(x))\), for \(i=1,2,\cdots,N\).
Given \(x\in \widehat{\mathcal{S}}_i\) and \(i=1,2,\cdots,N-1\), define the Bellman residual
\begin{equation}
\nonumber
r_i(x)\coloneqq(T_i \widehat{V}_{i+1})(x)-\widehat{V}_i(x).
\end{equation}
We explicitly require \(x\in \widehat{\mathcal{S}}_i\) so that \(\widehat{V}_i(x)\) is finite. The term $(T_i \widehat{V}_{i+1})(x)$ is always finite for any \(x\), since we can always select $y\in \widehat{\mathcal{S}}_{i+1}$ in \eqref{bellman_operator_equivalent}.

To control the mismatch between the exact and proxy value functions, we use the following proposition. We assume the minimizer from \(T_i V_{i+1}\) lies in \(\widehat{\mathcal{S}}_{i+1}\). This effectively requires that the proxy terminal constraints in Problem~\ref{receding_horizon_problem_original} do not exclude the optimal successor state. If this assumption is violated, the bound can be refined by adding a term that accounts for the resulting minimizer mismatch. The extension is routine but tedious, so we present the simpler form here.

\begin{proposition}
\label{bellman_operator_nonexpansive}
For any \(i=0,1,\cdots,N-1\), any \(x\in \mathbb{R}^d\), we have
\begin{equation}
\nonumber
(T_i \widehat{V}_{i+1} - T_i V_{i+1})(x) \le \sup_{y\in \widehat{\mathcal{S}}_{i+1}} (\widehat{V}_{i+1}-V_{i+1})(y).
\end{equation}
\end{proposition}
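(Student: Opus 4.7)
The plan is to apply the standard one-sided monotonicity trick for Bellman-type operators: evaluate both operators at the common point $y^{*}$ that achieves the minimum for the cheaper operator, and then use the hypothesis to keep that point inside the set where the two value functions can be meaningfully compared.

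Concretely, first I would fix $x\in\mathbb{R}^d$ and let
$$
y^{*}\in\arg\min_{y}\left\{\frac{\lambda_{\textup{oc}}}{2\Delta t_i}\bigl\|y-\Psi_i^{\theta}(x)\bigr\|_2^{2}+V_{i+1}(y)\right\},
$$
so that by definition $(T_i V_{i+1})(x)$ equals the value of the bracketed expression at $y^{*}$. The stated assumption that the minimizer from $T_i V_{i+1}$ lies in $\widehat{\mathcal{S}}_{i+1}$ is exactly what I need here: it ensures $y^{*}\in\widehat{\mathcal{S}}_{i+1}$, and therefore $\widehat{V}_{i+1}(y^{*})=C(\mathcal{M}_{t_{i+1}}^{\theta}(y^{*}))$ is finite rather than $+\infty$, so the comparison below is nontrivial.

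Second, since $y^{*}$ is just one feasible candidate in the minimization defining $(T_i \widehat{V}_{i+1})(x)$, I get the upper bound
$$
(T_i \widehat{V}_{i+1})(x)\;\le\;\frac{\lambda_{\textup{oc}}}{2\Delta t_i}\bigl\|y^{*}-\Psi_i^{\theta}(x)\bigr\|_2^{2}+\widehat{V}_{i+1}(y^{*}).
$$
Subtracting the exact identity for $(T_i V_{i+1})(x)$ makes the quadratic terms cancel, leaving
$$
(T_i \widehat{V}_{i+1} - T_i V_{i+1})(x)\;\le\;\widehat{V}_{i+1}(y^{*}) - V_{i+1}(y^{*}).
$$
Since $y^{*}\in\widehat{\mathcal{S}}_{i+1}$, the right-hand side is bounded by the supremum of $\widehat{V}_{i+1}-V_{i+1}$ over $\widehat{\mathcal{S}}_{i+1}$, which is exactly the claimed bound.

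There is essentially no hard step here; the whole argument is the classical observation that $T_i$ is monotone in its argument when we compare on a common feasible set, with the quadratic control penalty playing the role of a fixed transition cost that cancels. The only subtle ingredient is the assumption on the minimizer of $T_i V_{i+1}$, which is needed to prevent the inequality from becoming vacuous through $\widehat{V}_{i+1}(y^{*})=+\infty$; the authors explicitly flag that relaxing this assumption would require an extra minimizer-mismatch term, and I would mention this as the spot where the argument is actually delicate rather than routine.
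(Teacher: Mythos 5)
Your proof is correct and follows essentially the same route as the paper: both evaluate the surrogate Bellman operator at the minimizer $y^{*}$ of $(T_i V_{i+1})(x)$, invoke the paper's assumption that this minimizer lies in $\widehat{\mathcal{S}}_{i+1}$, cancel the shared quadratic term, and bound the remainder by the supremum over $\widehat{\mathcal{S}}_{i+1}$. Your added remark about why that assumption prevents the bound from becoming vacuous matches the paper's own discussion preceding the proposition.
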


Assume that for each \(i=1,2,\ldots,N-1\) there exists \(\kappa_i \in(0, \infty)\) such that, for every \(x\in \widehat{\mathcal{S}}_i\), there is a feasible control \(u\) with \(\|u\|_2\le \kappa_i\) and \(\Psi_i^{\theta}(x, u)\in \widehat{\mathcal{S}}_{i+1}\). This is a reasonable assumption that rules out pathological cases. Define the one-step consistency error of the terminal state estimator as
\begin{equation}
\nonumber
\varepsilon_i:=\sup _{x \in \widehat{\mathcal{S}}_i}\left\|\mathcal{M}_{t_{i+1}}^\theta\left(\Psi_i^\theta(x)\right)-\mathcal{M}_{t_i}^\theta(x)\right\|_2,
\end{equation}
and
\begin{equation}
\nonumber
\Gamma_i:=\max \left\{\frac{L_C^2 L_{\mathcal{M}_x,i+1}^2}{2 \lambda_{\textup{oc}}}, \frac{\lambda_{\textup{oc}}}{2} \kappa_i^2+L_C L_{\mathcal{M}_x,i+1} \kappa_i\right\}.
\end{equation}
We have the following result.

\begin{theorem}
\label{mpc_suboptimality_theorem}
For any initial state \(x_0=\bar{x}_0\sim p_0\), if both Problem~\ref{discrete_oc_problem} and Problem~\ref{receding_horizon_problem_original} are solved to global optimality, whose solutions are denoted as \(u^{\textup{P2}}\) and \(u^{\textup{P3}}\), then
\begin{equation}
\label{mpc_suboptimality_bound}
0 \leq \mathcal{J}\left(x_0,u^{\textup{P3}}\right)-\mathcal{J}\left(x_0,u^{\textup{P2}}\right) \leq 2 \sum_{i=1}^{N-1}\left(L_C \varepsilon_i+\Gamma_i \Delta t_i\right).
\end{equation}
\end{theorem}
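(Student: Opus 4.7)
The plan is to use a dynamic-programming decomposition and treat the MPC proxy as a perturbation of the true cost-to-go. I would first introduce the optimal value function $V_i$ for Problem~\ref{discrete_oc_problem} with terminal $V_N = C + \iota_{\mathcal{S}}$, so that $V_0(x_0) = \mathcal{J}(x_0, u^{\textup{P2}})$ by Bellman's principle, along with the Bellman operator $T_i$ satisfying $V_i = T_i V_{i+1}$. The MPC recursion in Problem~\ref{receding_horizon_problem_original} instead uses the proxy terminal values $\widehat{V}_i = (C + \iota_{\mathcal{S}}) \circ \mathcal{M}_{t_i}^\theta$, which coincide with $V_N$ at the horizon. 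The Bellman residual $r_i(x) := (T_i \widehat{V}_{i+1})(x) - \widehat{V}_i(x)$ quantifies how far the proxy is from satisfying the true Bellman equation along feasible states. Since $u_i^{\textup{P3}}$ by definition attains the minimum in $T_i \widehat{V}_{i+1}$, evaluating $r_i$ along the MPC trajectory $\{x_i^{\textup{P3}}\}$ and summing from $i=1$ to $N-1$, together with $\widehat{V}_N = V_N$ and $V_0(x_0) = (T_0 V_1)(x_0)$, telescopes into the identity
\begin{equation}
\nonumber
\mathcal{J}(x_0, u^{\textup{P3}}) - \mathcal{J}(x_0, u^{\textup{P2}}) = \sum_{i=1}^{N-1} r_i(x_i^{\textup{P3}}) + (T_0 \widehat{V}_1 - T_0 V_1)(x_0).
\end{equation}

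Next, to control the propagation term $(T_0 \widehat{V}_1 - T_0 V_1)(x_0)$, I would invoke the Bellman nonexpansiveness result (Proposition~\ref{bellman_operator_nonexpansive}): $(T_i \widehat{V}_{i+1} - T_i V_{i+1})(x) \le \sup_{y \in \widehat{\mathcal{S}}_{i+1}}(\widehat{V}_{i+1} - V_{i+1})(y)$. Setting $E_i := \sup_{x \in \widehat{\mathcal{S}}_i}(\widehat{V}_i - V_i)(x)$ and combining with the definition of $r_i$ yields the recursion $E_i \le E_{i+1} + \|r_i\|_{\infty, \widehat{\mathcal{S}}_i}$ with boundary $E_N = 0$. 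Unrolling gives $E_1 \le \sum_{i=1}^{N-1} \|r_i\|_{\infty, \widehat{\mathcal{S}}_i}$, and plugging back into the identity above produces the consolidated bound $\mathcal{J}(x_0, u^{\textup{P3}}) - \mathcal{J}(x_0, u^{\textup{P2}}) \le 2\sum_{i=1}^{N-1} \|r_i\|_{\infty, \widehat{\mathcal{S}}_i}$. The left-hand-side nonnegativity is immediate from the optimality of $u^{\textup{P2}}$ for Problem~\ref{discrete_oc_problem}.

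The final step is to translate each $\|r_i\|_{\infty, \widehat{\mathcal{S}}_i}$ into quantities depending only on problem data. For the upper bound on $r_i(x)$ at $x \in \widehat{\mathcal{S}}_i$, I would exploit the assumed feasible recovery control with $\|u\|_2 \le \kappa_i$ mapping into $\widehat{\mathcal{S}}_{i+1}$, combined with $L_C$-Lipschitzness of $C$ and $L_{\mathcal{M}_x, i+1}$-Lipschitzness of $\mathcal{M}_{t_{i+1}}^\theta$, isolating the one-step consistency error $\varepsilon_i = \sup_{x \in \widehat{\mathcal{S}}_i}\|\mathcal{M}_{t_{i+1}}^\theta(\Psi_i^\theta(x)) - \mathcal{M}_{t_i}^\theta(x)\|_2$ as the unavoidable piece. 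For the lower bound, I would drop the indicator (which can only lower the value) and minimize the scalar quadratic $\frac{\lambda_{\textup{oc}}}{2}\|u\|^2 \Delta t_i - L_C L_{\mathcal{M}_x, i+1}\|u\|\Delta t_i$ in $u$ to obtain a $-\frac{L_C^2 L_{\mathcal{M}_x, i+1}^2}{2\lambda_{\textup{oc}}} \Delta t_i$ floor. Merging the two bounds produces $\|r_i\|_{\infty, \widehat{\mathcal{S}}_i} \le L_C \varepsilon_i + \Gamma_i \Delta t_i$, and substituting into the consolidated bound delivers the theorem.

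The main subtlety is the handling of indicator penalties in $V_i$ and $\widehat{V}_i$: the Bellman nonexpansiveness step tacitly requires the minimizer of $T_i V_{i+1}$ to land in $\widehat{\mathcal{S}}_{i+1}$, otherwise the proxy value at that minimizer is $+\infty$ and the bound becomes vacuous. Care in the feasibility assumption (or a refined bound with a minimizer-mismatch correction) is therefore needed to keep the chain of inequalities finite; everything downstream, including the Lipschitz-based residual bound, is then essentially routine.
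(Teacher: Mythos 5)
Your proposal follows the paper's own argument essentially step for step: the same value functions $V_i$ and proxies $\widehat{V}_i$, the same Bellman residual $r_i$ with the telescoping identity along the $u^{\textup{P3}}$ trajectory, the same use of Proposition~\ref{bellman_operator_nonexpansive} to unroll $E_i \le E_{i+1} + \|r_i\|_{\infty,\widehat{\mathcal{S}}_i}$, and the same Lipschitz/recovery-control bound $\|r_i\|_{\infty,\widehat{\mathcal{S}}_i} \le L_C\varepsilon_i + \Gamma_i\Delta t_i$ (you even flag the same minimizer-in-$\widehat{\mathcal{S}}_{i+1}$ caveat the paper assumes). The only nit is that your stated lower-bound floor omits the $-L_C\varepsilon_i$ term that the consistency-error comparison introduces, but this is absorbed by the merged bound and does not affect the result.
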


\begin{remark}
The bound in Theorem~\ref{mpc_suboptimality_theorem} directly reflects the two sources of approximation errors when transforming Problem~\ref{discrete_oc_problem} to Problem~\ref{receding_horizon_problem_original}. First, the one-step consistency errors \(\varepsilon_i\) arise from replacing the true terminal state obtained by ODE simulation with the estimator \(\mathcal{M}_{t_i}^\theta(\cdot)\). With a perfect terminal-state oracle, \(\varepsilon_i\) would vanish to zero. Second, when computing $u_i$, we ignore the future controls \(\left\{u_j\right\}_{j=i+1}^{N-1}\), which introduces the term $\frac{L_C^2 L_{\mathcal{M}_x,i+1}^2}{2 \lambda_{\textup{oc}}}$ in \(\Gamma_i\). The remaining term \(\frac{\lambda_{\textup{oc}}}{2} \kappa_i^2+L_C L_{\mathcal{M}_x,i+1} \kappa_i\) accounts for the control effort needed to steer the state into the feasible set when required.
\end{remark}

\begin{remark}
\label{continuous_time_limit}
Since Problem~\ref{discrete_oc_problem} converges to Problem~\ref{continuous_oc_problem} as $N\to\infty$, it is natural to ask what happens to Theorem~\ref{mpc_suboptimality_theorem} in the continuous-time limit. The bound \eqref{mpc_suboptimality_bound} remains well-behaved as $N\to\infty$. It suffices to show that the one-step consistency errors satisfy $\varepsilon_i=\mathcal{O}(\Delta t_i)$. Under the additional assumptions that $t\mapsto \mathcal M_{t}^\theta(x)$ is $L_{\mathcal{M}_t,i}$-Lipschitz on the region of interest (uniformly in $x$), and that the velocity is bounded on \(\widehat{\mathcal{S}}_i\), this scaling holds. The derivation is given in Appendix~\ref{app:theory_proofs}.
\end{remark}

Next, for completeness, we state the equivalence of Problem~\ref{receding_horizon_problem_original}, Problem~\ref{receding_horizon_problem_simplified}, and Problem~\ref{receding_horizon_problem_reversed} in the following theorem.

\begin{theorem}
\label{equivalence_theorem}
For any initial state \(x_0=\bar{x}_0\), running Problem~\ref{receding_horizon_problem_original}, Problem~\ref{receding_horizon_problem_simplified}, or Problem~\ref{receding_horizon_problem_reversed} produces the same control trajectory \(\{u_i\}_{i=0}^{N-1}\) and state trajectory \(\{x_i\}_{i=0}^N\).
\end{theorem}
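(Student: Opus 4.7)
The plan is to proceed by induction on the step index $i = 0, 1, \ldots, N-1$, showing at each step that, given a common state $x_i$, the three receding-horizon subproblems select the same pair $(u_i^\ast, x_{i+1}^\ast)$. Since all three formulations use the same update $x_{i+1} = x_i + v_{t_i}^\theta(x_i)\Delta t_i + u_i^\ast \Delta t_i$ and start from the same $x_0 = \bar{x}_0$, inductive matching of the per-step optimizers immediately propagates the equality through the entire state trajectory $\{x_i\}_{i=0}^N$ and control trajectory $\{u_i\}_{i=0}^{N-1}$. Thus the theorem reduces to proving per-step equivalence of the three subproblems, which I would do in two independent substeps corresponding to the two transformations displayed in the roadmap.

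For the step from Problem~\ref{receding_horizon_problem_original} to Problem~\ref{receding_horizon_problem_simplified}, I would exhibit the affine bijection $\phi_i : u_i \mapsto x_{i+1} = \bar{x}_{i+1} + u_i \Delta t_i$, with inverse $u_i = (x_{i+1} - \bar{x}_{i+1})/\Delta t_i$, where $\bar{x}_{i+1} = x_i + v_{t_i}^\theta(x_i)\Delta t_i$ is independent of the decision variable. Under this reparameterization, the dynamic constraint in \eqref{receding_formulation_original} is identically satisfied, the integral penalty transforms as $\tfrac{\lambda_{\textup{oc}}}{2}\|u_i\|_2^2 \Delta t_i = \tfrac{\lambda_{\textup{oc}}}{2\Delta t_i}\|x_{i+1} - \bar{x}_{i+1}\|_2^2$, and the proxy terminal state $\widehat{x}_N = \mathcal{M}_{t_{i+1}}^\theta(x_{i+1})$ together with the constraint $h(\widehat{x}_N)\le 0$ is unchanged. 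Since $\phi_i$ is a bijection from $\mathbb{R}^d$ to $\mathbb{R}^d$, the two optimization problems share the same objective value at corresponding points and hence the same optimizer under $\phi_i$.

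For the step from Problem~\ref{receding_horizon_problem_simplified} to Problem~\ref{receding_horizon_problem_reversed}, I would invoke the invertibility assumption on $\mathcal{M}_{t_{i+1}}^\theta$ stated in the derivation of Problem~\ref{receding_horizon_problem_reversed} and use the bijection $\psi_i : x_{i+1} \mapsto \widehat{x}_N = \mathcal{M}_{t_{i+1}}^\theta(x_{i+1})$ with inverse $x_{i+1} = (\mathcal{M}_{t_{i+1}}^\theta)^{-1}(\widehat{x}_N)$. Rewriting the objective of Problem~\ref{receding_horizon_problem_simplified} in the $\widehat{x}_N$ variables yields exactly the objective of Problem~\ref{receding_horizon_problem_reversed}, namely $C(\widehat{x}_N) + \tfrac{\lambda_{\textup{oc}}}{2\Delta t_i}\|(\mathcal{M}_{t_{i+1}}^\theta)^{-1}(\widehat{x}_N) - \bar{x}_{i+1}\|_2^2$, and the terminal constraint $h(\widehat{x}_N)\le 0$ becomes the primary constraint. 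Bijectivity of $\psi_i$ then implies that the two problems have identical value and that their optimizers correspond under $\psi_i$; recovering $x_{i+1}^\ast = (\mathcal{M}_{t_{i+1}}^\theta)^{-1}(\widehat{x}_N^\ast)$ closes the step and feeds the induction.

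The main obstacle is the invertibility of $\mathcal{M}_{t_{i+1}}^\theta$ required in the second substep. For a general learned velocity field there is no guarantee that the posterior-mean map is globally a bijection. I would handle this cleanly by invoking the same standing assumption that is already used implicitly when Problem~\ref{receding_horizon_problem_reversed} is introduced, and noting that this is exactly the point at which the subsequent one-step fixed-point approximation \eqref{fixed_point_inverse} becomes meaningful. The rest of the argument, consisting of change-of-variables and the induction bookkeeping, is routine.
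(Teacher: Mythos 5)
Your proposal is correct and takes essentially the same route as the paper, whose proof simply states that the equivalence "follows directly from the definitions and straightforward algebraic manipulations"; your two change-of-variables arguments (the affine map $u_i \mapsto x_{i+1}$ and the bijection induced by the assumed invertibility of $\mathcal{M}_{t_{i+1}}^{\theta}$), combined with the step-by-step induction, are exactly those manipulations made explicit.
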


Lastly, we analyze the gap between Problem~\ref{receding_horizon_problem_reversed} and Problem~\ref{receding_horizon_problem_surrogate}, which is introduced by the one-step fixed-point iteration. Both problems share the same decision variable $\widehat{x}_N$ and the same constraints $h(\widehat{x}_N)\le 0$. The difference lies in the quadratic penalty term in the objective. Problem~\ref{receding_horizon_problem_reversed} uses \(\|(\mathcal{M}_{t_{i+1}}^{\theta})^{-1}(\widehat{x}_{N})-\bar{x}_{i+1}\|_2^2\), whereas Problem~\ref{receding_horizon_problem_surrogate} uses \(\|\mathcal{F}_{t_{i+1}}^{\theta}(\widehat{x}_{N})-\bar{x}_{i+1}\|_2^2\). Assume \(\mathcal{W}_{t_i}^\theta(\cdot)\) is \(L_{\mathcal{W}_x,i}\)-Lipschitz continuous on the region of interest. Assume \(\left|\beta_{t_{i+1}}\right|L_{\mathcal{W}_x,i+1}<1\) for \(i=0,1,\cdots,N-1\). We have the following result.

\begin{theorem}
\label{fixed_point_error_theorem}
For \(i=0,1,\cdots,N-1\) and any \(y\in \mathbb{R}^d\), denote the subproblem objective in Problem~\ref{receding_horizon_problem_reversed} as \(\mathcal{J}_i^{\textup{P5}}(y)\coloneqq C(y)+\frac{\lambda_{\textup{oc}}}{2\Delta t_i}\|(\mathcal{M}_{t_{i+1}}^{\theta})^{-1}(y)-\bar{x}_{i+1}\|_2^2\), and that in Problem~\ref{receding_horizon_problem_surrogate} as \(\mathcal{J}_i^{\textup{P6}}(y)\coloneqq C(y)+\frac{\lambda_{\textup{oc}}}{2\Delta t_i}\|\mathcal{F}_{t_{i+1}}^{\theta}(y)-\bar{x}_{i+1}\|_2^2\). Then, we have
\begin{equation}
\label{fixed_point_objective_bound}
\left|\mathcal{J}_i^{\textup{P5}}(y)-\mathcal{J}_i^{\textup{P6}}(y)\right| \le \frac{\lambda_{\textup{oc}}}{2 \Delta t_i} \frac{r(2-r)}{(1-r)^2} \alpha_{t_{i+1}}^2\|y-\bar{y}\|^2,
\end{equation}
where \(r=\left|\beta_{t_{i+1}}\right| L_{\mathcal{W}_x,i+1}\) and \(\bar{y}=\mathcal{M}_{t_{i+1}}^{\theta}(\bar{x}_{i+1})\).
\end{theorem}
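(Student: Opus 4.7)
The plan is to work directly with the squared-norm difference by decomposing
$\|(\mathcal{M}_{t_{i+1}}^{\theta})^{-1}(y) - \bar{x}_{i+1}\|_2^2 - \|\mathcal{F}_{t_{i+1}}^{\theta}(y) - \bar{x}_{i+1}\|_2^2$
as $\langle A-B,\,A+B\rangle$, where $A \coloneqq (\mathcal{M}_{t_{i+1}}^{\theta})^{-1}(y) - \bar{x}_{i+1}$ and $B \coloneqq \mathcal{F}_{t_{i+1}}^{\theta}(y) - \bar{x}_{i+1}$. Since the cost term $C(y)$ appears identically in $\mathcal{J}_i^{\textup{P5}}$ and $\mathcal{J}_i^{\textup{P6}}$, it cancels, so the whole argument reduces to bounding $\|A-B\|_2\bigl(\|A\|_2+\|B\|_2\bigr)$ in terms of $\|y-\bar y\|_2$ using the fixed-point characterization of $(\mathcal{M}_{t_{i+1}}^{\theta})^{-1}$ together with the Lipschitz and contractivity assumptions.

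First I would compute $B$ in closed form. Applying identity \eqref{posterior_identity} at $\bar{x}_{i+1}$ gives $\bar{x}_{i+1} = \alpha_{t_{i+1}}\bar{y} + \beta_{t_{i+1}}\mathcal{N}_{t_{i+1}}^{\theta}(\bar{x}_{i+1})$, and substituting this into the definition of $\mathcal{F}_{t_{i+1}}^{\theta}(y)$ collapses the $\mathcal{N}$ terms, yielding the clean equality $B = \alpha_{t_{i+1}}(y-\bar y)$, hence $\|B\|_2 = |\alpha_{t_{i+1}}|\,\|y-\bar y\|_2$.

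Next I would bound $A$ via the contraction argument implicit in \eqref{fixed_point_operator}. Writing $x^{\ast} = (\mathcal{M}_{t_{i+1}}^{\theta})^{-1}(y)$ as a fixed point, $x^{\ast} = \alpha_{t_{i+1}} y + \beta_{t_{i+1}}\mathcal{N}_{t_{i+1}}^{\theta}(x^{\ast})$, and subtracting the analogous identity at $\bar{x}_{i+1}$ gives
$A = \alpha_{t_{i+1}}(y-\bar y) + \beta_{t_{i+1}}\bigl(\mathcal{N}_{t_{i+1}}^{\theta}(x^{\ast}) - \mathcal{N}_{t_{i+1}}^{\theta}(\bar{x}_{i+1})\bigr).$
The Lipschitz bound on $\mathcal{N}_{t_{i+1}}^{\theta}$ then yields $\|A\|_2 \le |\alpha_{t_{i+1}}|\,\|y-\bar y\|_2 + r\|A\|_2$, which rearranges (using $r<1$) to $\|A\|_2 \le \frac{|\alpha_{t_{i+1}}|}{1-r}\|y-\bar y\|_2$. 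The same display exposes $A - B = \beta_{t_{i+1}}\bigl(\mathcal{N}_{t_{i+1}}^{\theta}(x^{\ast}) - \mathcal{N}_{t_{i+1}}^{\theta}(\bar{x}_{i+1})\bigr)$, hence $\|A-B\|_2 \le r\|A\|_2 \le \frac{r\,|\alpha_{t_{i+1}}|}{1-r}\|y-\bar y\|_2$.

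Finally, combining $\bigl|\|A\|_2^2 - \|B\|_2^2\bigr| \le \|A-B\|_2(\|A\|_2+\|B\|_2)$ with the three bounds above produces the prefactor $\frac{r(2-r)}{(1-r)^2}\,\alpha_{t_{i+1}}^2$ on $\|y-\bar y\|_2^2$; multiplying by $\frac{\lambda_{\textup{oc}}}{2\Delta t_i}$ gives exactly \eqref{fixed_point_objective_bound}. The main obstacle is the self-referential step for $\|A\|_2$, which only closes because the assumption $r = |\beta_{t_{i+1}}|L_{\mathcal{N}_x,i+1} < 1$ makes $T^{y}_{t_{i+1}}$ a contraction and thereby both legitimizes the existence of $(\mathcal{M}_{t_{i+1}}^{\theta})^{-1}$ via \eqref{fixed_point_inverse} and supplies the $1/(1-r)$ factor that drives the final rate. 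Everything else is routine algebra once these identities are in place.
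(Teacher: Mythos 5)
Your proposal is correct and follows essentially the same route as the paper's proof: both exploit the fixed-point characterization of \((\mathcal{M}_{t_{i+1}}^{\theta})^{-1}\) under the contraction assumption \(r<1\), use identity \eqref{posterior_identity} at \(\bar{x}_{i+1}\) to get \(\mathcal{F}_{t_{i+1}}^{\theta}(y)-\bar{x}_{i+1}=\alpha_{t_{i+1}}(y-\bar y)\), and bound the gap \(\|A-B\|_2\le \tfrac{r}{1-r}\|B\|_2\) before comparing the two squared norms. The only cosmetic difference is that you bound \(\bigl|\|A\|_2^2-\|B\|_2^2\bigr|\) via the factorization \(\|A-B\|_2(\|A\|_2+\|B\|_2)\) while the paper expands \(\|B-c\|_2^2\) and uses \(2\|B\|_2\|c\|_2+\|c\|_2^2\); both yield the identical constant \(\tfrac{r(2-r)}{(1-r)^2}\).
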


\begin{remark}
\label{practical_heuristic}
Theorem~\ref{fixed_point_error_theorem} requires the quantity \(\left|\beta_{t_{i+1}}\right| L_{\mathcal{W}_x,i+1}\) to be small. For affine conditional paths, the boundary condition \(\beta_1=0\) makes this requirement easier to satisfy as the sampling process approaches the final time \(t=1\). This is also consistent with Theorem~\ref{mpc_suboptimality_theorem}, where the suboptimality gap depends on one-step consistency errors governed by the accuracy of terminal state estimators \(\mathcal{M}_{t_i}^\theta(\cdot)\), which typically improve over time. These observations suggest a practical heuristic: skip control in the early stages of sampling, and activate the subproblems in later steps, when both the estimator and the fixed-point approximation are more reliable.
\end{remark}

\section{Experiments}

In this section, we evaluate our algorithm \algname\ on a diverse set of tasks from various domains, namely, robotic manipulation, maze navigation, partial differential equation (PDE) control, and text-guided image editing. Results across these distinct domains demonstrate the generality and effectiveness of our proposed framework.

In every task, a pretrained flow-matching model \(v_t^\theta(x)\) is provided. There are hard constraints \(h(x)\le 0\) (see Remark~\ref{constraints_form}), which may be complex and involve multiple equalities and inequalities. There are also costs \(C(x)\) capturing additional preferences over samples. Our goal is to generate samples that satisfy the hard constraints \(h(x)\le 0\) while minimizing the costs \(C(x)\). We will describe \(v_t^\theta(x)\), \(h(x)\), and \(C(x)\) in detail for each task later.

For comparison, we implement state-of-the-art baselines from the two families discussed in Related Work: soft-constrained and hard-constrained approaches. For the soft-constrained family, we add penalties for constraint violations to the original cost \(C(x)\) and then apply training-free guidance to minimize the augmented cost during sampling. Specifically, we implement \textbf{OC-Flow} \cite{wang2025training}, which perturbs the sampling trajectory via indirect optimal control, and a standard gradient-based guidance approach (hereafter \textbf{Gradient Guidance}) used widely in prior work \cite{guo2024gradient, feng2025on, pandey2025variational}, which updates samples using the cost evaluated at the posterior mean. For the hard-constrained family, the central strategy is projection during sampling, with most prior work focused on diffusion models. We consider three representative projection-based methods and adapt them to flow-matching models: (i) \textbf{Projection-All} \cite{christopher2024constrained, romer2025diffusion, zampini2025training, luan2025projected}, projecting after every sampling step; (ii) \textbf{Projection-Late} \cite{yuan2023physdiff, bouvier2025ddat}, projecting only in the later sampling steps; and (iii) \textbf{Projection-Relaxed} \cite{zhang2025constrained, xiao2025safediffuser}, which performs a fixed number of augmented Lagrangian iterations at each sampling step, thereby relaxing constraints in early sampling steps when the multipliers are small.
Since projection-based methods only handle constraint enforcement, we also report enhanced versions that combine them with Gradient Guidance on the cost \(C(x)\) during sampling, enabling a fair comparison with our approach, which addresses cost minimization and constraint satisfaction in a unified framework. Lastly, we include standard sampling without any guidance, denoted as \textbf{Original}, to demonstrate the capabilities of the base model.

\revised{All experiments were conducted on a desktop equipped with an Intel Core Ultra 9 285K CPU and an NVIDIA GeForce RTX 5090 GPU. All computations involving neural networks, including model inference and Jacobian calculations, were performed on the GPU. Across all tasks, we use the scheduler \((\alpha_t=t,\beta_t=1-t)\) and Euler discretization with uniform timesteps, which is a simple and widely adopted choice in the flow-matching literature. Comprehensive experimental details are provided in Appendix~\ref{app:experiment_details}. To better illustrate the steering behavior of \algname, we provide additional visualizations in Appendix~\ref{app:steering_visualizations}. Additional baselines, including post-processing methods and the concurrent work SafeFlowMatcher \cite{yang2025safeflowmatcher}, are reported in Appendix~\ref{app:additional_baselines}. We also include results on sensitivity analysis and stress testing in Appendix~\ref{app:sensitivity_analysis}, as well as demonstrations of extensions to higher-order solvers and non-uniform time grids in Appendix~\ref{app:solver_schedule_results}.}

\subsection{Robotic Manipulation}

This task follows the setup of \cite{romer2025diffusion} and uses the D3IL benchmark introduced in \cite{jia2024towards}. A robotic manipulator aims to reach a target region with its end-effector while avoiding obstacles, as shown in Fig.~\ref{d3il_env}. The state \(s\in \mathbb{R}^{4}\) comprises the current and desired two-dimensional positions of the end-effector. The action \(a\in \mathbb{R}^{2}\) is the desired two-dimensional velocity of the end-effector. D3IL provides expert demonstrations that weave around six pillars to reach a target region. We train a flow-matching model to generate trajectories of horizon \(H\), i.e., samples \(x=(s_0, a_0, s_1, a_1, \cdots, s_{H-1}, a_{H-1})\in \mathbb{R}^{6H}\). At test time, conditioned on the current state \(s\), we sample a trajectory with \(s_0=s\) and execute the actions \(\left\{a_i\right\}_{i=0}^{T-1}\) sequentially in the environment, where \(T\le H\) denotes the replanning horizon. We also introduce novel test-time obstacles that conflict with many trajectories in the training data, thus increasing task difficulty. The hard constraints \(h(x)\le 0\) ensure that the trajectory avoids all obstacles, i.e., both the pillars and the purple regions. Since a sample \(x\) is a trajectory of the robotic system, we also incorporate dynamics constraints into \(h(x)\le 0\) to ensure physical fidelity \cite{bouvier2025ddat, romer2025diffusion}: \(s_{i+1}=f(s_i,a_i)\) for \(i=0,1,\cdots,H-1\), where \(f\) is fitted from the training data. The cost \(C(x)\) is the squared distance between the final state \(s_{H-1}\) and the target region, which encourages reaching the target as fast as possible. The optimization problem \eqref{final_optimization_formulation} in \algname, as well as the projection operations in the baselines, is solved using the open-source nonlinear programming solver IPOPT \cite{wachter2006implementation}.

\begin{figure}[htbp]
    \centering
    \includegraphics[width=1.5in]{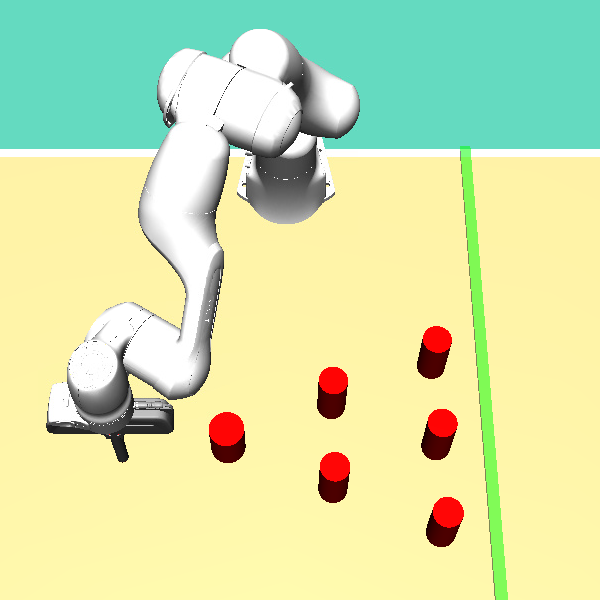}
    \caption{The robotic manipulation task.}
    \label{d3il_env}
\end{figure}

We evaluate all methods over 50 trials. The simulation starts at the same initial position and terminates when the end-effector reaches the target area or collides with any obstacle. A visualization of our algorithm is shown in Fig.~\ref{d3il_trial}. We record three metrics: (i) Safety Rate: the percentage of trials without collision; (ii) Total Steps (Safe Trials): the average number of steps that the robot takes to reach the target among trials not terminated by collision; and (iii) Computation Time: the average time taken to sample a trajectory from the flow-matching model at each replanning step. The quantitative results are summarized in Table~\ref{d3il_results}. Our algorithm \algname\ is the only method to achieve a perfect safety rate (1.00), while also requiring the fewest steps to reach the target and incurring only mild computational overhead. In contrast, other baselines exhibit much lower safety rates and longer paths.

\begin{figure}[htbp]
    \centering
    \includegraphics[width=3.2in]{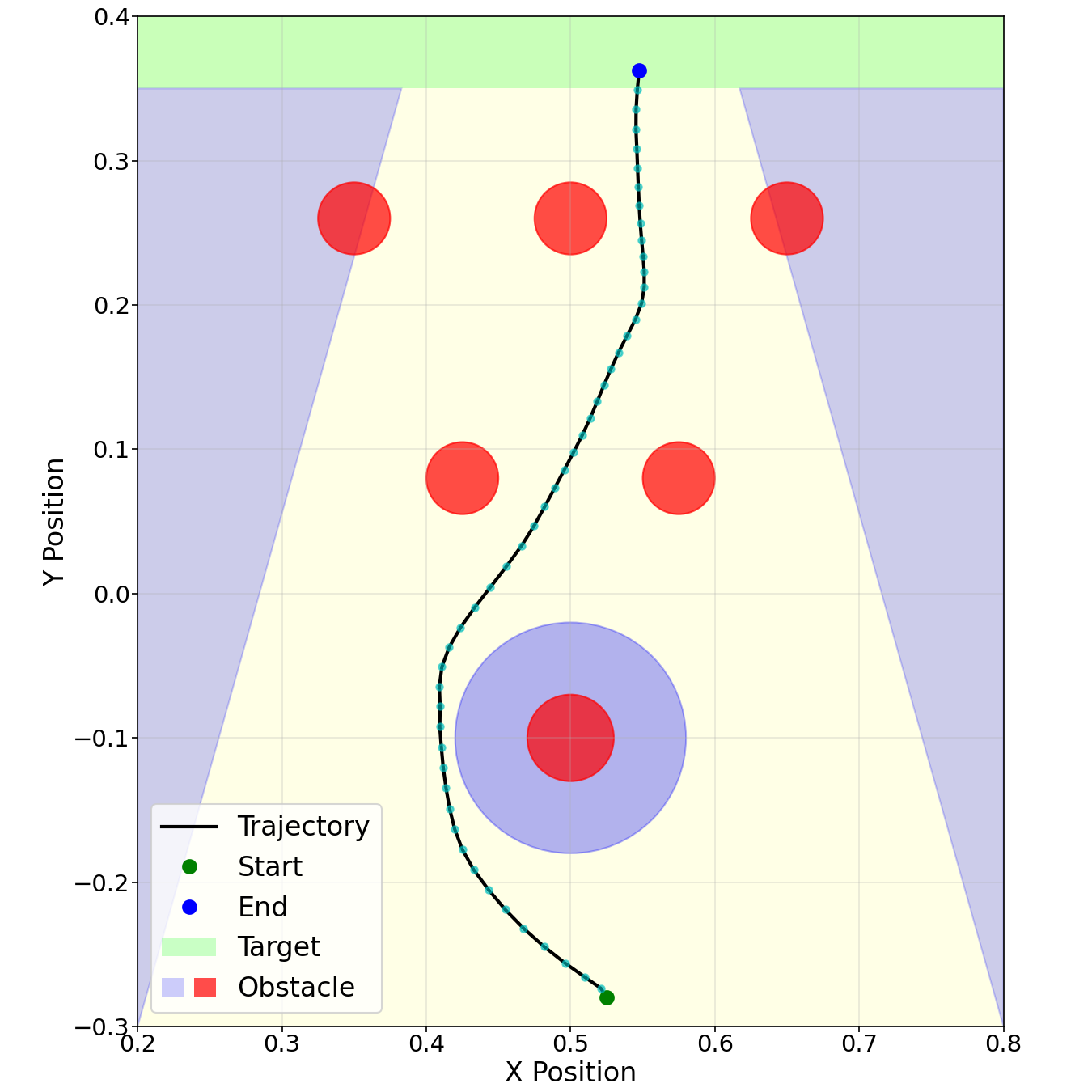}
    \caption{Visualization of \algname\ in the robotic manipulation task. The green region denotes the target area, the red circles represent original obstacles, and the purple region denotes the novel obstacles introduced at test time. The robot successfully weaves around all obstacles to reach the target.}
    \label{d3il_trial}
\end{figure}

\begin{table*}[htbp]
    \centering
    \caption{Results on the robotic manipulation task. Values of the form \(a\pm b\) indicate mean \(\pm\) standard deviation. \algname\ is the only method to achieve a perfect safety rate (1.00), while also requiring the fewest steps to reach the target and incurring only mild computational overhead. In contrast, other baselines exhibit much lower safety rates and longer paths.}
    \label{d3il_results}
    \setlength{\tabcolsep}{6pt}
    \begin{tabular}{l c c c}
        \toprule
        \textbf{Method} & \textbf{Safety Rate} & \textbf{Total Steps (Safe Trials)} & \textbf{Computation Time (s)} \\
        \midrule
        Original & 0.06 & 58.7 $\pm$ 4.0 & 0.060 $\pm$ 0.009 \\
        Gradient Guidance & 0.18 & 61.7 $\pm$ 6.3 & 0.992 $\pm$ 0.022 \\
        OC-Flow & 0.14 & 63.9 $\pm$ 6.3 & 0.847 $\pm$ 0.016 \\
        Projection-All & 0.46 & 67.2 $\pm$ 7.1 & 0.349 $\pm$ 0.051 \\
        Projection-Late & 0.76 & 67 $\pm$ 11 & 0.236 $\pm$ 0.072 \\
        Projection-Relaxed & 0.10 & 63.4 $\pm$ 7.9 & 0.116 $\pm$ 0.014 \\
        Projection-All + Gradient Guidance & 0.40 & 70.6 $\pm$ 8.3 & 1.556 $\pm$ 0.057 \\
        Projection-Late + Gradient Guidance & 0.68 & 67.2 $\pm$ 9.3 & 1.380 $\pm$ 0.035 \\
        Projection-Relaxed + Gradient Guidance & 0.04 & 65.0 $\pm$ 2.8 & 1.074 $\pm$ 0.016 \\
        \algname\ (ours) & \textbf{1.00} & \textbf{52.5} $\pm$ \textbf{4.4} & 0.190 $\pm$ 0.023 \\
        \bottomrule
    \end{tabular}
\end{table*}


\subsection{Maze Navigation}

This task follows the setup of \cite{janner2022planning, xiao2025safediffuser, zhang2025constrained}, which are based on the Maze2D environment from D4RL \cite{fu2020d4rl}. As shown in Fig.~\ref{maze_trial}, a force-actuated ball navigates a two-dimensional maze to a goal position. The state \(s\in \mathbb{R}^{4}\) consists of the two-dimensional position and velocity of the ball. The action \(a\in \mathbb{R}^{2}\) is the applied two-dimensional force. We train a flow-matching model to generate trajectories of horizon \(H\), i.e., samples \(x=(s_0, a_0, s_1, a_1, \cdots, s_{H-1}, a_{H-1})\in \mathbb{R}^{6H}\). At test time, a trajectory is sampled conditioned on the initial state \(s_0\) and the goal state \(s_{H-1}\). A proportional-derivative (PD) controller is deployed to track the positions in this trajectory. The hard constraints \(h(x)\le0\) enforce obstacle avoidance for the two red regions in Fig.~\ref{maze_trial}. As in the robotic manipulation task, we impose dynamics constraints for physical fidelity: \(s_{i+1}=f(s_i,a_i)\) for \(i=0,1,\cdots,H-1\), where \(f\) is fitted from the training data. We use IPOPT to solve the optimization problem \eqref{final_optimization_formulation} and the projection steps in the baselines. The cost \(C(x)\) is the (squared) total length of the trajectory, encouraging fast progress to the goal.

We evaluate all methods over 50 trials. A visualization of our algorithm is shown in Fig.~\ref{maze_trial}. We report four metrics: (i) Safety Rate: the percentage of trials without entering the red obstacles; (ii) Violations: the average number of timesteps during which the ball is inside the obstacles; (iii) Score: the D4RL normalized score \cite{fu2020d4rl} (faster goal reaching yields higher scores); and (iv) Computation Time: the average time taken to sample a trajectory from the flow-matching model. The quantitative results are summarized in Table~\ref{maze2d_results}. Our method \algname\ is the only approach to achieve a perfect safety rate (1.00) with zero violations while also obtaining the highest score. Its computation time is comparable to or lower than most baselines.

\begin{figure}[htbp]
    \centering
    \includegraphics[width=2.5in]{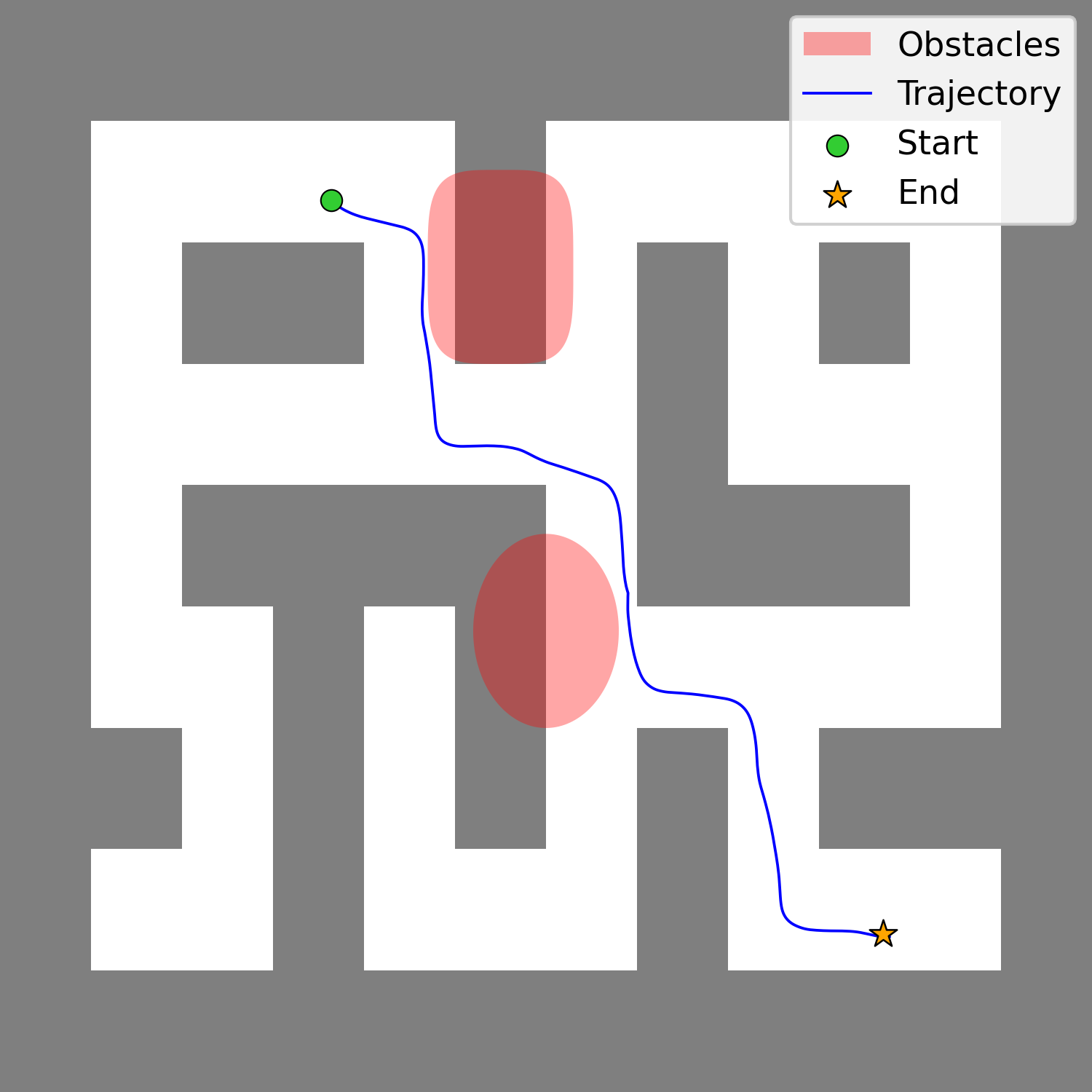}
    \caption{Visualization of \algname\ in the maze navigation task. The red region denotes the introduced obstacles. The robot successfully navigates from the start position to the goal position while avoiding obstacles.}
    \label{maze_trial}
\end{figure}

\begin{table*}[htbp]
    \centering
    \caption{Results on the maze navigation task. Values of the form \(a\pm b\) indicate mean \(\pm\) standard deviation. \algname\ is the only approach to achieve a perfect safety rate (1.00) with zero violations while also obtaining the highest score. Its computation time is comparable to or lower than most baselines.}
    \label{maze2d_results}
    \setlength{\tabcolsep}{6pt}
    \begin{tabular}{l c c c c}
        \toprule
        \textbf{Method} & \textbf{Safety Rate} & \textbf{Violations} & \textbf{Score} & \textbf{Computation Time (s)} \\
        \midrule
        Original & 0.02 & 49 $\pm$ 20 & 1.47 $\pm$ 0.45 & 0.082 $\pm$ 0.034 \\
        Gradient Guidance & 0.88 & 1.4 $\pm$ 4.2 & 1.11 $\pm$ 0.75 & 6.666 $\pm$ 0.076 \\
        OC-Flow & 0.04 & 35 $\pm$ 17 & 1.47 $\pm$ 0.45 & 3.983 $\pm$ 0.085 \\
        Projection-All & 0.22 & 38 $\pm$ 30 & 1.41 $\pm$ 0.54 & 10.4 $\pm$ 1.6 \\
        Projection-Late & 0.30 & 34 $\pm$ 31 & 1.51 $\pm$ 0.39 & 6.2 $\pm$ 1.2 \\
        Projection-Relaxed & 0.00 & 33 $\pm$ 14 & 1.53 $\pm$ 0.32 & 4.11 $\pm$ 0.33 \\
        Projection-All + Gradient Guidance & 0.24 & 37 $\pm$ 30 & 1.56 $\pm$ 0.34 & 25.7 $\pm$ 3.1 \\
        Projection-Late + Gradient Guidance & 0.30 & 34 $\pm$ 30 & 1.50 $\pm$ 0.39 & 18.02 $\pm$ 0.74 \\
        Projection-Relaxed + Gradient Guidance & 0.00 & 25 $\pm$ 10 & 1.53 $\pm$ 0.32 & 11.20 $\pm$ 0.19 \\
        \algname\ (ours) & \textbf{1.00} & \textbf{0.0} $\pm$ \textbf{0.0} & \textbf{1.620} $\pm$ \textbf{0.010} & 4.09 $\pm$ 0.82 \\
        \bottomrule
    \end{tabular}
\end{table*}

\subsection{PDE Control}

One-dimensional Burgers' equation is a fundamental PDE that models various physical phenomena, such as fluid dynamics and traffic flow. Following \cite{hu2025from}, we consider Dirichlet boundary conditions, with state \(u(t,s)\) and external control \(f(t,s)\):
\begin{equation}
\label{burgers}
\begin{cases}\frac{\partial u}{\partial t} + u \frac{\partial u}{\partial s} = \nu \frac{\partial^2 u}{\partial s^2} + f(t,s) & (t,s)\in[0, T] \times [0,L] \\ u(0,s) = u_0(s), u(T,s) = u_T(s) & s\in[0,L] \\ u(t,0) = u(t,L) = 0 & t\in[0,T]\end{cases}
\end{equation}
We train a flow-matching model to generate solutions to PDE \eqref{burgers}. Each sample \(x\) contains the discretized states and controls on a spatiotemporal grid with \(m\) time steps and \(n\) spatial points. The dataset is from \cite{hu2025from}, containing PDE solutions under diverse initial and terminal conditions. The hard constraints \(h(x)\le0\) enforce time-varying bounds on $|u|$. For physical fidelity, we also enforce a discretized form of \eqref{burgers} on each sample \(x\), where the viscosity parameter \(\nu\) is treated as uncertain within a known interval, i.e, \(\nu\in[\nu_{\textup{min}},\nu_{\textup{max}}]\). The cost \(C(x)\) is the total control energy. We solve \eqref{final_optimization_formulation} in \algname\ and the projection steps in the baselines with IPOPT.

Conditioned on given boundary conditions \(u(0,s)=u_0(s)\) and \(u(T,s)=u_T(s)\), we sample from the flow-matching model, extract the predicted control \(f\), and simulate the PDE on a fine grid to obtain the resulting controlled state \(u\). We evaluate all methods over 50 trials, each with different boundary conditions. A visualization of controlled state \(u\) and predicted control \(f\) at one trial is shown in Fig.~\ref{pde_trial}. State slices at three time instants are presented in Fig.~\ref{pde_slices}, demonstrating that the controlled state complies with the time-varying bounds. Following \cite{hu2025from}, we report three metrics quantifying constraint satisfaction of the state \(u\): (i) \(\mathcal{R}_{\text{sample}}\): the fraction of trials with any violation; (ii) \(\mathcal{R}_{\text{time}}\): the fraction of unsafe timesteps over all timesteps; and (iii) \(\mathcal{R}_{\text{point}}\): the fraction of spatial points that ever violate a constraint across all timesteps. We also report Control Energy and Computation Time. The quantitative results are summarized in Table~\ref{pde_control_results}. \algname\ is one of five methods that achieve perfect constraint satisfaction (zero on all three safety metrics). Among them, \algname\ attains the lowest control energy and the shortest computation time.

\begin{figure}[htbp]
    \centering
    \includegraphics[width=3.2in]{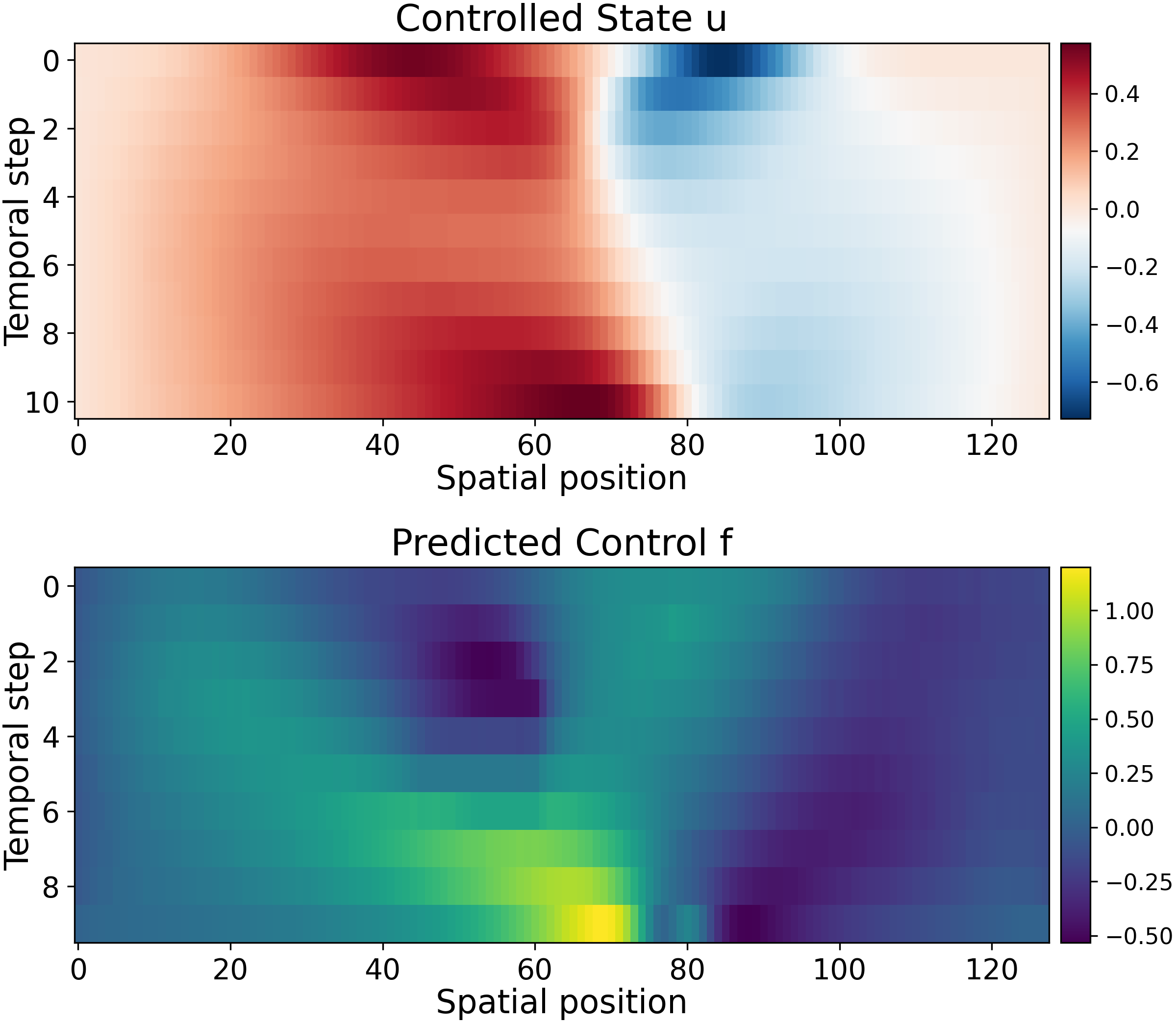}
    \caption{Heatmaps of controlled state \(u\) and predicted control \(f\) produced by \algname\ at one trial in the PDE control task. The grid uses \(m=10\) time steps and \(n=128\) spatial points; the axis ticks reflect this discretization.}
    \label{pde_trial}
\end{figure}

\begin{figure}[htbp]
    \centering
    \includegraphics[width=3.4in]{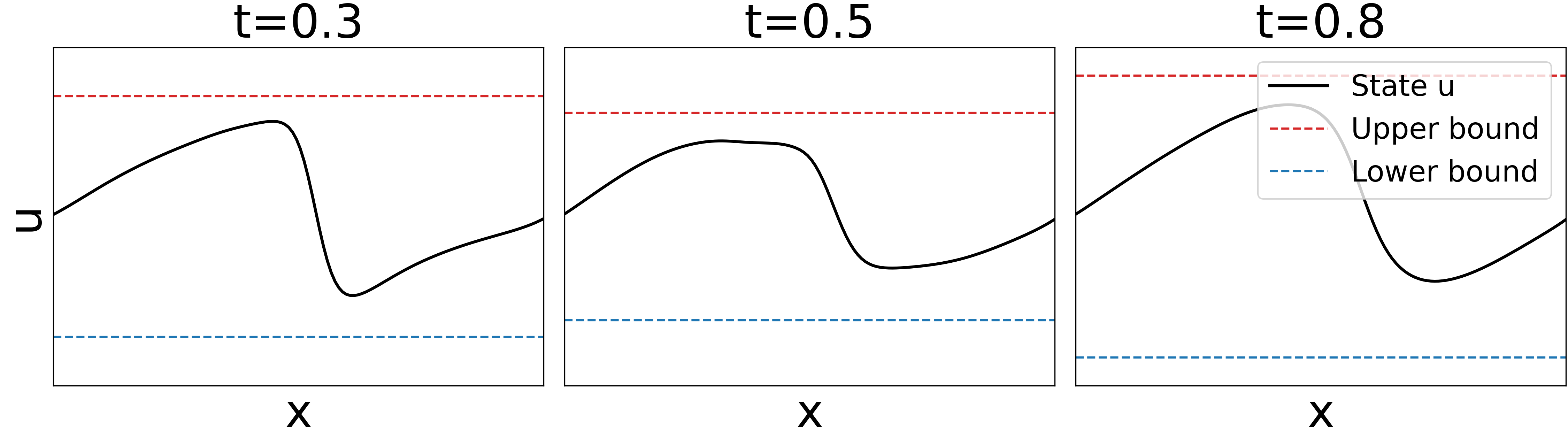}
    \caption{Slices of controlled state \(u\) at three time instants, produced by \algname. The state constraints are satisfied.}
    \label{pde_slices}
\end{figure}

\begin{table*}[htbp]
    \centering
    \caption{Results on the PDE control task. \(\mathcal{R}_{\text{sample}}\) denotes the fraction of trials with any violation; \(\mathcal{R}_{\text{time}}\) denotes the fraction of unsafe timesteps over all timesteps; and \(\mathcal{R}_{\text{point}}\) denotes the fraction of spatial points that ever violate a constraint across all timesteps. Values of the form \(a\pm b\) indicate mean \(\pm\) standard deviation. \algname\ is one of five methods that achieve perfect constraint satisfaction (zero on all three safety metrics). Among them, \algname\ attains the lowest control energy and the shortest computation time.}
    \label{pde_control_results}
    \setlength{\tabcolsep}{6pt}
    \begin{tabular}{l c c c c c}
        \toprule
        \textbf{Method} & $\boldsymbol{\mathcal{R}_{\textbf{sample}}}$ & $\boldsymbol{\mathcal{R}_{\textbf{time}}}$ & $\boldsymbol{\mathcal{R}_{\textbf{point}}}$ & \textbf{Control Energy} & \textbf{Computation Time (s)} \\
        \midrule
        Original & 1.00 & 0.77 & 0.48 & 0.59 $\pm$ 0.26 & 0.162 $\pm$ 0.085 \\
        Gradient Guidance & 0.36 & 0.13 & 0.04 & 0.31 $\pm$ 0.10 & 2.71 $\pm$ 0.15 \\
        OC-Flow & 0.78 & 0.47 & 0.19 & 0.180 $\pm$ 0.075 & 4.434 $\pm$ 0.034 \\
        Projection-All & \textbf{0.00} & \textbf{0.00} & \textbf{0.00} & 0.53 $\pm$ 0.16 & 13.6 $\pm$ 1.2 \\
        Projection-Late & \textbf{0.00} & \textbf{0.00} & \textbf{0.00} & 0.51 $\pm$ 0.16 & 8.9 $\pm$ 2.7 \\
        Projection-Relaxed & 1.00 & 0.72 & 0.42 & 0.49 $\pm$ 0.21 & 0.20 $\pm$ 0.17 \\
        Projection-All + Gradient Guidance & \textbf{0.00} & \textbf{0.00} & \textbf{0.00} & 0.36 $\pm$ 0.12 & 17.2 $\pm$ 1.3 \\
        Projection-Late + Gradient Guidance & \textbf{0.00} & \textbf{0.00} & \textbf{0.00} & 0.36 $\pm$ 0.12 & 11.5 $\pm$ 2.5 \\
        Projection-Relaxed + Gradient Guidance & 0.94 & 0.64 & 0.30 & 0.27 $\pm$ 0.12 & 2.84 $\pm$ 0.20 \\
        \algname\ (ours) & \textbf{0.00} & \textbf{0.00} & \textbf{0.00} & \textbf{0.28} $\pm$ \textbf{0.10} & 8.3 $\pm$ 1.1 \\
        \bottomrule
    \end{tabular}
\end{table*}

\subsection{Text-Guided Image Editing}

In this task, leveraging a flow-matching model pretrained on the CelebA-HQ celebrity face dataset \cite{karras2018progressive}, we edit an input image according to a text prompt so that the edited image aligns with the prompt requirements. We follow the setup of \cite{wang2025training}. The pretrained flow-matching model is from \cite{liu2023flow}. Given an input image, we integrate the flow ODE backward from \(t=1\) to \(t=0\) to recover the corresponding initial noise \(x_0\), then apply a guided sampling method to generate the edited image \(x_1\). We use CLIP \cite{radford2021learning} to score how well an image matches the text prompt and define the cost \(C(x)\) as the negative CLIP score. However, a potential risk is that the edits are so substantial that the original identity of the input image is lost. To address this issue, we use LPIPS \cite{zhang2018unreasonable} to quantify perceptual similarity between the edited and original images, and impose a hard constraint \(h(x)\le0\) requiring LPIPS to remain below an upper bound. We test with five text prompts: ``A photo of an angry face.'', ``A photo of a face with curly hair.'', ``A photo of an old face.'', ``A photo of a sad face.'', and ``A photo of a smiling face.'', which are referred to as Angry, Curly, Old, Sad, and Smile hereafter.

We evaluate on 200 images randomly sampled from the CelebA-HQ validation set. In this image editing setting, the Original baseline is not meaningful, as it simply returns the input image unchanged. The constraints-only baselines (Projection-All/Late/Relaxed) are also inapplicable: without optimizing the cost (i.e., CLIP score), there is no incentive to modify the image, so the LPIPS constraint is trivially satisfied. In addition, since the LPIPS constraint is computed by a neural network and the sampling uses many discretization steps (\(N=100\)), exact projection at every step or many steps, as in Projection-All/Late, is computationally prohibitive. Therefore, we compare \algname\ with Gradient Guidance, OC-Flow, and Projection-Relaxed + Gradient Guidance. The optimization problem \eqref{final_optimization_formulation} is solved with a fixed number of augmented Lagrangian iterations. Quantitative results by prompt are shown in Fig.~\ref{image_results}, and aggregated results are summarized in Table~\ref{image_metrics}. Safety Rate represents the fraction of edited images satisfying the LPIPS constraint. \algname\ is the only method to achieve 100\% constraint satisfaction across all prompts and images. OC-Flow and Gradient Guidance substantially violate the LPIPS constraint, indicating over-editing that alters identity. Projection-Relaxed + Gradient Guidance drives LPIPS unnecessarily low, even though the requirement is only to stay below the upper bound, which in turn hurts its CLIP score. \algname\ achieves the second-best CLIP score, very close to OC-Flow, and is much more computationally efficient than all baselines.

\begin{figure*}[htbp]
    \centering
    \includegraphics[width=6.8in]{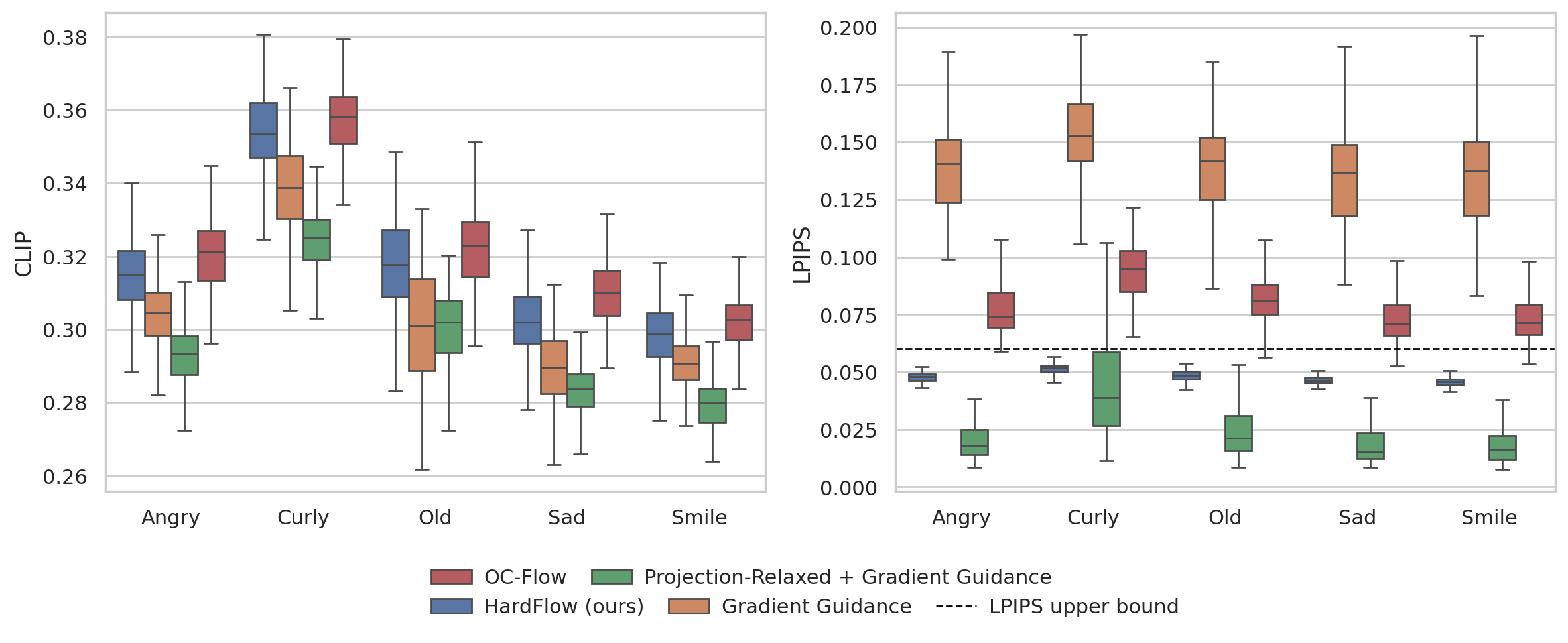}
    \caption{Box-and-whisker plots of CLIP (left; higher is better) and LPIPS (right; below the dashed line is required) for five prompts on the text-guided image editing task. For each prompt and method, the box spans the interquartile range (25th to 75th percentiles), the horizontal line marks the median, and the whiskers extend to 1.5 times the interquartile range. The dashed line in the LPIPS panel indicates the LPIPS upper bound.}
    \label{image_results}
\end{figure*}

\begin{table*}[htbp]
  \centering
  \caption{Results on the text-guided image editing task. CLIP measures editing effectiveness (higher is better). LPIPS measures perceptual similarity between the edited and original images, and is required to be below 0.06. Safety rate denotes the percentage of edits that satisfy the constraint. \algname\ is the only method to achieve 100\% constraint satisfaction across all prompts and images, attains strong CLIP scores, and is substantially faster than all baselines.}
  \label{image_metrics}
  \setlength{\tabcolsep}{6pt}
  \begin{tabular}{l c c c c}
    \toprule
    \textbf{Method} & \textbf{Safety Rate} & \textbf{LPIPS} & \textbf{CLIP} & \textbf{Computation Time (s)} \\
    \midrule
    OC-Flow & 0.033 & 0.082 $\pm$ 0.009 & 0.322 $\pm$ 0.021 & 158.302 $\pm$ 0.016 \\
    Gradient Guidance & 0.000 & 0.144 $\pm$ 0.007 & 0.304 $\pm$ 0.019 & 131.819 $\pm$ 0.076 \\
    Projection-Relaxed + Gradient Guidance & 0.939 & 0.026 $\pm$ 0.011 & 0.296 $\pm$ 0.018 & 129.336 $\pm$ 0.047 \\
    HardFlow (ours) & \textbf{1.000} & \textbf{0.048} $\pm$ \textbf{0.002} & \textbf{0.317} $\pm$ \textbf{0.022} & 51.294 $\pm$ 0.004 \\
    \bottomrule
  \end{tabular}
\end{table*}

To further illustrate the benefits of imposing hard constraints on LPIPS, we provide qualitative comparisons in Fig.~\ref{image_comparison}. The three reference images are shown in the top-left of Fig.~\ref{image_comparison}, with their CelebA-HQ numbers displayed beneath each. We refer to them by these numbers hereafter. A notable issue with OC-Flow is that it appears to change the perceived gender of the subject, for example, in the Angry, Curly, and Sad edits of 002140, and the Curly and Smile edits of 014314. Both OC-Flow and Gradient Guidance can alter facial features excessively, deviating too much from the original identity, for example, the Angry and Smile edits of 004666 by OC-Flow and most edits of 004666 and 014314 by Gradient Guidance. Gradient Guidance also changes hair color drastically in all edits of 002140, which is unnecessary for the given prompts. Projection-Relaxed + Gradient Guidance yields much worse visual quality despite decent CLIP and LPIPS values, suggesting reward hacking in which the numerical scores appear acceptable even though the outputs are visually degraded. This may stem from constraining intermediate states rather than the final state in the sampling process.

\begin{figure*}[htbp]
    \centering
    \includegraphics[width=7.0in]{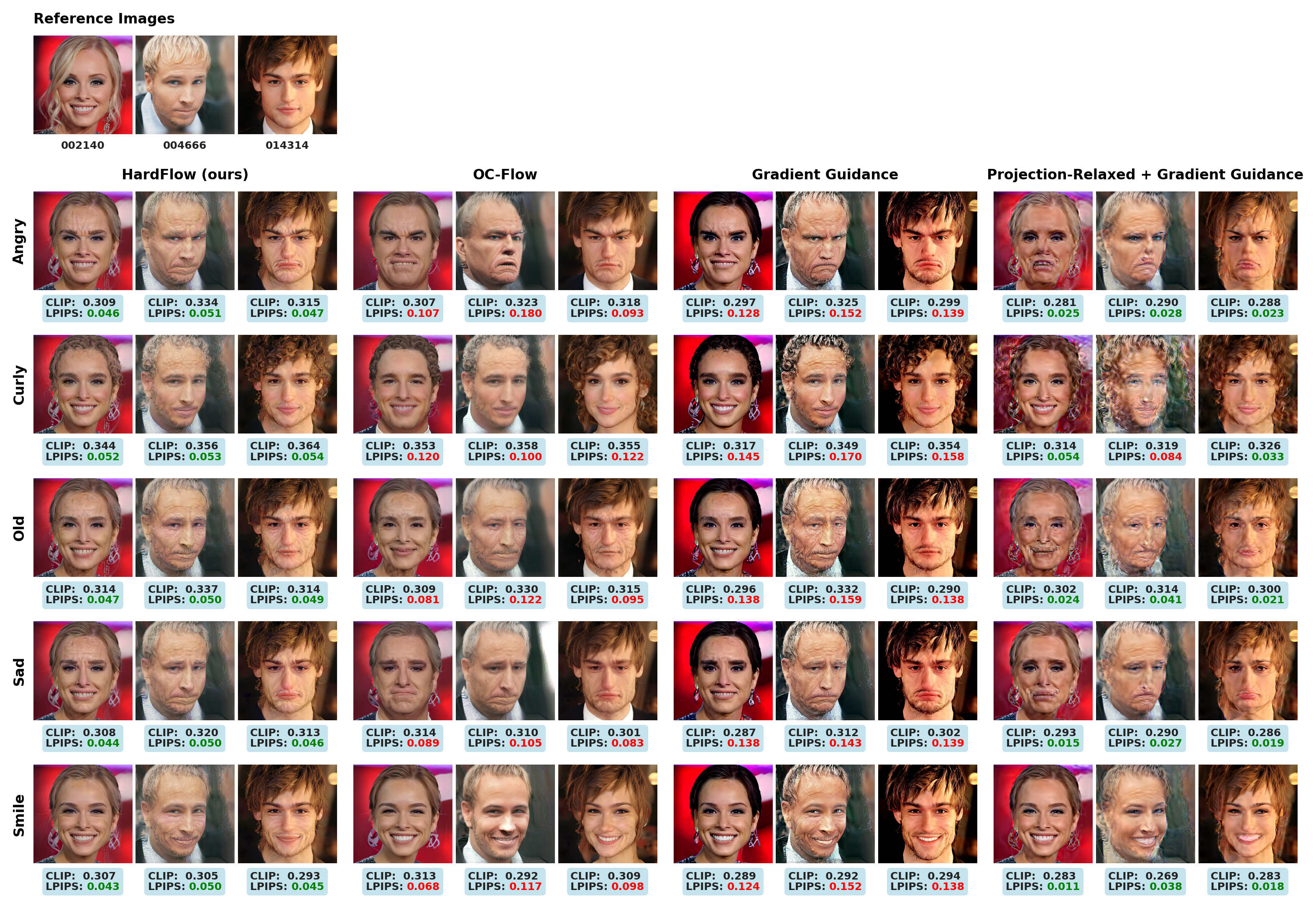}
    \caption{Qualitative comparison of different methods on the text-guided image editing task. The three reference images are shown in the top-left, with their CelebA-HQ numbers displayed beneath each. Below each edited image, CLIP and LPIPS are shown. LPIPS values that satisfy the constraint are displayed in green, and violations are displayed in red.}
    \label{image_comparison}
\end{figure*}

\begin{revisedblock}

\section{Limitations and Future Work}

In this section, we highlight several natural directions for future development.

First, our work focuses on hard-constrained sampling for pretrained flow-matching models in a training-free setting, i.e., without updating the model parameters. This setting is well motivated and well established in the literature, and training-free guidance is attractive because of its simplicity, generality, and ease of integration with existing pretrained models. A natural next step is to extend the framework to settings in which the model itself can also be updated, for example, through fine-tuning, so that constraint satisfaction and sample quality can be improved in a more adaptive manner.

Second, as a training-free approach, our method introduces additional computation at inference time in exchange for enforcing constraints without modifying the underlying model parameters. In many applications, this trade-off is worthwhile, but it also points to an important future direction: improving efficiency by developing variants that absorb part of the runtime optimization effort into the learning process.

Third, although our experiments already cover a diverse set of application domains and provide broad empirical support for the method, an important direction for future work is to evaluate the framework in even more complex and challenging settings, such as large-scale text-to-image generation, vision-based and contact-rich manipulation, and other demanding real-world applications.
An especially interesting direction for future work is to extend the framework to settings with high-dimensional visual inputs and contact-rich dynamics, where the benefits of hard-constrained sampling may be even more pronounced.

\end{revisedblock}

\section{Conclusion}

In this paper, we addressed the critical challenge of hard-constrained sampling for flow-matching models. We proposed a method that departs from common projection-based techniques by reformulating the sampling process as a trajectory optimization problem. The central idea is to utilize numerical optimal control to guide the generation trajectory so that constraints are met precisely at the final step, thereby avoiding the unnecessary restrictions and quality degradation associated with constraining the entire path. By leveraging the intrinsic structure of flow-matching models in conjunction with techniques from model predictive control, we obtain a tractable and efficient algorithm for an otherwise complex control problem. This control-theoretic perspective offers significant flexibility beyond mere feasibility, enabling the integration of auxiliary objectives that minimize distribution shift and enhance sample quality within a unified framework. The soundness of our approach is supported by theoretical analysis, and its practical efficacy and robustness are confirmed through extensive experiments across diverse domains. Empirically, our technique consistently and substantially outperforms existing methods, delivering superior results in both constraint satisfaction and sample quality. This work opens promising new avenues for applying trajectory optimization techniques, particularly direct methods, to generative models, presenting a powerful and flexible paradigm for controllable generation.

\section*{Acknowledgment}

We sincerely thank Wei-Cheng Huang for insightful discussions on optimal control and trajectory optimization.

\bibliographystyle{IEEEtran}
\bibliography{reference}

\vfill

\clearpage
\appendices

\section{Proofs for Theoretical Results}
\label{app:theory_proofs}

\subsection{Proof of Proposition~\ref{safety_guarantee}}

\begin{proof}
At the final iteration \(i = N-1\), let \(\widehat{x}_N^{*}\) denote a solution to \eqref{final_optimization_formulation}. Then \(h(\widehat{x}_N^{*}) \le 0\). Also note that \(t_N = 1\). The boundary conditions of the affine scheduler \((\alpha_t, \beta_t)\) at \(t = 1\) are \(\alpha_1 = 1\) and \(\beta_1 = 0\).  

Therefore, for \(i = N-1\), we can simplify the update rule in Line~\ref{compute_next_state} of Algorithm~\ref{main_algorithm} as
\begin{equation}
\nonumber
x_N 
= \alpha_1 \widehat{x}_N^{*} 
  + \beta_1 \frac{-\dot{\alpha}_1 \bar{x}_N + \alpha_1 v_1^\theta(\bar{x}_N)}
                 {\alpha_1 \dot{\beta}_1 - \dot{\alpha}_1 \beta_1}
= \widehat{x}_N^{*}.
\end{equation}
Consequently, \(h(x_N) = h(\widehat{x}_N^{*}) \le 0\), which shows that the output sample \(x_N\) satisfies the hard constraints.
\end{proof}

\subsection{Proof of Proposition~\ref{bellman_recursion}}

\begin{proof}
The result follows from Bellman's principle of optimality and standard dynamic programming arguments. See, e.g., \cite{bertsekas2012dynamic} for details.
\end{proof}

\subsection{Proof of Proposition~\ref{bellman_operator_nonexpansive}}

\begin{proof}
Let \(y_x\in \widehat{\mathcal{S}}_{i+1}\) attain the minimum in \((T_i V_{i+1})(x)=\frac{\lambda_{\textup{oc}}}{2 \Delta t_i}\left\|y_x-\Psi_i^{\theta}(x)\right\|_2^2+V_{i+1}(y_x)\). Then
\begin{equation}
\nonumber
T_i \widehat{V}_{i+1}(x) \leq \frac{\lambda_{\textup{oc}}}{2 \Delta t_i}\left\|y_x-\Psi_i^{\theta}(x)\right\|_2^2+\widehat{V}_{i+1}\left(y_x\right).
\end{equation}
Therefore,
\begin{equation}
\nonumber
\begin{aligned}
T_i \widehat{V}_{i+1}(x)-T_i V_{i+1}(x)
&\leq \widehat{V}_{i+1}\left(y_x\right)-V_{i+1}\left(y_x\right) \\
&\leq \sup _{y \in \widehat{\mathcal{S}}_{i+1}}\left(\widehat{V}_{i+1}-V_{i+1}\right)(y).
\end{aligned}
\end{equation}
\end{proof}

\subsection{Proof of Theorem~\ref{mpc_suboptimality_theorem}}

\begin{proof}
Denote $\left\{x_i^{\textup{P3}}\right\}_{i=0}^N$ as the state trajectory under $u^{\textup{P3}}$ starting from $x_0$. Since $u^{\textup{P3}}$ is feasible for Problem~\ref{receding_horizon_problem_original}, it then follows that $x_i^{\textup{P3}}\in \widehat{\mathcal{S}}_i$ for $i=1,2,\cdots,N$. By definition of \(u^{\textup{P3}}\), for \(i=0,1,\cdots,N-1\),
\begin{equation}
\nonumber
u_i^{\textup{P}3}=\underset{u}{\operatorname{argmin}}\left\{l_i(u)+\widehat{V}_{i+1}\left(\Psi_i^{\theta}\left(x_i^{\textup{P3}}, u\right)\right)\right\}.
\end{equation}
Evaluating the Bellman residual at these states, we have
\begin{equation}
\nonumber
\begin{aligned}
r_i\left(x_i^{\textup{P3}}\right)
&=T_{i}\widehat{V}_{i+1}\left(x_i^{\textup{P3}}\right)-\widehat{V}_i\left(x_i^{\textup{P3}}\right) \\
&=l_i\left(u_i^{\textup{P3}}\right)+\widehat{V}_{i+1}\left(x_{i+1}^{\textup{P3}}\right)-\widehat{V}_i\left(x_i^{\textup{P3}}\right).
\end{aligned}
\end{equation}
Summing over \(i=1,2,\cdots,N-1\) yields
\begin{equation}
\nonumber
\begin{aligned}
\sum_{i=1}^{N-1} r_i\left(x_i^{\textup{P3}}\right)
=& \sum_{i=1}^{N-1} l_i\left(u_i^{\textup{P3}}\right)+\sum_{i=1}^{N-1}\left(\widehat{V}_{i+1}\left(x_{i+1}^{\textup{P3}}\right)-\widehat{V}_i\left(x_i^{\textup{P3}}\right)\right) \\
=& \sum_{i=1}^{N-1} l_i\left(u_i^{\textup{P3}}\right)+\widehat{V}_N\left(x_N^{\textup{P3}}\right)-\widehat{V}_1\left(x_1^{\textup{P3}}\right) \\
=& \mathcal{J}\left(x_0, u^{\textup{P3}}\right)-l_0\left(u_0^{\textup{P3}}\right)-\widehat{V}_1\left(x_1^{\textup{P3}}\right).
\end{aligned}
\end{equation}

Since \((T_0 V_1)(x_0)=V_0(x_0)=\mathcal{J}(x_0,u^{\textup{P2}})\) and \(l_0\left(u_0^{\textup{P3}}\right)+\widehat{V}_1\left(x_1^{\textup{P3}}\right)=\left(T_0 \widehat{V}_1\right)\left(x_0\right)\), we have
\begin{equation}
\nonumber
\mathcal{J}(x_0,u^{\textup{P3}})-\mathcal{J}(x_0,u^{\textup{P2}})=\sum_{i=1}^{N-1} r_i\left(x_i^{\textup{P3}}\right)+ (T_0 \widehat{V}_1 - T_0 V_1)(x_0).
\end{equation}

By Proposition~\ref{bellman_operator_nonexpansive},
\begin{equation}
\nonumber
(T_0 \widehat{V}_1 - T_0 V_1)(x_0) \le \sup_{y\in \widehat{\mathcal{S}}_1} (\widehat{V}_1-V_1)(y).
\end{equation}

For any \(i=1,2,\cdots,N-1\) and \(x\in \widehat{\mathcal{S}}_i\), we have
\begin{equation}
\nonumber
\widehat{V}_i(x)-V_i(x)=T_i \widehat{V}_{i+1}(x)-T_i V_{i+1}(x)- r_i(x).
\end{equation}
Taking supremum over \(x\in \widehat{\mathcal{S}}_i\), we obtain
\begin{equation}
\nonumber
\begin{aligned}
\sup_{x\in \widehat{\mathcal{S}}_i}(\widehat{V}_i-V_i)(x)
& \leq \sup _{x\in \widehat{\mathcal{S}}_i}(T_i \widehat{V}_{i+1}-T_i V_{i+1})(x)+\left\|r_i\right\|_{\infty, \widehat{\mathcal{S}}_i} \\
& \leq \sup _{y \in \widehat{\mathcal{S}}_{i+1}}\left(\widehat{V}_{i+1}-V_{i+1}\right)(y)+\left\|r_i\right\|_{\infty, \widehat{\mathcal{S}}_i},
\end{aligned}
\end{equation}
where the second inequality follows from Proposition~\ref{bellman_operator_nonexpansive}. We use the notation \(\|f\|_{\infty, \mathcal{X}}=\sup_{x\in \mathcal{X}} |f(x)|\) for any function \(f\) and set \(\mathcal{X}\). Define
\begin{equation}
\nonumber
E_i\coloneqq\sup_{x\in \widehat{\mathcal{S}}_i}(\widehat{V}_i-V_i)(x), \qquad i=1,2,\cdots,N.
\end{equation}
Then we have the recursion
\begin{equation}
\nonumber
E_i \leq E_{i+1}+\left\|r_i\right\|_{\infty, \widehat{\mathcal{S}}_i}, \qquad i=1,2,\cdots,N-1,
\end{equation}
and \(E_N=0\) since \(\widehat{V}_N=V_N\). Unrolling the recursion yields
\begin{equation}
\nonumber
E_1 \leq \sum_{i=1}^{N-1}\left\|r_i\right\|_{\infty, \widehat{\mathcal{S}}_i}.
\end{equation}
Therefore,
\begin{equation}
\nonumber
\begin{aligned}
(T_0 \widehat{V}_1 - T_0 V_1)(x_0)
& \leq \sup_{y\in \widehat{\mathcal{S}}_1} (\widehat{V}_1-V_1)(y) \\
& = E_1 \leq \sum_{i=1}^{N-1}\left\|r_i\right\|_{\infty, \widehat{\mathcal{S}}_i}.
\end{aligned}
\end{equation}
Combining the above results, we arrive at
\begin{equation}
\nonumber
\begin{aligned}
\mathcal{J}(x_0,u^{\textup{P3}})-\mathcal{J}(x_0,u^{\textup{P2}})
& \leq \sum_{i=1}^{N-1}\left(\left\|r_i\right\|_{\infty, \widehat{\mathcal{S}}_i}+r_i\left(x_i^{\textup{P3}}\right)\right) \\
& \leq 2 \sum_{i=1}^{N-1}\left\|r_i\right\|_{\infty, \widehat{\mathcal{S}}_i}.
\end{aligned}
\end{equation}

It remains to bound the Bellman residuals. Expanding the expression for \(r_i(x)\) gives
\begin{equation}
\nonumber
r_i(x)=\min _u\left\{l_i(u)+\widehat{V}_{i+1}\left(\Psi_i^{\theta}(x, u)\right)\right\}-\widehat{V}_i(x).
\end{equation}
For \(x\in \widehat{\mathcal{S}}_i\), pick a feasible \(u\) with \(\|u\|_2\le \kappa_i\). We have
\begin{equation}
\nonumber
\begin{aligned}
r_i(x)
& \leq l_i(u)+C\left(\mathcal{M}_{t_{i+1}}^\theta\left(\Psi_i^\theta(x, u)\right)\right)-C\left(\mathcal{M}_{t_i}^\theta(x)\right) \\
& \leq \frac{\lambda_{\textup{oc}}}{2} \kappa_i^2 \Delta t_i+L_C\left(\varepsilon_i+L_{\mathcal{M}_x,i+1}\|u\|_2 \Delta t_i\right) \\
& \leq L_C \varepsilon_i+\left(\frac{\lambda_{\textup{oc}}}{2} \kappa_i^2+L_C L_{\mathcal{M}_x,i+1} \kappa_i\right) \Delta t_i.
\end{aligned}
\end{equation}
For a lower bound, we use
\begin{equation}
\nonumber
\begin{aligned}
C\left(\mathcal{M}_{t_{i+1}}^\theta\left(\Psi_i^\theta(x, u)\right)\right) &\geq C\left(\mathcal{M}_{t_{i+1}}^\theta\left(\Psi_i^\theta(x)\right)\right)\\&\quad -L_C L_{\mathcal{M}_x,i+1}\|u\|_2 \Delta t_i.
\end{aligned}
\end{equation}
Thus,
\begin{equation}
\nonumber
\begin{aligned}
r_i(x)
&\geq \min _u\left\{\frac{\lambda_{\textup{oc}}}{2}\|u\|_2^2 \Delta t_i-L_C L_{\mathcal{M}_x,i+1}\|u\|_2 \Delta t_i\right\}-L_C \varepsilon_i \\
&=-\frac{L_C^2 L_{\mathcal{M}_x,i+1}^2}{2 \lambda_{\textup{oc}}} \Delta t_i-L_C \varepsilon_i.
\end{aligned}
\end{equation}
Combining both sides gives
\begin{equation}
\nonumber
\left\|r_i\right\|_{\infty, \widehat{\mathcal{S}}_i} \leq L_C \varepsilon_i+\Gamma_i \Delta t_i.
\end{equation}
Therefore,
\begin{equation}
\nonumber
0 \leq \mathcal{J}\left(x_0,u^{\textup{P3}}\right)-\mathcal{J}\left(x_0,u^{\textup{P2}}\right) \leq 2 \sum_{i=1}^{N-1}\left(L_C \varepsilon_i+\Gamma_i \Delta t_i\right),
\end{equation}
which proves \eqref{mpc_suboptimality_bound}.
\end{proof}

\subsection{Proof of Theorem~\ref{equivalence_theorem}}

\begin{proof}
The equivalence between Problem~\ref{receding_horizon_problem_original} and Problem~\ref{receding_horizon_problem_simplified} follows directly from the control-to-state change of variables used to derive Problem~\ref{receding_horizon_problem_simplified}. The equivalence between Problem~\ref{receding_horizon_problem_simplified} and Problem~\ref{receding_horizon_problem_reversed} follows from the invertible reparameterization \(\widehat{x}_N=\mathcal{M}_{t_{i+1}}^\theta(x_{i+1})\) used to derive Problem~\ref{receding_horizon_problem_reversed}. These transformations preserve the objective, the feasible set, and the induced update at each step. Therefore, all three formulations produce the same control trajectory \(\{u_i\}_{i=0}^{N-1}\) and state trajectory \(\{x_i\}_{i=0}^N\).
\end{proof}

\subsection{Proof of Theorem~\ref{fixed_point_error_theorem}}

\begin{proof}
\(C(y)\) cancels, so only the quadratic terms differ. Recall \eqref{fixed_point_operator} and \eqref{fixed_point_inverse}. Set \(x^*(y)=\left(\mathcal{M}_{t_{i+1}}^\theta\right)^{-1}(y)\) and \(x^{(0)}=\bar{x}_{i+1}\). Taking one-step fixed-point iteration, we have
\begin{equation}
\nonumber
\begin{aligned}
x^{(1)}(y)&=T_{t_{i+1}}^y \left(x^{(0)}\right)=\alpha_{t_{i+1}} y+\beta_{t_{i+1}} \mathcal{W}_{t_{i+1}}^\theta\left(\bar{x}_{i+1}\right)\\&=\mathcal{F}_{t_{i+1}}^\theta(y).
\end{aligned}
\end{equation}
Since by assumption \(\left|\beta_{t_{i+1}}\right| L_{\mathcal{W}_x,i+1}<1\), the operator \(T_{t_{i+1}}^y(\cdot)\) is a contraction with factor \(r=\left|\beta_{t_{i+1}}\right| L_{\mathcal{W}_x,i+1}\). Hence
\begin{equation}
\nonumber
\begin{aligned}
&\|x^{(1)}(y)-x^*(y)\|_2
=\left\|T_{t_{i+1}}^y\left(x^{(0)}\right)-T_{t_{i+1}}^y\left(x^*\right)\right\|_2 \\
&\le r \|x^{(0)}-x^*(y)\|_2 \\
&\le r\left(\left\|x^{(0)}-x^{(1)}(y)\right\|_2+\left\|x^{(1)}(y)-x^*(y)\right\|_2\right).
\end{aligned}
\end{equation}
Therefore,
\begin{equation}
\nonumber
\|x^{(1)}(y)-x^*(y)\|_2 \le \frac{r}{1-r} \|x^{(0)}-x^{(1)}(y)\|_2.
\end{equation}

Using \eqref{posterior_identity} at \(\bar{x}_{i+1}\), we have \(x^{(1)}(y)-x^{(0)}=\alpha_{t_{i+1}}(y-\bar{y})\), where \(\bar{y}=\mathcal{M}_{t_{i+1}}^{\theta}(\bar{x}_{i+1})\). Let \(a=x^{*}(y)-x^{(0)}\), \(b=x^{(1)}(y)-x^{(0)}\), and \(c=b-a=x^{(1)}(y)-x^{*}(y)\). Then
\begin{equation}
\nonumber
\begin{aligned}
\left|\|a\|_2^2-\|b\|_2^2\right|
&=\left|\|b-c\|_2^2-\|b\|_2^2\right| \\
&=\left|-2\langle b, c\rangle+\|c\|_2^2\right| \\
&\leq 2\|b\|_2\|c\|_2+\|c\|_2^2.
\end{aligned}
\end{equation}
With \(\|b\|_2=\left|\alpha_{t_{i+1}}\right|\|y-\bar{y}\|_2\) and \(\|c\|_2\le \frac{r}{1-r}\|b\|_2\), we obtain
\begin{equation}
\nonumber
\begin{aligned}
\left|\|a\|_2^2-\|b\|_2^2\right|
&\leq \alpha_{t_{i+1}}^2\|y-\bar{y}\|_2^2\left(\frac{2 r}{1-r}+\frac{r^2}{(1-r)^2}\right) \\
&=\alpha_{t_{i+1}}^2\|y-\bar{y}\|_2^2 \frac{2 r-r^2}{(1-r)^2}.
\end{aligned}
\end{equation}
Multiplying by \(\frac{\lambda_{\textup{oc}}}{2\Delta t_i}\) yields \eqref{fixed_point_objective_bound} as claimed.
\end{proof}

\subsection{Proof for Remark~\ref{continuous_time_limit}}

\begin{proof}
Assume in addition that $t\mapsto \mathcal M_{t}^\theta(x)$ is $L_{\mathcal{M}_t,i}$-Lipschitz on the region of interest (uniformly in $x$), and that the velocity is bounded on \(\widehat{\mathcal{S}}_i\). Then
\begin{equation}
\nonumber
\begin{aligned}
\varepsilon_i
=& \sup _{x \in \widehat{\mathcal{S}}_i}\left\|\mathcal{M}_{t_{i+1}}^\theta\left(\Psi_i^\theta(x)\right)-\mathcal{M}_{t_i}^\theta(x)\right\|_2 \\
\leq& \sup _{x \in \widehat{\mathcal{S}}_i}\left\|\mathcal{M}_{t_{i+1}}^\theta\left(\Psi_i^\theta(x)\right)-\mathcal{M}_{t_{i+1}}^\theta(x)\right\|_2 \\
& +\sup _{x \in \widehat{\mathcal{S}}_i}\left\|\mathcal{M}_{t_{i+1}}^\theta(x)-\mathcal{M}_{t_i}^\theta(x)\right\|_2 \\
\leq& \left(L_{\mathcal{M}_x,i+1}\sup _{x \in \widehat{\mathcal{S}}_i}\left\|v_{t_i}^\theta(x)\right\|_2+L_{\mathcal{M}_t,i}\right)\Delta t_i \\
=& \mathcal{O}\left(\Delta t_i\right).
\end{aligned}
\end{equation}
\end{proof}

\begin{revisedblock}

\section{Methodological Discussion}
\label{app:methodological_discussion}

\subsection{Minimal-Intervention Principle}
\label{app:minimal_intervention}

A central design principle behind \algname\ is \emph{minimal intervention}. The pretrained flow-matching model encodes a useful prior over the data distribution, including structural and behavioral properties that are implicitly captured by the training dataset. Since the goal of generative modeling is to sample from this underlying distribution, any intervention introduced by our method should be as limited as possible, and should modify the output only when truly necessary, for example, to satisfy hard constraints or improve the task objective.

Therefore, given the cost \(C(x)\) and constraint \(h(x)\le 0\), our goal is not merely to solve an abstract constrained optimization problem over samples. Rather, we aim to generate samples that remain faithful to the training data distribution while also satisfying downstream requirements. In many applications, the explicit cost and constraint functions do not fully specify what makes a sample desirable. The pretrained model, therefore, provides important additional priors that should be preserved whenever possible. In robotics, such priors include demonstration-consistent behavior and physical plausibility. In image editing, they include identity, pose, lighting, texture, and other visual attributes. This is the reason we introduce control regularization, even though the control here does not correspond to an energy-based actuation cost in a real-world physical system. Without a regularization term that keeps the adjusted distribution close to the nominal one, the optimizer may exploit loopholes in the cost and constraint specification, producing spurious samples that are feasible and low-cost, yet nevertheless unrealistic or undesirable.

This interpretation becomes explicit after the control-to-state change of variables. Let $\bar{x}_{i+1}=x_i+v_{t_i}^{\theta}(x_i)\Delta t_i$, which denotes the nominal next state obtained by following the pretrained model without any intervention. The actual next state is then obtained by adding the control input to this nominal update: $x_{i+1}=\bar{x}_{i+1}+u_i\Delta t_i$.
Then we obtain
\begin{equation}
\nonumber
\frac{\lambda_{\textup{oc}}}{2}\lVert u_i\rVert_2^2\Delta t_i
=
\frac{\lambda_{\textup{oc}}}{2\Delta t_i}\lVert x_{i+1}-\bar{x}_{i+1}\rVert_2^2 .
\end{equation}
Therefore, penalizing large control effort is exactly equivalent to penalizing large deviations from the nominal sampler at each step.

Empirically, there is a trade-off between steering aggressively toward a better task objective and remaining close to the nominal generative trajectory. If the regularization is too small, the optimization tends to over-exploit the explicit cost and constraint specification, which can lead to spurious samples and degraded performance. If the regularization is too large, the method becomes conservative and may not improve the task objective sufficiently. In practice, we find it particularly important to avoid setting the regularization too small, since over-exploitation is often more detrimental than mild conservatism.

Finally, we note that our proposed framework is general and naturally subsumes several important special cases. If no task cost is needed, one may set $C(x)=0$, yielding a pure minimal-intervention safety correction: the sampler remains unchanged unless intervention is required to satisfy the constraint. If no constraint is needed, one may simply remove $h(x)\leq 0$, which yields reward optimization regularized toward the nominal sampler. If the cost and constraint are believed to provide a sufficiently complete specification, and over-optimization is not a concern, one may set $\lambda_{\textup{oc}}=0$, thereby recovering an unregularized constrained optimization formulation.

\subsection{One-Step Decomposition}
\label{app:one_step_horizon}

The one-step receding-horizon formulation is a deliberate design choice. The original full-horizon problem is challenging not only because of the number of variables, but also because terminal constraints must be propagated backward through the neural dynamics, which induces an extremely complicated feasible set. A longer MPC horizon would only partially reduce the computational burden while largely preserving this coupling difficulty.

More importantly, the subsequent transformations that make \algname\ practical---namely, the control-to-state change of variables, reverse reparameterization, and one-step fixed-point approximation---rely on the additive one-step structure
\[
x_{i+1}=x_i+v_{t_i}^{\theta}(x_i)\Delta t_i+u_i\Delta t_i.
\]
A multi-step horizon would reintroduce coupling across multiple steps and break this tractable structure. In this sense, the one-step horizon is precisely what enables optimization to be carried out directly in terminal-state space while avoiding the distorted preimage induced by the neural dynamics. Although this decomposition is local at each step, the resulting algorithm is not purely myopic, since each subproblem still evaluates proxies for the terminal cost and constraint through the posterior estimate \(\mathcal{M}_t^\theta(x)\approx \mathbb{E}[X_1\mid X_t=x]\).

\subsection{Feasibility, Stability, and Efficiency}
\label{app:feasibility_stability_efficiency}

Regarding feasibility, we would like to emphasize two aspects. First, although we use the posterior estimate \(\mathcal{M}_t^\theta(x)\) to construct proxies for the terminal cost and constraint, the actual algorithm does not require this estimator to be accurate in order to produce feasible samples. At the final time step \(t_N\), the posterior estimator becomes exact unconditionally, i.e., \(\mathcal{M}_{t_N}^{\theta}(x_N)=x_N\), so the constraint is enforced exactly at the terminal state. This property is formalized in Proposition~\ref{safety_guarantee}. Second, \algname\ is designed specifically to avoid the distorted feasible sets that arise when directly solving the full-horizon optimal control problem, or when optimizing directly over \(x_{i+1}\) subject to the pulled-back constraint \(h(\mathcal{M}_{t_{i+1}}^\theta(x_{i+1}))\le 0\). Through reverse reparameterization and a one-step fixed-point approximation, the optimization is instead carried out directly over the predicted terminal state \(\widehat{x}_N\), so the constraint is imposed in the natural terminal-sample space. In practice, this is much easier to handle numerically, especially when the original terminal constraint is simple. This formulation also isolates the constrained optimization subproblem, allowing it to be solved using off-the-shelf solvers without requiring custom modifications to handle the neural dynamics. If the constraint or cost has additional structure, one may further design custom solvers that exploit this structure to improve efficiency. Additionally, the nominal terminal prediction \(\bar{x}_N\) provides a natural warm start for optimization.

That said, the constrained sampling problem can still become challenging in pathological regimes. Performance may degrade when the pretrained flow-matching model is poor, so that the posterior estimate \(\mathcal{M}_{t}^\theta(\cdot)\) becomes unstable, or when the feasible set is extremely irregular. Such regimes are difficult for essentially any optimization-based approach. Empirically, however, we find that \algname\ is robust across all tasks in our experiments, all of which involve highly nonlinear and nonconvex constraints. This suggests that our method is effective in practice for reasonably challenging problems.

In terms of efficiency, solving a constrained optimization problem at each sampling step can sometimes be time-consuming. However, this is consistent with the intended positioning of our algorithm. \algname\ is a training-free, plug-and-play method for pretrained models, and the additional computation is inherent to such approaches, since constraint enforcement is carried out at inference time rather than absorbed into model training. An alternative would be to fine-tune the model to satisfy the constraints, but this can be substantially more expensive and may introduce additional stability issues during training. Empirically, we find that it is not necessary to solve the constrained optimization problem at every sampling step. In the early stages of sampling, both the posterior estimator and the fixed-point approximation are less accurate, so optimization at those stages is often unnecessary. Instead, we can skip the early steps and activate constrained optimization only in the later stages, which can achieve a good balance between efficiency and performance.

\end{revisedblock}

\section{Experiment Details}
\label{app:experiment_details}

In this section, we provide a detailed description of the experimental settings and implementation choices for all four tasks.

\subsection {Robotic Manipulation}

The flow-matching model is a temporal U-Net following \cite{janner2022planning}, trained on the D3IL dataset \cite{jia2024towards} with training details as in \cite{feng2025on}. The trajectory horizon is \(H=16\) and the replanning horizon is \(T=8\). Obstacle-avoidance constraints for the red pillars and purple regions follow \cite{romer2025diffusion}. The dynamics constraints are \(s_{i+1} = A s_i + B a_i + c\) for \(i=0,\ldots,H-2\), where \(A\in\mathbb{R}^{4\times 4}\), \(B\in\mathbb{R}^{4\times 2}\), and \(c\in\mathbb{R}^{4}\) are fitted via least squares on the training data. The cost \(C(\cdot)\) is the squared distance from \(s_{H-1}\) to the target region. During sampling, we use \(N=10\) discretization steps for all methods. OC-Flow is implemented following \cite{wang2025training}. Given a cost function \(\widehat{C}(\cdot)\), at sampling step \(i\) Gradient Guidance updates the sample \(x\) using the gradient \(\nabla_x \widehat{C}(x + (1 - t_i)\,v_{t_i}^\theta(x))\). We use \(\widehat{C}(\cdot) = C(\cdot) + C_{\textup{penalty}}(\cdot)\) for Gradient Guidance and OC-Flow, where \(C_{\textup{penalty}}(\cdot)\) applies quadratic penalties to constraint violations. In Projection-All, we project at every sampling step; in Projection-Late, we project during the second half of the steps; and in Projection-Relaxed, the augmented Lagrangian implementation follows \cite{zhang2025constrained}. When combining projection-based methods with Gradient Guidance, the guidance cost is the original \(C(\cdot)\) (without penalties), since constraint enforcement is handled by projection. In \algname, we solve \eqref{final_optimization_formulation} only during the second half of the sampling steps, since the posterior estimates are less accurate early in sampling (see Remark~\ref{practical_heuristic}). All projection operations in related baselines and the optimization problem \eqref{final_optimization_formulation} in \algname\ are solved using IPOPT with default settings.

\subsection{Maze Navigation}

The trajectory horizon is \(H=384\). Obstacle-avoidance constraints for the red regions follow \cite{xiao2025safediffuser}. The cost \(C(x)\) is the (squared) total length of the trajectory. The flow-matching model is trained on the D4RL maze2d-large-v1 dataset \cite{fu2020d4rl}. All other implementation details are identical to those in the robotic manipulation task.

\subsection{PDE Control}

The flow-matching model is a temporal U-Net, with architecture and dataset following \cite{hu2025from}. The temporal domain is \(t \in [0,1]\) and the spatial domain is \(s \in [0,1]\) (i.e., \(T=L=1\)). The spatiotemporal grid uses \(m=10\) time steps and \(n=128\) spatial points. The time-varying state constraints are defined as \(|u(t,s)| \leq 0.8\cdot(2t^2-2t+1)\). The dynamics constraints come from a finite-difference discretization of the Burgers' PDE, with viscosity \(\nu\) treated as uncertain within \([\nu_{\textup{min}},\nu_{\textup{max}}]=[0.0, 0.02]\). The cost \(C(\cdot)\) is the total control energy, i.e., the sum of squared control values over all \((t,s)\) grid points. All other implementation details are identical to those in the robotic manipulation task.

\subsection{Text-Guided Image Editing}

We adopt the full pipeline of \cite{wang2025training}, including the flow-matching model, the CLIP and LPIPS models, and image preprocessing.  Let \(f(x)=\textup{CLIP}(x)\) and \(g(x)=\textup{LPIPS}(x, x_{\textup{ref}})\), where \(x_{\textup{ref}}\) is the input image. The cost is \(C(x)=-f(x)\). The constraint is \(g(x) \leq \epsilon\), where \(\epsilon=0.06\) denotes the LPIPS upper bound. During sampling, we use \(N=100\) discretization steps for all methods. For \algname, we solve \eqref{final_optimization_formulation} with an augmented Lagrangian method. At sampling step \(i\), the augmented Lagrangian is \(\mathcal{L}_i(x, \lambda, \rho) = -f(x) + f_{\textup{reg}}(x) - \lambda (g(x)-\epsilon) - \frac{\rho}{2} (\max\left\{0, g(x)-\epsilon\right\})^2\), where \(f_{\textup{reg}}(x)\) denotes the regularization term corresponding to \(\alpha_{t_{i+1}}^2\left\|\widehat{x}_N-\bar{x}_N\right\|_2^2\) in \eqref{final_optimization_formulation}. Since posterior estimates are less accurate in early steps (see Remark~\ref{practical_heuristic}), we omit the LPIPS constraint during the first half of the sampling steps (i.e., forcing \(\mathcal{L}_i = -f(x)+ f_{\textup{reg}}(x)\) for \(i < N/2\)). We run 40 gradient-ascent steps on \(\mathcal{L}_i\) per sampling step \(i\) with the same learning rate \(\eta=2.5\) as OC-Flow, during which \(\lambda\) and \(\rho\) are updated every 8 steps. For OC-Flow and Gradient Guidance, we use the guidance cost \(\widehat{C}(x) = -f(x) + (\max\left\{0, g(x)-\epsilon\right\})^2\), which encodes the LPIPS constraint as a fixed-weight penalty. For Projection-Relaxed + Gradient Guidance, we follow \cite{zhang2025constrained} to enforce the LPIPS constraint \(g(x) \leq \epsilon\), while using \(-f(x)\) as the guidance cost.

\begin{revisedblock}

\section{Additional Visualizations}
\label{app:steering_visualizations}

To better demonstrate how our method iteratively steers the sampling trajectory toward a feasible, low-cost sample, we provide additional visualizations of the generation process for \algname\ across all four tasks, as shown in Fig.~\ref{fig:app_visual_maze}, Fig.~\ref{fig:app_visual_d3il}, Fig.~\ref{fig:app_visual_pde}, and Fig.~\ref{fig:app_visual_image}.

In all figures below, the first row shows the actual intermediate samples \(x_t\) generated by \algname, while the second row shows the predicted terminal samples \(\mathcal{M}_{t}^{\theta}(x_t)\approx \mathbb{E}[X_1\mid X_t=x_t]\). Each column corresponds to a different sampling time step. These visualizations illustrate the central innovation of \algname: the constraints are imposed on the predicted terminal sample rather than directly on the noisy intermediate state.

\begin{figure*}[htbp]
    \centering
    \includegraphics[width=\textwidth]{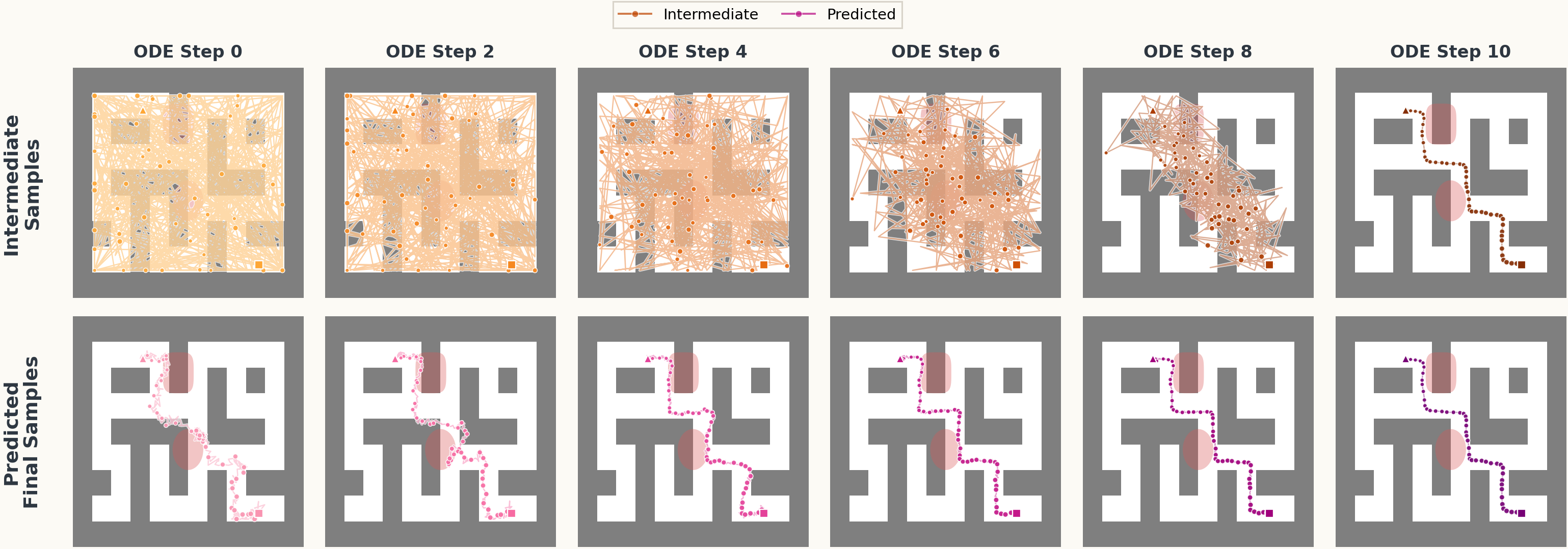}
    \caption{\revised{Visualization of the generation process from \algname\ in the maze navigation task. The planned trajectory contains \(384\) points; for readability, we visualize a downsampled trajectory with \(50\) points.}}
    \label{fig:app_visual_maze}
\end{figure*}

\begin{figure*}[htbp]
    \centering
    \includegraphics[width=\textwidth]{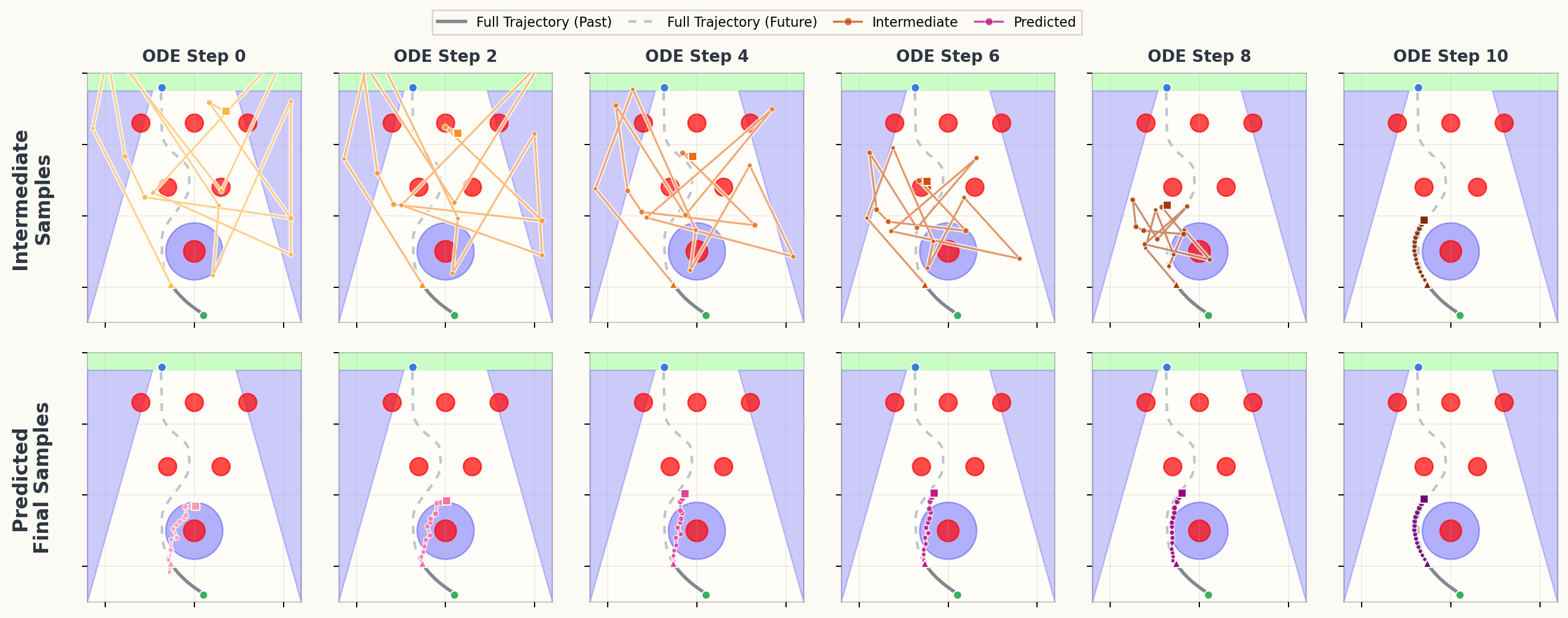}
    \caption{\revised{Visualization of the generation process from \algname\ in the robotic manipulation task. Since the policy replans in a receding-horizon manner, we show one representative planning instance during execution.}}
    \label{fig:app_visual_d3il}
\end{figure*}

\begin{figure*}[htbp]
    \centering
    \includegraphics[width=\textwidth]{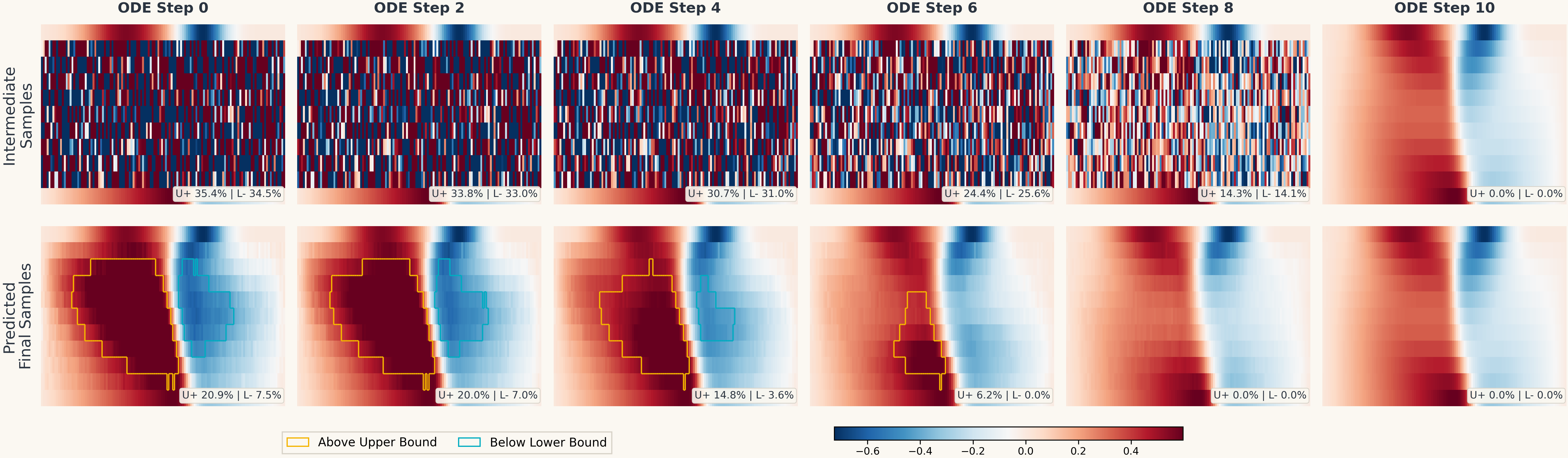}
    \caption{\revised{Visualization of the generation process from HardFlow in the PDE control task. A panel of the form $U+ 20.9\% \mid L- 7.5\%$ indicates that 20.9\% of the spatiotemporal grids exceed the upper constraint bound, while 7.5\% fall below the lower bound. For the predicted terminal samples, we additionally annotate the image with markers indicating the locations of these violations.}}
    \label{fig:app_visual_pde}
\end{figure*}

\begin{figure*}[htbp]
    \centering
    \includegraphics[width=\textwidth]{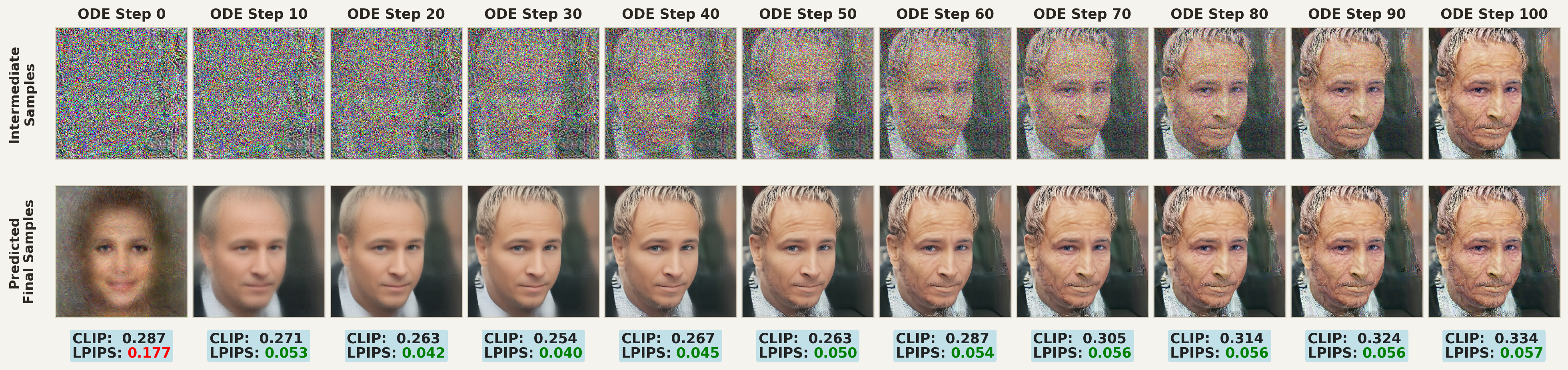}
    \caption{\revised{Visualization of the generation process from \algname\ in the text-guided image editing task. Below each predicted terminal sample, we report CLIP and LPIPS. LPIPS values that satisfy the hard constraint are displayed in green, and violations are displayed in red.}}
    \label{fig:app_visual_image}
\end{figure*}

\section{Additional Baselines}
\label{app:additional_baselines}

\subsection{Comparison with Post-Processing Methods}
\label{app:posthoc_baselines}

We additionally compare against two post-processing baselines on the robotic manipulation task.

\noindent\textbf{Post-Hoc Filtering.}
We sample a batch of \(64\) trajectories from the nominal flow-matching model, discard the infeasible ones, and then select the feasible sample with the lowest cost. This baseline tests whether it is sufficient to rely on the pretrained model to generate feasible samples by chance and then simply choose the best one among them.

\noindent\textbf{Post-Hoc Optimization.}
After sampling from the pretrained model, we take the terminal sample \(x_N\) as the initialization for the following optimization problem:
\begin{equation}
\nonumber
\begin{aligned}
	\min_{x_N} \quad & C(x_N)\\
	\text{s.t.} \quad & h(x_N) \leq 0,
\end{aligned}
\end{equation}
This baseline tests whether it is sufficient to modify only the final sample, without preserving the implicit prior encoded by the pretrained model.

The results are shown in Table~\ref{tab:app_posthoc_baselines}. Both post-processing baselines perform substantially worse than \algname. Post-hoc filtering depends on the nominal sampler producing feasible trajectories by chance, which is very difficult when the constraints are not considered in the training dataset. The post-hoc optimization baseline starts from the output of the nominal sampler, but then optimizes only the explicit cost and constraints, without preserving the pretrained model prior during the optimization process. As a result, it tends to over-exploit aspects of the cost and constraint formulation that do not fully capture the underlying structure of the data (e.g., physical consistency), thereby producing spurious samples. In particular, although the planned trajectories satisfy the constraints, the robot cannot reliably execute them because of their physical inconsistency, which in turn leads to a low realized safety rate.

\begin{table*}[htbp]
    \centering
    \ifmarked\color{blue}\fi
    \caption{\revised{Comparison with post-processing baselines on the robotic manipulation task. Values of the form \(a\pm b\) indicate mean \(\pm\) standard deviation.}}
    \label{tab:app_posthoc_baselines}
    \setlength{\tabcolsep}{6pt}
    \begin{tabular}{l c c c}
        \toprule
        \textbf{Method} & \textbf{Safety Rate} & \textbf{Total Steps (Safe Trials)} & \textbf{Computation Time (s)} \\
        \midrule
        \algname\ (ours) & 1.00 & 52.5 $\pm$ 4.4 & 0.190 $\pm$ 0.023 \\
        Post-hoc optimization & 0.04 & 57.5 $\pm$ 2.1 & 0.054 $\pm$ 0.015 \\
        Post-hoc filtering & 0.50 & 63.2 $\pm$ 5.8 & 0.059 $\pm$ 0.005 \\
        \bottomrule
    \end{tabular}
\end{table*}

\subsection{Comparison with SafeFlowMatcher}
\label{app:safeflowmatcher}

We also compare against a concurrent work, SafeFlowMatcher \cite{yang2025safeflowmatcher}.
Conceptually, SafeFlowMatcher and our method share a similar motivation: both avoid enforcing constraints on intermediate noisy states, which can overly restrict the sampling path and thereby harm sample quality. In our method, we adopt a trajectory optimization perspective by formulating constrained sampling as a constrained optimal control problem, with the constraints imposed at the terminal time step. SafeFlowMatcher, by contrast, uses a two-phase prediction-correction scheme. In the prediction phase, it first generates a coarse sample from the pretrained model, for example, by using only a small number of time steps. In the correction phase, it then defines a vanishing time-scaled flow dynamics and applies a CBF-based correction to steer this dynamics toward the safe set.

Compared with SafeFlowMatcher, our method offers several advantages. First, we propose a general hard-constrained optimization framework that applies to a broader range of constrained generation problems beyond safe planning for robotics, as demonstrated by our experiments on PDE control and text-guided image editing. Second, our method incorporates both hard constraints and auxiliary costs within a unified optimization framework, whereas SafeFlowMatcher focuses primarily on enforcing safety constraints and may therefore lead to suboptimal task performance when additional objectives are important. Third, the CBF-based correction in SafeFlowMatcher requires careful tuning of the CBF parameters and the relaxation term, and may be less well suited to handling complex or equality constraints. In contrast, our method decouples the constrained optimization from the neural network dynamics through a sequence of principled problem transformations, which allows complex constraints to be handled effectively using off-the-shelf optimization solvers. Fourth, the vanishing time-scaled flow dynamics together with the CBF-based correction generally require fine-grained time discretization to ensure stability and feasibility, which can increase computational cost.

Empirically, we have implemented SafeFlowMatcher and evaluated it on the two robotics benchmarks in our paper: maze navigation and robotic manipulation, which are the tasks aligned with the scope of SafeFlowMatcher. The results are summarized in Tables~\ref{tab:safe_flow_matcher_maze} and~\ref{tab:safe_flow_matcher_d3il}. In the maze navigation task, both methods achieve a perfect safety rate. HardFlow attains a slightly higher task score, as well as a lower computation time. In the robotic manipulation task, HardFlow achieves perfect safety and completes the task efficiently, whereas SafeFlowMatcher fails to provide safe solutions. This is because the manipulation task involves multiple nonconvex obstacles, resulting in a highly complex feasible set that is difficult for the CBF-based correction to handle effectively.

\begin{table*}[htbp]
    \centering
    \ifmarked\color{blue}\fi
    \caption{\revised{Comparison with SafeFlowMatcher on the maze navigation task. Values of the form \(a\pm b\) indicate mean \(\pm\) standard deviation.}}
    \setlength{\tabcolsep}{6pt}
    \begin{tabular}{l c c c c}
        \toprule
        \textbf{Method} & \textbf{Safety Rate} & \textbf{Violations} & \textbf{Score} & \textbf{Computation Time (s)} \\
        \midrule
        HardFlow & 1.00 & 0.0 $\pm$ 0.0 & 1.620 $\pm$ 0.010 & 4.09 $\pm$ 0.82  \\
        SafeFlowMatcher & 1.00 & 0.0 $\pm$ 0.0 & 1.60 $\pm$ 0.00 & 7.61 $\pm$ 0.19 \\
        \bottomrule
    \end{tabular}
    \label{tab:safe_flow_matcher_maze}
\end{table*}

\begin{table*}[htbp]
    \ifmarked\color{blue}\fi
    \centering
    \caption{\revised{Comparison with SafeFlowMatcher on the robotic manipulation task. Values of the form \(a\pm b\) indicate mean \(\pm\) standard deviation.}}
    \setlength{\tabcolsep}{6pt}
    \begin{tabular}{l c c c}
        \toprule
        \textbf{Method} & \textbf{Safety Rate} & \textbf{Total Steps (Safe Trials)} & \textbf{Computation Time (s)} \\
        \midrule
        HardFlow & 1.00 & 52.5 $\pm$ 4.4 & 0.190 $\pm$ 0.023 \\
        SafeFlowMatcher & 0.00 & N/A & 6.70 $\pm$ 0.32 \\
        \bottomrule
    \end{tabular}
    \label{tab:safe_flow_matcher_d3il}
\end{table*}

\section{Sensitivity Analysis and Stress Testing}
\label{app:sensitivity_analysis}

\subsection{Regularization Coefficient}
\label{app:regularization_role}

Let \(\lambda_0\) denote the default regularization coefficient used in \algname. We sweep
\begin{equation}
\nonumber
\lambda_{\textup{oc}}\in\{0.0,\,0.1\lambda_0,\,0.5\lambda_0,\,\lambda_0,\,5\lambda_0,\,10\lambda_0\}
\end{equation}
on the robotic manipulation task.

The results are reported in Table~\ref{tab:reg_sweep}. When $\lambda_{\textup{oc}}$ is too small (e.g., $0.0$ or $0.1\lambda_0$), performance deteriorates substantially. For $\lambda_{\textup{oc}}\in[0.5\lambda_0,10\lambda_0]$, the safety rate remains at $1.00$, while larger values gradually make the method more conservative, resulting in longer trajectories (i.e., more total time steps to reach the goal position). This highlights the importance of the regularization term and further supports the stability of our method, since a wide range of $\lambda_{\textup{oc}}$ values achieves a perfect safety rate while maintaining reasonable performance.

\begin{table*}[htbp]
    \centering
    \ifmarked\color{blue}\fi
    \caption{\revised{Sensitivity to the regularization coefficient $\lambda_{\textup{oc}}$ of \algname\ in the robotic manipulation task. Values of the form \(a\pm b\) indicate mean \(\pm\) standard deviation.}}
    \setlength{\tabcolsep}{6pt}
    \begin{tabular}{c c c c}
        \toprule
        \textbf{$\lambda_{\textup{oc}}$} & \textbf{Safety Rate} & \textbf{Total Steps (Safe Trials)} & \textbf{Computation Time (s)} \\
        \midrule
        $0.0$ & 0.10 & 83.8 $\pm$ 0.45 & 0.63 $\pm$ 0.59 \\
        $0.1\lambda_0$ & 0.64 & 84 $\pm$ 18 & 0.202 $\pm$ 0.020 \\
        $0.5\lambda_0$ & 1.00 & 51.6 $\pm$ 2.2 & 0.192 $\pm$ 0.025 \\
        $\lambda_0$ & 1.00 & 52.5 $\pm$ 4.4 & 0.190 $\pm$ 0.023 \\
        $5\lambda_0$ & 1.00 & 57.2 $\pm$ 7.2 & 0.192 $\pm$ 0.021 \\
        $10\lambda_0$ & 1.00 & 63 $\pm$ 10 & 0.195 $\pm$ 0.023 \\
        \bottomrule
    \end{tabular}
    \label{tab:reg_sweep}
\end{table*}

\subsection{Control Activation Schedule}
\label{app:activation_schedule}

Following Remark~\ref{practical_heuristic}, we study four schedules for activating constrained optimization in the robotic manipulation task: (i) \emph{all}, where we do not skip any steps and apply the constrained optimization at every step; (ii) \emph{early}, where we skip the first 20\% of the total steps and apply the constrained optimization over the remaining 80\%; (iii) \emph{middle}, where we skip the first 50\% of the total steps and apply the constrained optimization over the remaining 50\%, which is also the default schedule used in HardFlow; and (iv) \emph{late}, where we skip the first 80\% of the total steps and apply the constrained optimization only over the final 20\%.

The results are reported in Table~\ref{tab:activation}. When the control is activated too late, the safety rate deteriorates substantially. This is because the intervention occurs too late to steer the trajectory so that the final sample satisfies the constraint. The other three schedules (\emph{all}, \emph{early}, and \emph{middle}) all achieve a perfect safety rate, demonstrating the effectiveness of the proposed method in enforcing hard constraints through trajectory optimization.

Among these three schedules, there is a trade-off between the control activation time and the optimized cost. Activating the control earlier incurs a higher computational burden, while its benefit is limited by the fact that the posterior-mean estimator $\mathcal{M}_t^\theta(x)$ is less accurate in the early stages of sampling. This is reflected in the relatively marginal improvement in the optimized cost (i.e., the total number of steps required for the robot to reach the goal).

Therefore, in practice, we recommend activating the control neither too early nor too late, but rather using an early-to-middle schedule that provides a good balance among constraint satisfaction, cost optimization, and computational efficiency.

\begin{table*}[htbp]
    \centering
    \ifmarked\color{blue}\fi
    \caption{\revised{Sensitivity to the control activation schedule of \algname\ in the robotic manipulation task. Values of the form \(a\pm b\) indicate mean \(\pm\) standard deviation.}}
    \setlength{\tabcolsep}{6pt}
    \begin{tabular}{c c c c}
        \toprule
        \textbf{Control Activation} & \textbf{Safety Rate} & \textbf{Total Steps (Safe Trials)} & \textbf{Computation Time (s)} \\
        \midrule
        All & 1.00 & 50.42 $\pm$ 0.70 & 0.333 $\pm$ 0.023 \\
        Early & 1.00 & 50.6 $\pm$ 1.1 & 0.266 $\pm$ 0.024 \\
        Middle & 1.00 & 52.5 $\pm$ 4.4 & 0.190 $\pm$ 0.023 \\
        Late & 0.84 & 56.5 $\pm$ 5.4 & 0.111 $\pm$ 0.016 \\
        \bottomrule
    \end{tabular}
    \label{tab:activation}
\end{table*}

\subsection{Stress Testing}
\label{app:disjoint_feasible_region}

To further demonstrate the robustness of our method to more complex and irregular geometries, we conduct an additional experiment on the robotic manipulation task, in which we introduce an extra quadrilateral constraint that splits the feasible set into two disconnected parts. The results are shown in Figure~\ref{fig:r4_disjoint} and Table~\ref{tab:r4_disjoint}. We observe that even in this more challenging multi-constraint setting with non-standard geometry, our method still achieves perfect safety while completing the task efficiently.

\begin{figure*}[htbp]
  \centering
  \includegraphics[width=0.36\textwidth]{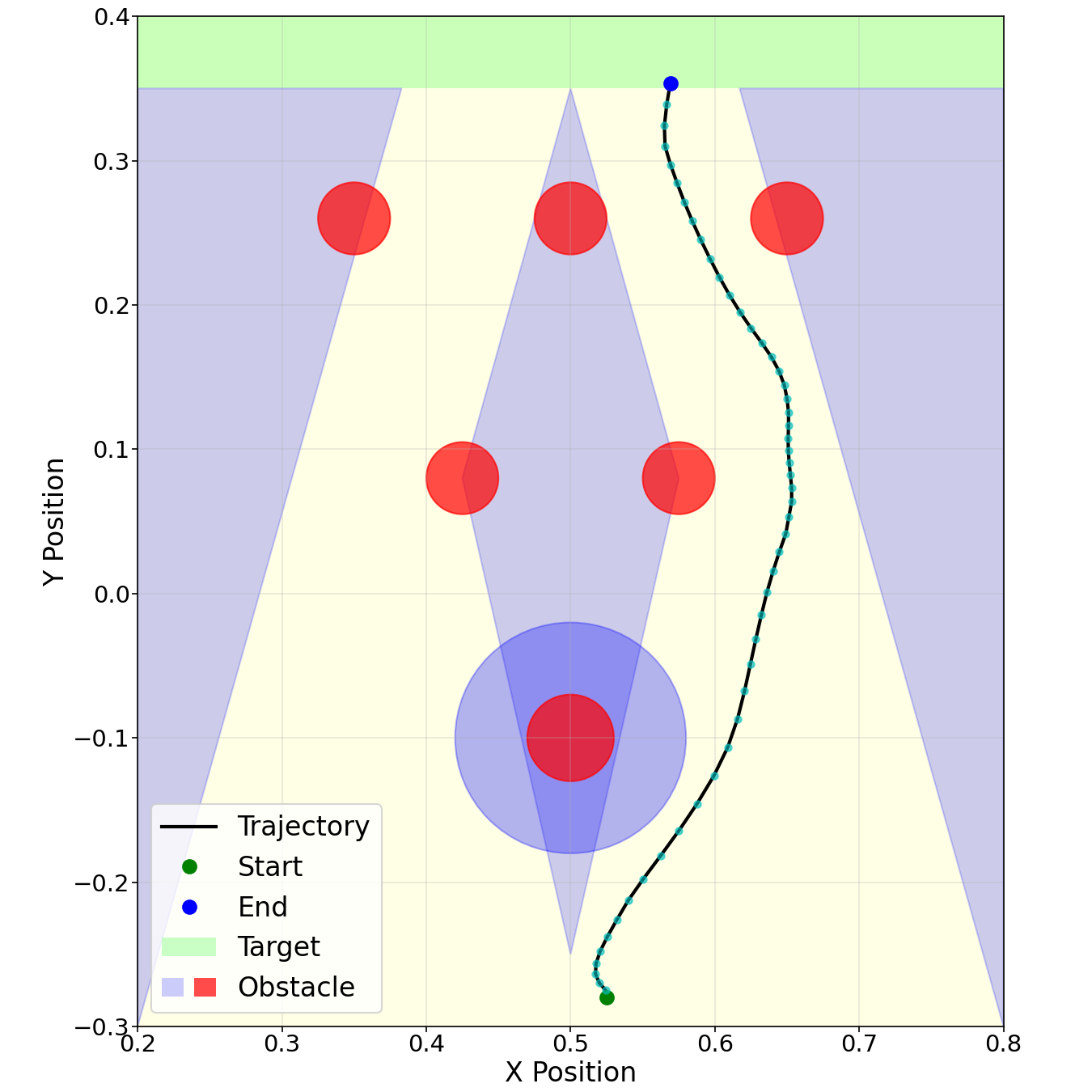}
  \hspace{2em}
  \includegraphics[width=0.36\textwidth]{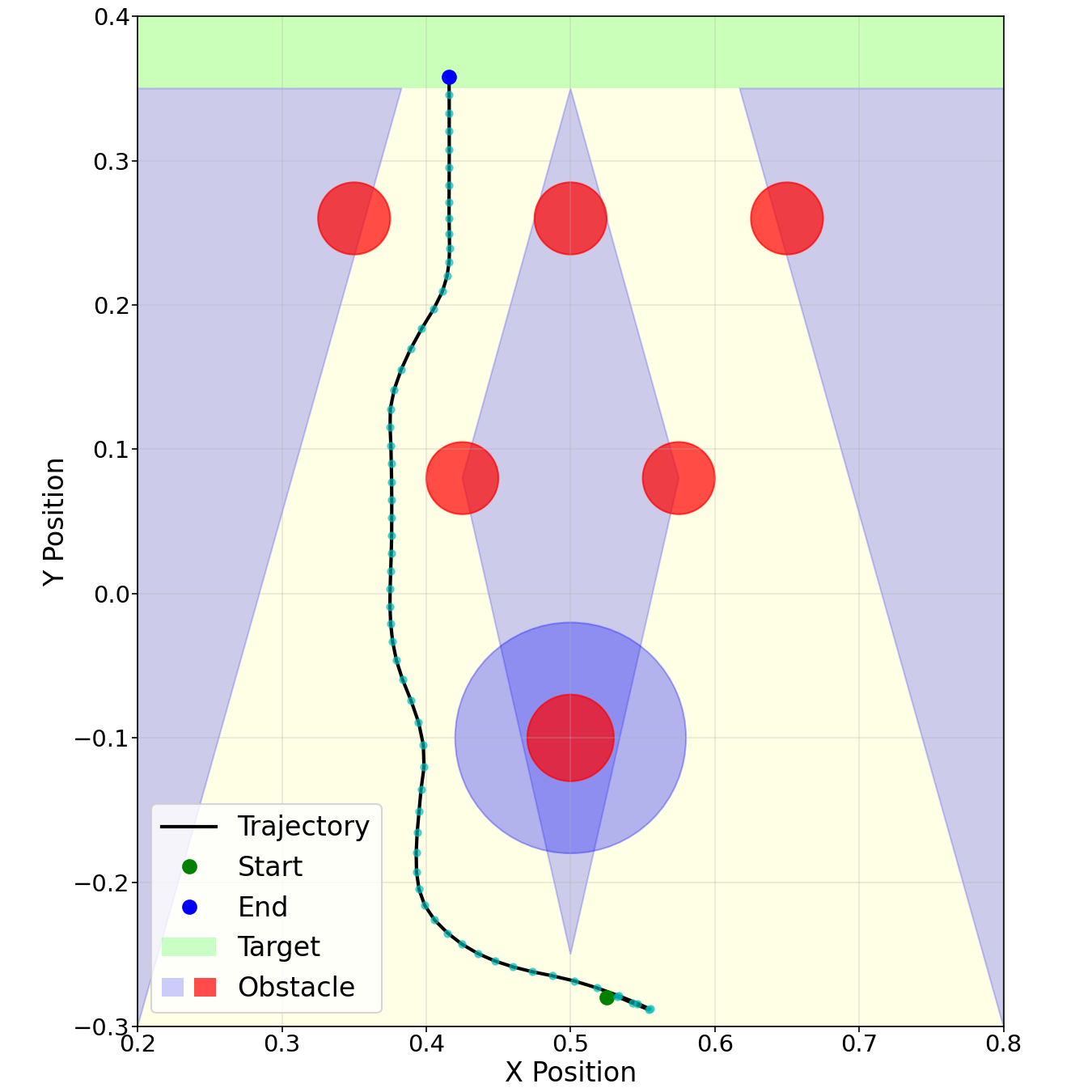}
  \caption{\revised{Visualization of HardFlow in the robotic manipulation task with a more complex feasible region. The left and right panels show two randomly sampled trajectories.}}
  \label{fig:r4_disjoint}
\end{figure*}

\begin{table*}[htbp]
    \centering
    \ifmarked\color{blue}\fi
    \caption{\revised{Performance of HardFlow in the robotic manipulation task with a more complex feasible region. Values of the form \(a\pm b\) indicate mean \(\pm\) standard deviation.}}
    \setlength{\tabcolsep}{6pt}
    \begin{tabular}{l c c c}
        \toprule
        \textbf{Method} & \textbf{Safety Rate} & \textbf{Total Steps (Safe Trials)} & \textbf{Computation Time (s)} \\
        \midrule
        HardFlow & 1.00 & 59.7 $\pm$ 9.8 & 0.31 $\pm$ 0.10 \\
        \bottomrule
    \end{tabular}
    \label{tab:r4_disjoint}
\end{table*}

\section{High-Order Solvers and Non-Uniform Time Grids}
\label{app:solver_schedule_results}

Our method is compatible with both higher-order ODE solvers and non-uniform time grids.

For any solver, let $\Psi_i^{\theta}(x)$ denote the corresponding one-step nominal transition. For the Euler solver, we have
\[
\Psi_i^{\theta}(x) = x + v_{t_i}^{\theta}(x)\Delta t_i.
\]
For the Heun solver, we have
\[
\Psi_i^{\theta}(x) = x + \frac{1}{2}\Bigl(v_{t_i}^{\theta}(x) + v_{t_{i+1}}^{\theta}(x + v_{t_i}^{\theta}(x)\Delta t_i)\Bigr)\Delta t_i,
\]
which requires two evaluations of the flow-matching model $v_t^{\theta}$. Then Problem~2 can be generalized as
\begin{equation}
\nonumber
\begin{array}{@{}l @{\quad} l@{}}
\underset{\left\{x_j\right\}_{j=0}^N, \left\{u_j\right\}_{j=0}^{N-1}}{\min} &
\begin{array}[t]{@{}l@{}}
C(x_N) + \lambda_{\textup{oc}} \sum_{j=0}^{N-1} \frac{1}{2}\left\|u_j\right\|_2^2 \Delta t_j \\
\begin{array}[t]{@{}ll@{}}
\textup{s.t.} & x_0 = \bar{x}_0, \\
              & x_{j+1}=\Psi_j^{\theta}(x_j)+u_j \Delta t_j,\\
              & j=0,1,\cdots,N-1,\\
              & h(x_N) \le 0.
\end{array}
\end{array}
\end{array}
\end{equation}
The same sequence of approximations can be applied to this formulation, leading to the following generalized version of Problem~6:
\begin{equation}
\nonumber
\left\{
\begin{array}{l}
\bar{x}_{i+1}=\Psi_i^{\theta}(x_i), \\[0.3em]
\begin{array}{@{}l @{\quad} l@{}}
\widehat{x}_N^{*}=\underset{\widehat{x}_N}{\operatorname{argmin}} &
\begin{array}[t]{@{}l@{}}
C(\widehat{x}_N) + \frac{\lambda_{\textup{oc}}}{2\Delta t_i}\left\| \mathcal{F}_{t_{i+1}}^{\theta}(\widehat{x}_N)-\bar{x}_{i+1}\right\|_2^2 \\
\begin{array}[t]{@{}ll@{}}
\textup{s.t.} & h(\widehat{x}_N) \le 0.
\end{array}
\end{array}
\end{array} \\[0.3em]
x_{i+1} = \mathcal{F}_{t_{i+1}}^{\theta}(\widehat{x}_N^{*}).
\end{array}
\right.
\end{equation}
Therefore, all results in the Theoretical Analysis section can be extended naturally to the higher-order solver setting.

Regarding non-uniform time grids, our method only assumes a sequence of discretized time steps $\left\{t_i\right\}_{i=0}^N$ and does not impose any specific structural requirement on them. Therefore, non-uniform time grids can be incorporated directly without any modification to the method or its theoretical properties.

To empirically validate the compatibility, we implement two variants of HardFlow on the image editing task: (i) HardFlow with Heun's method, and (ii) HardFlow with the linear-quadratic (LQ) time schedule proposed in \cite{polyak2024movie}. The results are reported in Table~\ref{tab:solver_and_schedule}. Both variants achieve a perfect safety rate (LPIPS below 0.06) while maintaining comparable quality (measured by CLIP score), and also reduce computational cost. These results highlight the flexibility of our method and demonstrate the potential of more advanced solvers and time grids for improving efficiency and quality.

\begin{table*}[htbp]
  \centering
  \ifmarked\color{blue}\fi
  \caption{\revised{Compatibility of HardFlow with higher-order solvers and non-uniform time grids on the image editing task. Values of the form \(a\pm b\) indicate mean \(\pm\) standard deviation.}}
  \setlength{\tabcolsep}{3pt}
  \begin{tabular}{l c c c c}
    \toprule
    \textbf{Method} & \textbf{Safety Rate} & \textbf{LPIPS} & \textbf{CLIP} & \textbf{Computation Time (s)} \\
    \midrule
	HardFlow (Heun) & 1.000 & 0.048 $\pm$ 0.003 & 0.328 $\pm$ 0.024 & 38.40 $\pm$ 0.52 \\
    HardFlow (LQ schedule) & 1.000 & 0.049 $\pm$ 0.003 & 0.291 $\pm$ 0.019 & 36.2 $\pm$ 1.1 \\
    HardFlow (default) & 1.000 & 0.048 $\pm$ 0.002 & 0.317 $\pm$ 0.022 & 51.294 $\pm$ 0.004 \\
    \bottomrule
  \end{tabular}
  \label{tab:solver_and_schedule}
\end{table*}

\end{revisedblock}

\end{document}